\numberwithin{equation}{section}
\DeclareMathOperator{\im}{Im}
\DeclareMathOperator{\id}{id}
\newcommand{\ip}[2]{\langle#1,#2\rangle}
\def \R {\mathbb{R}}
\def \OO {\mathcal{O}}
\def \a {\alpha}
\def \s {\sigma}
\def \one {{\textbf 1}}
\newcommand{\blue}{}
\newtheorem{theorem}{Theorem}[section]
\newtheorem{proposition}[theorem]{Proposition}
\newtheorem{corollary}[theorem]{Corollary}
\newtheorem{lemma}[theorem]{Lemma}
\newtheorem{definition}[theorem]{Definition}
\theoremstyle{remark}
\newtheorem{remark}[theorem]{Remark}
\newtheorem{example}[theorem]{Example}
\begin{document}

\title{The Capacity of feedforward neural networks}
\author{Pierre Baldi \and Roman Vershynin}
\date{\today}

\address{Department of Computer Science, University of California, Irvine}
\email{pfbaldi@uci.edu}

\address{Department of Mathematics, University of California, Irvine}
\email{rvershyn@uci.edu}

\thanks{Work in part supported by DARPA grant D17AP00002
and NSF grant 1839429 to P. B., and U.S. Air Force grant FA9550-18-1-0031 to R. V}

\date{\today}

\begin{abstract}
A long standing open problem in the theory of neural networks is the development of quantitative methods to estimate and compare the capabilities of different architectures. Here we define the capacity of an architecture by the binary logarithm of the number of functions it can compute, as the synaptic weights are varied. 
 {The capacity provides an upperbound on the number of bits that can be extracted from the training data and stored in the architecture during learning.} We study the capacity of layered, fully-connected, architectures of linear threshold neurons with $L$ layers of size $n_1,n_2, \ldots, n_L$ and show that in essence the capacity is given by a cubic polynomial in the layer sizes:
$C(n_1,\ldots, n_L)=\sum_{k=1}^{L-1} \min(n_1,\ldots,n_k)n_kn_{k+1}$,
{where layers that are smaller than all previous layers act as bottlenecks.}
In proving the main result, we also develop new techniques (multiplexing, enrichment, and stacking) as well as new bounds on the capacity of finite sets.
We use the main result to identify architectures with maximal or minimal capacity under a number of natural constraints. This leads to the notion of structural regularization for deep architectures. While in general, everything else being equal, shallow networks compute more functions than deep networks, the functions computed by deep networks are more regular and ``interesting''. 
\end{abstract}

\maketitle
\noindent
{\bf Keywords:} neural networks; capacity; complexity; deep learning.

\setcounter{tocdepth}{1}
\tableofcontents

\section{Introduction}

Since their early beginnings (e.g. \cite{mcculloch:43,rosenblatt1958perceptron}),
neural networks have come a significant way. {Today 
they are at the center of myriads of successful applications, spanning the gamut from games all the way to biomedicine
\cite{schmidhuber2015deep,silver2017mastering1,baldireview2018}.} In spite of these successes, the problem of quantifying the power of a neural architecture, in terms of the space of functions it can implement as its synaptic weights are varied, has remained open. {This quantification is fundamental to the science of neural networks. It is also important for applications in order to compare architectures, including the basic comparison between deep and shallow architectures, and to select the most efficient architectures. }
Furthermore, this quantification is essential for understanding the apparently unreasonable properties of deep learning and the well known paradox that deep learning architectures have a tendency to {\it not} overfit, even when the number of synaptic weights significantly exceeds the number of training examples \cite{zhang2016understanding}. 
To address these problems, in this work we introduce a notion of capacity for neural architectures and study how this capacity can be computed. {We focus primarily on architectures that are feedforward, layered, and fully connected  denoted by: $A(a_1,n_2, \ldots,n_L)$, where $n_i$ is the number of neurons in layer $i$.}

\subsection{{Functional capacity of neural networks}}
{
Ideally, one would like to be able to describe the {\em functional capacity} of a neural network architecture, i.e. completely characterize the class of functions that it can compute as its synaptic weights are varied. In the purely linear case,  such a program can easily be carried out. Indeed, let $p=\min(n_2,\ldots,n_{L-1})$. If $p \geq n_1$,
then  $A(n_1,\ldots,n_L)$ is simply the class of all linear functions from 
$\R^{n_1}$ to $\R^{n_L}$, i.e. it is equivalent to $A(n_1,n_L)$. If $p < n_1$,
then there is a rank restriction and $A(n_1,\ldots,n_L)$ is the class of all linear functions from $\R^{n_1}$ to $\R^{n_L}$ with rank less or equal to $p$, i.e. it is equivalent to $A(n_1,p,n_L)$ \cite{baldi89}. In addition, if $n_L \leq p$ then the effect of the bottleneck layer is nullified and 
$A(n_1,p,n_L)$ is equivalent to 
$A(n_1,n_L)$, i.e. the effect of the bottleneck restriction is nullified.
The exact same result is true in the case of unrestricted Boolean architectures (i.e. architectures with no restrictions on the Boolean functions being used), where the notion of rank is replaced by the fact that a Boolean layer of size $p$ can only take $2^p$ distinct values.}

{
Unfortunately, in the other and most relevant non-linear settings, such a program has proven to be difficult to carry out, except for some important, but limited, cases. 
Indeed, for a single threshold gate neuron, $A(n,1)$ corresponds to the set of linearly separable functions. Variations of this model using sigmoidal or other non-linear transfer functions can be understood similarly. Furthermore, in the case of an 
$A(n_1,n_2)$ architecture, for a given input, the output of each neuron is independent of the weights, or the outputs, of the other neurons. Thus the functional capacity of 
$A(n_1,n_2)$ can be described in terms of $n_2$ independent $A(n_1,1)$ components.
When a single hidden layer is introduced, the main known results are those of universal approximation properties. In the Boolean case, using linear threshold gates, 
and noting that these gates can easily implement the standard AND, OR, and NOT Boolean operations, it is easy to see using conjunctive or disjunctive normal form that 
$A(n_1,2^{n_1},1)$ can implement any Boolean function of $n_1$ variables, and thus
$A(n_1,2^{n_1},m)$ can implement any Boolean map from $\{0,1\}^{n_1}$ to $\{0,1\}^{m}$.
It is also known that in the case of Boolean unrestricted autoencoders, the corresponding architectures $A(n_1,n_2,n_1)$ implement clustering \cite{baldiboolean12,baldijmlr12}.
In the continuous case, there are various universal approximation theorems 
\cite{hornik1989,cybenko1989approximation}
showing, for instance, that continuous functions defined over compact sets can be approximated to arbitrary degrees of precision by architectures of the form $A(n_1,\infty,m)$, where we use ``$\infty$'' to denote the fact that the hidden layer may be arbitrary large. Beyond these results, very little is known about the functional capacity of 
$A(n_1, \ldots, n_L)$. }
 
\subsection{{Cardinal capacity of neural networks}}
{
In order to make progress on the capacity issue, here we define a simpler notion of capacity, the cardinal capacity. The {\em cardinal capacity} $C(A)$ of a finite class $A$ of functions is simply the logarithms base two of the number of functions contained in $A$ (Figure \ref{fig:Framework}): $C(A) =\log_2 \vert A \vert$. The cardinal capacity can thus be viewed as the number of bits required to specify, or communicate, an element of $A$, in the worst case of a uniform distribution over $A$. We are particularly interested in computing the cardinal capacity of feedforward architectures $A(n_1,\ldots,n_L)$ of linear threshold functions. In continuous settings, the cardinal capacity can be defined in a similar way in a measure theoretic sense by taking the logarithm of the volume associated with $A$. In the rest of the paper, in the absence of any qualification, the term capacity is used to mean cardinal capacity.}

{
While in general the notion of cardinal capacity is simpler and less informative than the notion of functional capacity, in the case of neural architectures the cardinal capacity has a very important interpretation. Namely, it provides an upperbound on the number of bits that can be stored in the architecture during learning. Indeed, the learning process can be viewed as a communication process over the learning channel \cite{baldi2016local}, whereby information is transferred from the training data to the synaptic weights. Thus the learning process is a process for selecting and storing an element of $A$, which corresponds to $C(A)$ bits of information. Any increase in the precision of the synaptic weights that does not change the input-output function is not visible from the outside.}

\begin{figure}[ht!]
  \centering
  \includegraphics[width=.60\textwidth]{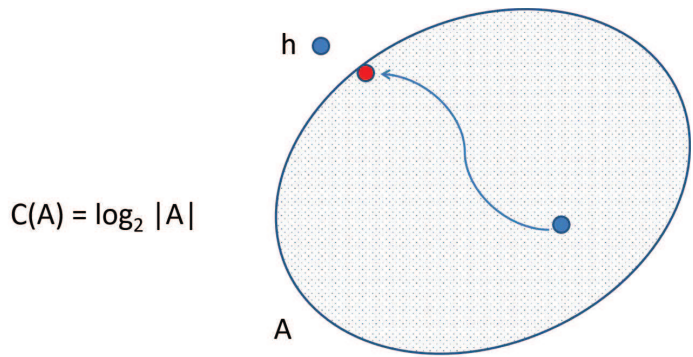}
\caption{{Learning framework where $h$ is the function to be learnt and $A$ is the available class of hypothesis or approximating functions. The cardinal capacity is the logarithm base two of the number, or volume, of the functions contained in $A$.}}  
\label{fig:Framework}
\end{figure}

The bulk of this paper focuses on estimating the capacity of arbitrary feedforward, layered and fully-connected, architectures of any depth which are widely used in many applications. {As a side note, the capacity of fully connected recurrent networks is studied in \cite{baldi2018neuronal}.}
In the process, several techniques and theorems of self-standing interest are developed. In addition, the extremal properties of the capacity of such architectures is analyzed, contrasting 
the capacity of shallow versus deep architectures, and leading to the notion of structural regularization. Structural regularization provides 
a partial explanation for why deep neural networks have a tendency to avoid 
overfitting.

\subsection{Main result of the paper: the capacity formula}
The main result of this paper, Theorem~\ref{thm: main}, provides an estimate of the capacity of a general feedforward, layered, fully connected neural network of linear threshold gates. Suppose that such network has 
$L$ layers with $n_k$ neurons in layer $k$, where $k=1$ corresponds to the input layer and $n_L$ correspond to the output layer.
We show that, under some very mild assumptions on the sizes of the layers, 
the capacity $C(n_1,\ldots,n_L)$ of this network, defined as the binary logarithm 
of the total number of functions $f : \{0,1\}^{n_1} \to \{0,1\}^{n_L}$ it can compute, satisfies 
\begin{equation}	\label{eq: capacity intro}
C(n_1,\ldots,n_L) \asymp \sum_{k=1}^{L-1}\min(n_1,\ldots,n_k)n_k n_{k+1}.
\end{equation}

Here the notation $a \asymp b$ means that there exists two positive absolute constants
$c_1, c_2$ such that $c_1 b \le a \le c_2 b$. Actually, we will show that the upper bound 
in the capacity formula \eqref{eq: capacity intro} holds with constant $c_2 = 1$. The absolute constant 
$c_1 \in (0,1)$ hidden in the lower bound may not depend on anything, 
in particular it is independent of the depth $L$ of the network, or the widths $n_k$ of the layers. 
The formula \eqref{eq: capacity intro} thus shows that the capacity of such a network is essentially given by a 
cubic polynomial in the sizes of the layers, 
where the bottleneck layers play a special role.

\subsection{Capacity of sets}
In the process of proving the main capacity formula \eqref{eq: capacity intro}
we establish some other stand alone results of independent value. 
At the heart of our analysis are new lower bounds on the {\em capacity of sets}. 
We define the capacity $C(S)$ of a set $S \subset \R^n$ as the binary logarithm
of the number of all the linear threshold functions $f: S \to \{0,1\}$. 
In other words, $C(S)$ measures the capacity of a single neuron when the inputs are restricted to $S$. 
Equivalently, $C(S)$ is the binary logarithm of the number of all possible ways $S$ can be separated by affine hyperplanes. 
We prove that for any subset $S$ of the Boolean cube $\{0,1\}^n$, the capacity satisfies
$$ 
\frac{1}{16} \log_2^2 \vert S\vert \leq C(S) 
\leq 1 + n \log_2 \left ( \frac{e\vert S \vert}{n} \right ).
$$
The upper bound was previously known, it holds for any subset $S \subset \R^n$, 
and it can be replaced by the simpler form $n \log_2 \vert S\vert$ when $n \geq 4$. 
The lower bound is a new contribution, and it
improves over the previously known (and easy) lower bound of 
$1+ \log_2 \vert S\vert$, which is also true for any set $S \subset \R^n$, as soon as $\vert S\vert> 2^{17}$.
\medskip

In the next section, we provide mathematical definitions of feedforward
neural networks and their capacities and describe several known results. 
A reader familiar with neural network theory may glance through it and rapidly go to Section~\ref{s: new results}, which provides a 
description of the new results and provides a roadmap for the paper.

\section{Neural architectures and their capacities}
\label{s: notation}

\subsection{Threshold functions and maps}

Throughout this paper, the $n$-dimensional {\em Boolean cube} is denoted by:
$$
H^n =\{ 0,1 \}^n.
$$
The {\em Heaviside function} $h: \R \to \{0,1\}$ is defined by:
$$
h(t) =
\begin{cases}
  1, & t \ge 0 \\
  0, & t < 0.
\end{cases}
$$
We have chosen the $\{0,1\}$ formalism for convenience.
It can easily be replaced with the $\{-1,1\}$ formalism, using
the Boolean cube $\{-1,1\}^n$ and replacing the Heaviside function $h$ 
by the sign function. 

\begin{definition}[Threshold functions]
  Consider a set $S \subset \R^n$.
  A function $f : S \to \{0,1\}$ is called a {\em (linear) threshold function} on $S$
 if there exist $a \in \R^n$ and $\alpha \in \R$ such that $f$ can be expressed as:
  $$
  f(x) 
  = h \big( \ip{a}{x} + \alpha \big), \quad x \in S.
  $$
  The set of all threshold functions on $S$ is denoted by $T(S,n,1)$ 
  and is often abbreviated to $T(S)$. 
\end{definition}

%
The notion of threshold functions generalizes naturally to the multivariate setting. 

\begin{definition}[Threshold maps]
  Consider a set $S \subset \R^n$.
  A function $f = (f_1,\ldots, f_m) : S \to H^m$ is called a {\em threshold map} 
  if all components $f_i$ are threshold functions on $S$.
  The set of all threshold maps is denoted $T(S,n,m)$;
  in the particular case where $S = H^n$, we abbreviate it to $T(n,m)$.
\end{definition}

\subsection{Neural architectures}

A neural architecture (or network) is represented by a weighted directed graph, 
where the nodes represent neurons and the weights represent synaptic connection strengths. Neurons have numerical states and operate by taking the weighted average 
of the corresponding parent states and applying a transfer function to this weighted average. This paper focuses on one of the most widely used class of architectures, 
namely {\em layered feedforward neural architectures}, where neurons are arranged into layers and connections run from one layer to the next. We will denote an architecture
with $L$ layers numbered from 1 to $L$, and $n_k$ neurons in each layer $k$, by:
$$
A(n_1,n_2,\ldots, n_L).
$$
The architecture has $n_1$ input neurons and $n_{L}$ output neurons. 
To simplify the analysis, we make two assumptions:

\begin{enumerate}[\quad (a)]
  \item full connectivity between the layers, i.e. we assume that each neuron in layer $k$ is connected to every neuron in layer $k+1$, but not to any other neurons;
      \item the transfer function is the Heaviside threshold function.  
\end{enumerate}
These two assumptions are not absolutely essential, and we discuss how to relax them in the conclusion.

Under these assumptions, the input-output function computed by a layered feedforward neural architecture with a fixed set of weights is a composition of threshold maps. Indeed, the architecture $A(n_1,n_2,\ldots, n_L)$ computes an input-output function of the form 
\begin{equation}	\label{eq: composition}
f = f_{L-1} \circ \cdots \circ f_2 \circ f_1
\end{equation}
where each $f_k: \R^{n_k} \to H^{n_{k+1}}$ is a threshold map.
Generally, a network architecture $A(n_1,\ldots, n_L)$
is able to compute infinitely many functions
$f: \R^{n_1} \to H^{n_L}$, as the synaptic weights and biases (threshold values) are varied. However, if the inputs are restricted to a given finite set $S \subset \R^{n_1}$, 
then the number functions $f: S \to H^{n_L}$ computable by the architecture becomes finite. 
We denote this class of functions by
$$
T(S, n_1,\ldots,n_L).
$$
In the most important case where $S = H^{n_1}$ is the Boolean cube, 
we drop the set $S$ from the notation. Thus, 
$$
T(n_1,\ldots,n_L)
$$
denotes the class of functions $f : H^{n_1} \to H^{n_L}$ 
computable by the architecture $A(n_1,\ldots, n_L)$;
it consists of all functions that can be expressed as in \eqref{eq: composition}
for some set of threshold maps $f_k \in T(n_k, n_{k+1})$ ($k=1, \ldots, L-1$).

\subsection{Definition of capacity}		\label{s: def capacity}

The main question we address in this paper is: how many different functions can a given neural architecture compute? This leads us to the notion of capacity, which we define as follows:

\begin{definition}[Capacity of a neural architecture]
  The capacity of a neural architecture $A(n_1,n_2,\ldots, n_L)$ is the binary logarithm of 
  the number of different functions $f : H^{n_1} \to H^{n_L}$ it can compute, i.e.
  $$
  C(n_1,\ldots,n_L) = \log_2 \vert T(n_1, \ldots, n_L) \vert.
  $$
  More generally, the capacity of a neural architecture on a given finite set $S \subset \R^{n_1}$ is 
  $$
  C(S,n_1,\ldots,n_L) = \log_2 \vert T(S, n_1, \ldots, n_L) \vert.
  $$
\end{definition}

The capacity of an architecture can be interpreted as the number of bits required to specify a function computable by the architecture. Remarkably, this can also be viewed as an {\it upper bound on the number of bits that can stored in the architecture during learning}, or equivalently an upper bound on the number of bits that can be extracted from the training data. If the capacity of a network is $C$ and the number of connections (weights) is $W$, then at most $C/W$ bits can be stored on average per synaptic weight. This bound is independent, and more fundamental, than any hardware limitation on the precision of the synaptic weights. It can be viewed as a bound on the effective capacity of the deep learning channel \cite{baldi2016local}. The bound holds even if the weights have infinite precision and thus in principle contain an infinite amount of information. This is because in the current framework different sets of weights that implement the same overall input-output function are indistinguishable from the outside world.

\medskip

Getting optimal bounds on the capacity can be non-trivial even for simple network architectures.
Consider, for example, a {\em single-neuron network} $A(n,1)$ with $n$ inputs, which thus can implement any threshold function on $\R^n$. The capacity $T(S,n,1)$ of this network on a given set of inputs $S \subset \R^n$ is the logarithm of the number of all distinct threshold functions that can be defined on $S$. We call this quantity the capacity of $S$. 

\begin{definition}[Capacity of a set]
  The capacity of a set $S \subset \R^n$ is the binary logarithm 
  of the number of all threshold functions on $S$, i.e. 
  $$
  C(S) = \log_2 |T(S)|. 
  $$
  Equivalently, $C(S)$ is the binary logarithm of the total number of 
  ways the set $S$ can be partitioned by affine hyperplanes in $\R^n$ (taking into account the binary assignment associated with each partition). 
\end{definition}

A significant part of this paper is devoted to studying the capacity of sets, in particular to 
deriving optimal estimates for $C(S)$ in terms of the cardinality of $S$.

\subsection{Basic properties of capacity}

Here we summarize a few elementary properties of the capacity of neural architectures.

\begin{lemma}[Basic properties of capacity] 		\label{lem: capacity basic}
  \quad
  \begin{enumerate}[\quad 1. ]
    \item \label{pr: affine invariance}
      (Affine invariance) For any invertible affine transformation $F: \R^{n_1} \to \R^{n_1}$,
      we have:
      $$
      C \big( F(S),n_1,n_2,\ldots,n_L \big) = C(S,n_1,n_2,\ldots,n_L).
      $$
    \item \label{pr: monotonicity}
      (Monotonicity) If $n_k \le m_k$ for all $k$, then: 
      $$
      C(n_1,\ldots,n_L) \le C(m_1,\ldots,m_L).
      $$
    \item \label{pr: sub-additivity}
      (Sub-additivity) For any $1 < k < L-1$, we have:
      $$
      C(n_1,\ldots,n_L) \le C(n_1,\ldots,n_k) + C(n_{k+1}, \ldots, n_{L}).
      $$
    \item \label{pr: contractivity}
      (Contractivity)
      Capacity may only increase if a layer is duplicated. For example: 
      $$
      C(n,m,p) \le C(n, m, m, m, p).
      $$
    \item \label{pr: two layers}
      For any set $S \subset \R^n$: 
      $$
      C(S,n,m) = C(S)m.
      $$
    \item \label{pr: replace by image}
      For any set $S \subset \R^{n_1}$ and a threhsold map $f \in T(S,n_1,n_2)$, we have:
      $$
      C \big( f(S), n_2,n_3, \ldots, n_L \big) \le C(S, n_1,n_2, \ldots, n_L).
      $$
  \end{enumerate}
\end{lemma}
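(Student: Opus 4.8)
The plan is to treat all six assertions uniformly: each is a soft structural fact obtained by writing down an explicit map between the relevant classes of computable functions and counting. The only properties of threshold maps I will invoke are elementary: a threshold function of $x$ stays a threshold function after an affine change of variables $x\mapsto Ax+b$; every coordinate projection $x\mapsto x_i$ is a threshold function on $H^n$, so $\id_{H^n}\in T(n,n)$; and every constant $\{0,1\}$-valued function is a threshold function. Granting these, part 5 is immediate: a threshold map $S\to H^m$ is by definition an arbitrary $m$-tuple of threshold functions on $S$, distinct tuples give distinct maps, so $|T(S,n,m)|=|T(S)|^m$ and the identity follows by taking $\log_2$. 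Part 4 (and its general version, that duplicating any layer cannot decrease capacity) follows because inserting a second copy of layer $k$ joined to the first by $\id_{H^{n_k}}$ computes exactly the same input--output function; this embeds $T(n_1,\ldots,n_L)$ injectively into the function class of the enlarged architecture.

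For part 1, with $F$ invertible affine, I claim $g\mapsto g\circ F$ is a bijection from $T(F(S),n_1,\ldots,n_L)$ onto $T(S,n_1,\ldots,n_L)$ with inverse $\tilde g\mapsto \tilde g\circ F^{-1}$: if $g=g_{L-1}\circ\cdots\circ g_1$ is a legal realization over $F(S)$, then $g\circ F=g_{L-1}\circ\cdots\circ g_2\circ(g_1\circ F)$ and $g_1\circ F$ is still a threshold map by the change-of-variables remark, so $g\circ F$ is realizable over $S$; the reverse direction uses the same remark for $F^{-1}$. For part 2, given $f=f_{L-1}\circ\cdots\circ f_1\in T(n_1,\ldots,n_L)$, define $\tilde f_k:\R^{m_k}\to H^{m_{k+1}}$ by applying $f_k$ to the first $n_k$ input coordinates and outputting $0$ on the remaining $m_{k+1}-n_{k+1}$ coordinates. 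Each $\tilde f_k$ is a threshold map (the added input coordinates carry zero weight, the added output coordinates are constants), and $\tilde f:=\tilde f_{L-1}\circ\cdots\circ\tilde f_1$ satisfies $\tilde f(x)=(f(x_1,\ldots,x_{n_1}),0,\ldots,0)$; hence $f\mapsto\tilde f$ is an injection $T(n_1,\ldots,n_L)\hookrightarrow T(m_1,\ldots,m_L)$, which is monotonicity.

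Part 6 is again a composition trick, but with one genuinely delicate point. Given the fixed $f\in T(S,n_1,n_2)$ and any $g=g_{L-1}\circ\cdots\circ g_2\in T(f(S),n_2,\ldots,n_L)$, the function $g\circ f$ lies in $T(S,n_1,\ldots,n_L)$ since $f$ is a legal first-layer map on $S$. It remains to see that $g\mapsto g\circ f$ is injective, and here one might fear that a non-injective $f$ collapses the correspondence. It does not: if $g\neq g'$ on $f(S)$ then they differ at some point of the form $f(x)$ with $x\in S$, whence $g\circ f\neq g'\circ f$. This works precisely because $g$ and $g'$ are only ever compared on $f(S)$, so their difference set automatically lies in the range of $f$. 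The inequality $C(f(S),n_2,\ldots,n_L)\le C(S,n_1,\ldots,n_L)$ follows.

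Finally, part 3 is a cut-and-count argument: write a computable $f$ as $f_{L-1}\circ\cdots\circ f_1$, cut the network at layer $k$, observe that the front portion $h:=f_{k-1}\circ\cdots\circ f_1$ belongs to $T(n_1,\ldots,n_k)$ and the back portion belongs to the function class of the remaining sub-architecture, note that $f$ is determined by the pair $(\text{front},\text{back})$, and conclude that $|T(n_1,\ldots,n_L)|$ is at most the product of the two counts; take $\log_2$. The one thing to watch is the single threshold map between layers $k$ and $k+1$, which must be assigned to one of the two sub-networks in the bookkeeping. I expect this index bookkeeping in part 3, together with the injectivity check in part 6, to be the only steps that are not completely mechanical; the rest is writing down the maps above and unwinding the definitions of $T(\cdot)$ and $C(\cdot)$.
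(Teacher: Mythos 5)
Parts 1, 2, 4, 5 and 6 of your proposal are correct, and since the paper leaves this lemma as an exercise there is nothing to compare against: your arguments (affine change of variables, zero-padding of layers, insertion of identity maps, the identity $|T(S,n,m)|=|T(S)|^m$, and the injectivity of $g\mapsto g\circ f$ because two distinct $g$'s differ at a point of $f(S)$) are exactly the intended elementary proofs.

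Part 3 is where there is a genuine gap, and it is precisely the point you dismiss as ``index bookkeeping.'' Cutting $f=f_{L-1}\circ\cdots\circ f_1$ at layer $k$ leaves the middle map $f_k\in T(n_k,n_{k+1})$ unaccounted for, and assigning it to either side changes the sub-architecture: you obtain the pair decomposition $f=(\text{back})\circ(\text{front})$ with front in $T(n_1,\ldots,n_k)$ and back in $T(n_k,n_{k+1},\ldots,n_L)$ (or front in $T(n_1,\ldots,n_{k+1})$ and back in $T(n_{k+1},\ldots,n_L)$), hence the \emph{overlapping-layer} inequality $C(n_1,\ldots,n_L)\le C(n_1,\ldots,n_k)+C(n_k,\ldots,n_L)$, not the disjoint-split inequality printed in the lemma. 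No bookkeeping can repair this, because the printed statement is actually false in general: it omits the contribution $\min(n_1,\ldots,n_k)\,n_k n_{k+1}$ of the severed connection, and the paper's own Theorem~\ref{thm: three layers} makes this quantitative. For instance, take $L=4$, $k=2$ and layer sizes $(n,n^4,n^2,1)$: Theorem~\ref{thm: three layers} gives $C(n,n^4,n^2,1)\gtrsim n\cdot n^4\cdot n^2=n^7$, while $C(n,n^4)+C(n^2,1)\le n^2\cdot n^4+n^4$ by Theorem~\ref{thm: Zuev} and part 5, so the claimed bound fails for all large $n$. The version of sub-additivity the paper actually uses later (e.g. in the induction proving Proposition~\ref{prop: multi-channel upper}, where the top connection's capacity is added separately) is the overlapping one, so you should prove that corrected statement --- which your cut-and-count argument does --- and note explicitly that the middle map contributes an extra factor $|T(n_k,n_{k+1})|$ if one insists on splitting the layers disjointly.
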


The proofs are elementary and left as an exercise.

\subsection{Known bounds on capacity of sets}
First, as a useful reminder, the following theorem about partitions of $\R^n$ by hyperplanes is well known (e.g. \cite{winder1966partitions}) and straightforward to prove by recurrence.
\begin{theorem}
\label{thm: hyperplane partition}
The number $K(m,n)$ of connected regions created by $m$ hyperplanes in $\R^n$ (passing through the origin) satisfies:
$$ K(m,n) \leq 2 \sum_{k=0}^{n-1} {m-1 \choose k} 
\label{eq:hyperplanes} $$
and the number $L(m,n)$ of connected regions created by $m$ affine hyperplanes in $\R^n$ satisfies:
$$ L(m,n) \leq \sum_{k=0}^{n} {m \choose k}.
\label{eq:affinehyperplanes} $$
In both cases, equality is achieved if the hyperplanes are in general position.
\end{theorem}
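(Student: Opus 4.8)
The plan is to prove both bounds by the classical incremental ``sweep'' argument: add the hyperplanes one at a time and count how many new regions each addition can create.

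\emph{Affine case.} Let $L(m,n)$ be the maximum number of regions. The base cases are immediate, $L(0,n)=1$ and $L(m,0)=1$. For the inductive step, take an arrangement of $m-1$ affine hyperplanes and adjoin an $m$-th hyperplane $H$. A region of the old arrangement is split into two precisely when $H$ meets its interior, and — by convexity of the old regions and affineness of $H$ — the number of old regions so split equals the number of pieces into which $H\cong\R^{n-1}$ is subdivided by its intersections with the previous $m-1$ hyperplanes. Those traces are at most $m-1$ affine hyperplanes inside $H$, so $H$ is cut into at most $L(m-1,n-1)$ pieces, giving $L(m,n)\le L(m-1,n)+L(m-1,n-1)$. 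Finally, Pascal's rule $\binom{m}{k}=\binom{m-1}{k}+\binom{m-1}{k-1}$ shows that $\sum_{k=0}^{n}\binom{m}{k}$ satisfies the same recurrence with the same base values, so $L(m,n)\le\sum_{k=0}^{n}\binom{m}{k}$.

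\emph{Central case.} The same sweep applies, with $K(m,n)$ the maximum number of regions cut by $m$ hyperplanes through the origin. Here the base cases are $K(0,n)=1$ and $K(m,1)=2$ for $m\ge 1$ (all hyperplanes through $0$ in $\R^1$ coincide with $\{0\}$). Adjoining an $m$-th central hyperplane $H$, its intersections with the earlier $m-1$ central hyperplanes are central hyperplanes inside $H\cong\R^{n-1}$, so $H$ is split into at most $K(m-1,n-1)$ pieces and $K(m,n)\le K(m-1,n)+K(m-1,n-1)$. One then checks, again via Pascal's rule, that $2\sum_{k=0}^{n-1}\binom{m-1}{k}$ obeys this recurrence together with the stated base values for $m\ge 1$, $n\ge 1$, yielding the claimed bound.

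\emph{Equality and the main obstacle.} For the equality statements one must verify that, when the $m$ hyperplanes are in general position, the trace arrangement induced on each newly added hyperplane $H$ is again in general position inside $H$ (a routine dimension count), so the induction hypothesis applies with equality, and that every piece of the subdivision of $H$ really corresponds to a distinct new full-dimensional region. This genericity bookkeeping is the only slightly delicate point; since only the upper bounds are needed in the sequel, it can be treated briefly or cited, while the two recurrences themselves are entirely routine.
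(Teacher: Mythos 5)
Your proof is correct and is exactly the standard recurrence argument the paper has in mind: the paper does not prove this theorem itself but cites it as well known (Winder) and ``straightforward to prove by recurrence,'' and the recurrences $L(m,n)\le L(m-1,n)+L(m-1,n-1)$ and $K(m,n)\le K(m-1,n)+K(m-1,n-1)$ you derive, together with Pascal's rule, are precisely that recurrence proof. Your treatment of the equality case is only sketched, but since the paper likewise omits it and only the upper bounds are used later, this matches the paper's level of detail.
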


One of the most basic questions addressed in this paper revolves around the best upper and lower bounds on the capacity $C(S)$ in terms of  the
cardinality of $S$. The following upper bound is known:
%
%

\begin{lemma}[Capacity of sets: upper bound]	\label{lem: capacity elementary upper}
  The number of threshold functions on a given set $S \subset \R^n$ is bounded
  by
  $$
  2 \sum_{k=0}^n \binom{|S|-1}{k}.
  $$
  In particular, if $n \ge 4$ then: 
  $$
  C(S) \le 1 + n \log_2 \Big( \frac{e|S|}{n} \Big)
  \le n \log_2 |S|.
  $$
\end{lemma}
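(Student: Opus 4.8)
The plan is to pass to the ``weight space'' and count the regions of a central hyperplane arrangement, then invoke Theorem~\ref{thm: hyperplane partition}. Write a threshold function on $S$ as $f_w(x) = h(\ip{a}{x}+\alpha)$ with $w = (a,\alpha) \in \R^{n+1}$, and for each $x \in S$ introduce the hyperplane $P_x = \{w = (a,\alpha) \in \R^{n+1} : \ip{a}{x}+\alpha = 0\}$; note each $P_x$ passes through the origin. On any connected component (open region) of $\R^{n+1} \setminus \bigcup_{x\in S} P_x$ the sign of $\ip{a}{x}+\alpha$ is constant for every fixed $x \in S$, hence $f_w$ is constant on that region. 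Consequently the number of threshold functions on $S$ is at most the number of regions of this arrangement of $|S|$ central hyperplanes in $\R^{n+1}$, which by Theorem~\ref{thm: hyperplane partition} is at most $2\sum_{k=0}^{n}\binom{|S|-1}{k}$ --- provided one can show that \emph{every} threshold function on $S$ is realized in the interior of some region.

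That last point is the one delicate step, and it is exactly where the ``$\ge$'' convention in the Heaviside function has to be dealt with: a weight vector lying on one or more of the $P_x$ could a priori produce a function not obtained from any open region. The way I would rule this out is to perturb only the bias. Given $w = (a,\alpha)$, set $w_t = (a,\alpha+t)$. For every $x \in S$, if $\ip{a}{x}+\alpha \ne 0$ then $\ip{a}{x}+\alpha+t$ has the same sign for all sufficiently small $t$, while if $\ip{a}{x}+\alpha = 0$ then $\ip{a}{x}+\alpha+t = t > 0$, which still gives $h = 1$; moreover for all small enough $t>0$ the vector $w_t$ lies on none of the $P_x$. Hence $f_{w_t} = f_w$ with $w_t$ in an open region, as needed. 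This gives the first displayed bound; this ``push-off-the-hyperplane'' argument, rather than the region count, is the real content I expect to be the main obstacle, everything else being Theorem~\ref{thm: hyperplane partition} plus a routine binomial estimate.

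For the ``in particular'' statement, when $|S| \le n$ the bound $|T(S)| \le 2^{|S|}$ is immediate, so assume $|S| \ge n+1$ (and $|S| \ge 2$) and apply the standard estimate $\sum_{k=0}^{n}\binom{N}{k} \le (eN/n)^n$ with $N = |S|-1$ to obtain $|T(S)| \le 2\big(e|S|/n\big)^{n}$, i.e. $C(S) \le 1 + n\log_2(e|S|/n)$. The remaining inequality $1 + n\log_2(e|S|/n) \le n\log_2|S|$ reduces to $1 + n\log_2 e \le n\log_2 n$, whose left-hand side minus right-hand side is decreasing in $n$ and already nonpositive at $n = 4$; this is the only place the hypothesis $n \ge 4$ enters.
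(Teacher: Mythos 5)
Your argument for the first display is essentially the paper's own: the paper derives it from Theorem~\ref{thm: hyperplane partition} by counting the regions of the central arrangement of the $|S|$ hyperplanes in weight--bias space $\R^{n+1}$, exactly your $P_x$'s, giving $K(|S|,n+1)\le 2\sum_{k=0}^{n}\binom{|S|-1}{k}$. Your bias-perturbation step (replacing $(a,\alpha)$ by $(a,\alpha+t)$ for small $t>0$, which preserves the function because $h(0)=1$ and pushes the weight vector off all $P_x$) is a correct treatment of the boundary issue that the paper leaves implicit by citing the literature, and your derivation of $C(S)\le 1+n\log_2(e|S|/n)$ for $|S|\ge n+1$ via $\sum_{k=0}^{n}\binom{N}{k}\le (eN/n)^n$, together with the reduction of the final inequality to $1+n\log_2 e\le n\log_2 n$ for $n\ge 4$, matches the paper's proof.

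The one step that fails is your dismissal of the case $|S|\le n$. The trivial bound $|T(S)|\le 2^{|S|}$ does not yield $C(S)\le 1+n\log_2(e|S|/n)$, and in fact that inequality is false in part of this regime, so no patch is possible: for $|S|=1$ one has $C(S)=1$ while $1+n\log_2(e/n)<1$ for $n\ge 4$; more substantially, if $S$ consists of $\lfloor n/2\rfloor$ affinely independent points (say $e_1,\dots,e_{10}$ in $\R^{20}$), then every dichotomy of $S$ is realized by an affine threshold function, so $C(S)=|S|=10$, whereas $1+20\log_2(e/2)\approx 9.85$. The paper's proof (like the textbook source it cites) simply applies the binomial estimate with $N=|S|-1\ge n$, i.e.\ the ``in particular'' statement is tacitly read under the proviso $|S|>n$; you should state that proviso explicitly rather than claim the small-$|S|$ case is immediate. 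With that correction your proof coincides with the paper's.
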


\begin{proof}
The first part of the lemma is presented in \cite[Section~4.2]{anthony2001discrete} and  
follows immediately from the first part of Theorem~\ref{thm: hyperplane partition}
by considering the number of regions into which $\R^n$ 
can be partitioned by the arrangement of $\vert S \vert$ hyperplanes of the form $x^\perp$, $x \in S$.
The second part of the Lemma can then be deduced from the first using 
the elementary bound on the binomial sums: 
$$
\sum_{k=0}^n \binom{N}{k} \le \Big( \frac{eN}{n} \Big)^n,
$$
which is valid for all integers $1 \le n \le N$, see e.g. \cite[Exercise~0.0.5]{vershynin2018high}.
The last part follows easily for $n \ge 4$.
\end{proof}

\begin{remark}[Tightness]				\label{rem: capacity on a set upper tightness}
  Lemma~\ref{lem: capacity elementary upper}
  gives the best possible upper bound on the capacity of a set $S \subset H^n$ 
  in terms of the cardinality of $S$. 
  In Section~\ref{s: enrichment}, we describe an enrichment method 
  that for a given $k \le n$ transforms the cube $H^k$ into a subset 
  $S \subset H^n$ of cardinality $|S|=2^k$ for which: 
  $$
  C(S) \asymp nk = n \log_2|S|.
  $$
   This shows that the bound in Lemma~\ref{lem: capacity elementary upper} is optimal 
   for almost any magnitude of the cardinality $|S|$. 
\end{remark}

\begin{lemma}[Capacity of sets: lower bound]		\label{lem: capacity elementary lower}
  For any finite set $S \subset \R^n$, there exists at least $2|S|$ threshold functions on $S$. 
  In particular:
  $$
  C(S) \ge \log_2|S| + 1.
  $$
\end{lemma}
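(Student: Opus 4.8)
The plan is to produce $2\lvert S\rvert$ distinct threshold functions on $S$ explicitly, by sweeping a generic affine hyperplane across $S$ in two opposite directions, and then to read off the capacity bound. Write $N=\lvert S\rvert$ (we may assume $N\ge 1$, the case $N=0$ being vacuous) and list $S=\{x_1,\dots,x_N\}$. First I would fix a direction $a\in\R^n$ along which the numbers $\langle a,x_i\rangle$, $i=1,\dots,N$, are pairwise distinct; such $a$ exists because the ``bad'' set $\bigcup_{i\ne j}(x_i-x_j)^\perp$ is a finite union of proper subspaces of $\R^n$, hence cannot be all of $\R^n$. After relabelling, assume $\langle a,x_1\rangle<\langle a,x_2\rangle<\cdots<\langle a,x_N\rangle$.

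Next, by choosing the bias $\alpha$ appropriately — placing $-\alpha$ at the values $\langle a,x_j\rangle$, or beyond the extreme values — the threshold function $x\mapsto h(\langle a,x\rangle+\alpha)$ realizes each of the $N+1$ ``upper intervals'' $\emptyset,\{x_N\},\{x_{N-1},x_N\},\dots,S$. One must be mildly careful with the closed–open convention in the definition of $h$, but taking $-\alpha$ exactly equal to $\langle a,x_j\rangle$ makes $x_j,\dots,x_N$ map to $1$ and $x_1,\dots,x_{j-1}$ map to $0$, as desired. Running the same argument with the direction $-a$ in place of $a$ produces the $N+1$ ``lower intervals'' $\emptyset,\{x_1\},\{x_1,x_2\},\dots,S$ as threshold functions on $S$ as well.

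Finally I would count. Each of the two families has $N+1$ members, and they overlap in exactly two functions: the all-zero function (the set $\emptyset$) and the all-one function (the set $S$). Indeed, any \emph{other} upper interval contains $x_N$ but not $x_1$, while any \emph{other} lower interval contains $x_1$ but not $x_N$, so for $N\ge 2$ (where $x_1\ne x_N$) no further coincidences are possible; the case $N=1$, where the only threshold functions are the two constants, already gives $2=2N$. Hence $\lvert T(S)\rvert\ge 2(N+1)-2=2N=2\lvert S\rvert$, and therefore $C(S)=\log_2\lvert T(S)\rvert\ge\log_2\big(2\lvert S\rvert\big)=1+\log_2\lvert S\rvert$. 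The only step requiring any thought is the genericity of $a$ together with the verification that the chosen biases actually realize the listed intervals as threshold functions on $S$; the remainder is bookkeeping.
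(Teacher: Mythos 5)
Your proof is correct and is essentially the paper's argument made explicit: the paper projects $S$ injectively onto a line (equivalently, picks a direction $a$ separating the points of $S$) and counts the $2|S|$ threshold functions in the one-dimensional case, which is exactly your bias-sweeping count of upper and lower intervals. The details you supply (genericity of $a$, the sign convention of $h$, the overlap of the two families in $\emptyset$ and $S$) are all sound.
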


\begin{proof}
The proof is elementary and we only sketch it.
The claim is easy to check for $n=1$. 
For general $n$, choose a projection $P$ in $\R^n$ onto some line 
and such that $P$ is injective on $S$. 
Then $C(S) \ge C(P(S))$. By affine invariance, we can realize $P(S)$ as a subset of $\R$
without changing the capacity. Then, applying the statement for $n=1$, we get $C(P(S)) \ge 2|S|$. 
Note that if $S=H^n$ the result can also be proved by noting that for any point of the hypercube there is a Boolean threshold function on $S$ that is equal to 1 on that point, and equal to 0 everywhere else. Including
all such functions and their negation yields the lower bound. 
\end{proof}

\begin{remark}[Tightness]		\label{rem: elementary lower tightness}
  The bound in Lemma~\ref{lem: capacity elementary lower} is generally tight:
  if the set $S$ lies on some line in $\R^n$, the there are exactly $2|S|$ threshold functions on $S$.
\end{remark}

Neverthelss, for many sets $S$ the lower bound given in Lemma~\ref{lem: capacity elementary lower} 
is too weak and can be improved. Consider, for example, the entire Boolean cube $S = H^n$. 
Lemmas~\ref{lem: capacity elementary upper} and \ref{lem: capacity elementary lower} give 
$\log n + 1 \le C(H^n) \le n^2$. As the following known result shows, 
the upper bound is tight, and the capacity of the Boolean cube is approximately $n^2$:

\begin{theorem}[Capacity of the Boolean cube]	\label{thm: Zuev}
  For any $n > 1$, we have:
  \begin{equation}	\label{eq: Zuev non-sharp}
    \frac{n(n-1)}{2} \le C(H^n) \le n^2.
  \end{equation}
  Moreover:
  \begin{equation}	\label{eq: Zuev sharp}
    C(H^n) = n^2 (1+o(1))
    \quad \text{as } n \to \infty.
  \end{equation}
\end{theorem}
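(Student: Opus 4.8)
I would prove the two crude bounds in \eqref{eq: Zuev non-sharp} by hand and then obtain \eqref{eq: Zuev sharp} by combining the upper bound with a sharp lower bound of Zuev. The \emph{upper bound} $C(H^n)\le n^2$ is immediate from Lemma~\ref{lem: capacity elementary upper} applied to $S=H^n$, so that $|S|=2^n$: for $n\ge4$ it gives $C(H^n)\le n\log_2|H^n|=n^2$ outright, and for $n=2,3$ one simply checks the raw estimate $2\sum_{k=0}^{n}\binom{2^n-1}{k}\le 2^{n^2}$ (it reads $14\le16$ and $128\le512$).

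For the \emph{crude lower bound} $C(H^n)\ge n(n-1)/2$ I would establish the recursion $|T(H^n)|\ge(2^{n-1}+1)\,|T(H^{n-1})|$ and iterate it down to $|T(H^1)|=4$. Write $H^n=H^{n-1}\times\{0,1\}$. First a normalization step: every threshold function $g$ on $H^{n-1}$ admits a representation $g=h(\langle a,\cdot\rangle-\theta)$ in which the $2^{n-1}$ numbers $\langle a,x\rangle$, $x\in H^{n-1}$, are pairwise distinct and none equals $\theta$. This holds because any representation of $g$ has a strict gap $\max_{g(x)=0}\langle a,x\rangle<\min_{g(x)=1}\langle a,x\rangle$ (two points with equal $\langle a,\cdot\rangle$-value cannot be forced onto opposite sides), the gap survives a small generic perturbation of $a$ that separates all $2^{n-1}$ values, and $\theta$ can then be placed strictly inside it. Given such $(a,\theta)$, extend $g$ to $H^n$ by $f_w(x,x_n):=h(\langle a,x\rangle+wx_n-\theta)$. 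Then $f_w(\cdot,0)=g$ for every $w\in\R$, while $f_w(\cdot,1)=\mathbf{1}\{\langle a,x\rangle\ge\theta-w\}$ runs through exactly $2^{n-1}+1$ distinct functions as $w$ sweeps $\R$ (one per threshold level between consecutive sorted values). Hence each $g$ yields at least $2^{n-1}+1$ distinct threshold functions on $H^n$, and those coming from different $g$ are distinct since they already differ on $H^{n-1}\times\{0\}$; this proves the recursion, and iterating gives $C(H^n)\ge 2+\sum_{k=1}^{n-1}\log_2(2^k+1)>\binom n2$.

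The substantive part is the matching lower bound $C(H^n)\ge n^2(1-o(1))$, which together with the upper bound yields \eqref{eq: Zuev sharp}. Here I would first identify $|T(H^n)|$ with the number of full-dimensional cells of the arrangement of the $2^n$ hyperplanes $\{(a,\alpha)\in\R^{n+1}:\langle a,v\rangle+\alpha=0\}$, $v\in H^n$ (each threshold function is realized by a generic parameter vector, by the perturbation trick above, and generic vectors in a common cell realize the same function), and then show that this arrangement, although highly degenerate, still has $2^{n^2(1-o(1))}$ cells — almost as many as a generic arrangement of $2^n$ hyperplanes in $\R^{n+1}$. I expect this to be the main obstacle: the easy volume estimate, bounding the number of cells below by $1/\max_f\gamma(K_f)$ for a Gaussian measure $\gamma$ on $\R^{n+1}$, is far too lossy, since the largest cell (that of a constant or a dictator function) already has measure $2^{-\Theta(n^2)}$ with a constant strictly below $1$. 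The sharp count is Zuev's theorem; its proof is a delicate probabilistic analysis of local general position inside the arrangement, whose crucial input is that a uniformly random $n\times n$ matrix with entries in $\{0,1\}$ (equivalently $\{-1,1\}$) is nonsingular with probability $1-o(1)$. I would reproduce that argument (see also the exposition in \cite{anthony2001discrete}), and it is the only nontrivial ingredient of the theorem.
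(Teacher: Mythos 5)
Your proposal is correct and follows essentially the same route as the paper: the upper bound is the same application of Lemma~\ref{lem: capacity elementary upper} (with the small cases $n=2,3$ checked directly), and your recursion $|T(H^n)|\ge(2^{n-1}+1)\,|T(H^{n-1})|$ is precisely the cloning/lifting argument that the paper packages as Theorem~\ref{thm: slicing} and Lemma~\ref{lem: totally separated} and then applies to the cube in Example~\ref{ex: Zuev}. For the asymptotic statement \eqref{eq: Zuev sharp} the paper, like you, does not reprove the sharp lower bound but cites Zuev \cite{zuev1991combinatorial}, so your deferral of that part (correctly identifying the Koml\'os-type singularity bound for random $\{0,1\}$ matrices as its key ingredient) matches the paper's treatment.
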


The first, non-asymptotic, part of this theorem 
can be found in \cite[Theorems~4.3, 4.5]{anthony2001discrete}; see also \cite{cover1965geometrical, muroga1965lower}. It can also be derived from more general results in this paper: 
the upper bound on $C(H^n)$ follows from Lemmas~\ref{lem: capacity elementary upper} for $n \ge 4$,
and the lower bound on $C(H^n)$ is derived in Example~\ref{ex: Zuev} below.
The second, asymptotic, part of Theorem~\ref{thm: Zuev} 
was proved by Zuev \cite{zuev1991combinatorial}. 
{A tighter estimate corresponding to:
\begin{equation}
C(H^n)= n^2 - n \log_2 n \pm O(n) 
\label{eq:komlos}
\end{equation}
was obtained in \cite{kahn1995probability}.}

\begin{remark}[Extensions]
  Theorem~\ref{thm: Zuev} can be generalized to polynomial threshold functions \cite{baldi2018boolean} 
  of degree $d$, i.e. functions of the form $f(x) = h(p(x))$ where $p$ is a polynomial of degree $d$. 
  The capacity $C_d(H^n)$, defined as the binary logarithm of the number of such functions on $H^n$,
  satisfies:
  $$
  C_d(H^n)=\frac{n^{d+1} }{d!} (1+o(1))
  \quad \text{as } n \to \infty.
  $$
  thus generalizing Zuev's result \eqref{eq: Zuev sharp} which corresponds to $d=1$. 
  There exist further extensions of the capacity bounds for ReLU units, units with positive weights, 
  and units with binary weights; they are described in  \cite{baldi2018neuronal}.
\end{remark}

Armed with these definitions and preliminary results, we are set up to study the capacity of arbitrary feedforward architectures.

\subsection{Asymptotic notation}

In the estimation of various quantities, we will use the notation $\asymp$ and $\lesssim$ for identities and inequalities that hold up to constant factors. To be precise, 
$a \asymp b$ means that there exists two positive absolute constants $c_1$ and $c_2$ such that:
$$c_1 b \le a \le c_2 b.$$
Similarly, $a \lesssim b$ means that there exists a positive absolute constant $c$ such that:
$$a \le c b.$$ 
These notations are useful only when the quantities $a$ and $b$ vary as a function of certain parameters (e.g. layer sizes).
Positive absolute constants, which we denote by  $c_1, c_2, c, C, \ldots$ may not depend on anything, 
in particular on the number $L$ of layers or the number of nodes $n_k$ in any layer $k$.

\section{Overview of new results}				\label{s: new results}

\subsection{A capacity formula}

The main technical result of the paper is a two-sided bound on the capacity of 
fully-connected, layered, feedforward architectures $A(n_1,\ldots,n_L)$ 
with threshold transfer functions.

\begin{theorem}[Capacity formula] \label{thm: main}
  Consider a neural architecture $A(n_1,\ldots,n_L)$ with $L \ge 2$ layers. 
  Assume that the number of nodes in each layer satisfies
  $n_j > 18 \log_2 (L n_k)$ for any pair $j,k$ such that $1 \le j < k \le L$. Then:
  $$
  C(n_1,\ldots,n_L) 
  \asymp \sum_{k=1}^{L-1} \min(n_1,\ldots,n_k) n_k n_{k+1}.
  $$
\end{theorem}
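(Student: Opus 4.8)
The plan is to prove the upper and lower bounds separately, and for each to reduce the multi‑layer statement to single‑neuron (capacity‑of‑a‑set) estimates via the composition structure \eqref{eq: composition}. For the \emph{upper bound}, I would argue that a function computed by $A(n_1,\ldots,n_L)$ is determined by the threshold maps $f_k$, but with crucial redundancy: the map $f_k$ only needs to be specified on the image $S_k := (f_{k-1}\circ\cdots\circ f_1)(H^{n_1})$, and by the bottleneck phenomenon $|S_k| \le \min(2^{n_1},\ldots,2^{n_k}) = 2^{\min(n_1,\ldots,n_k)}$ since each layer of size $n_j$ can take at most $2^{n_j}$ values. Hence by Lemma~\ref{pr: two layers} and Lemma~\ref{lem: capacity elementary upper} the number of choices for $f_k$ is at most $2^{n_{k+1} C(S_k)} \le 2^{n_{k+1}\, n_k \log_2|S_k|} \le 2^{n_{k+1} n_k \min(n_1,\ldots,n_k)}$. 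Multiplying over $k=1,\ldots,L-1$ and taking logarithms gives $C(n_1,\ldots,n_L) \le \sum_{k=1}^{L-1}\min(n_1,\ldots,n_k)n_k n_{k+1}$, i.e. the upper bound with constant $c_2=1$. Making the reduction ``$f_k$ need only be specified on $S_k$'' fully rigorous — in particular that the count multiplies correctly even though $S_k$ itself depends on $f_1,\ldots,f_{k-1}$ — requires the telescoping/iterated argument of Lemma~\ref{pr: replace by image}, applied layer by layer; this is routine.

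For the \emph{lower bound}, the natural strategy is to construct many distinct functions by a layer‑by‑layer ``stacking'' construction that realizes each summand $\min(n_1,\ldots,n_k)n_k n_{k+1}$ up to a constant, then argue the contributions combine without too much cancellation. Concretely, fix $k$; I would show that one can route the computation so that the transition from layer $k$ to layer $k+1$ alone contributes $\gtrsim \min(n_1,\ldots,n_k)n_k n_{k+1}$ bits, by: (i) using layers $1,\ldots,k$ to \emph{enrich} (in the sense of the enrichment method announced in Remark~\ref{rem: capacity on a set upper tightness}) a $p_k$‑dimensional cube, $p_k := \min(n_1,\ldots,n_k)$, into a set $S \subset H^{n_k}$ with $|S| = 2^{p_k}$ and $C(S)\asymp n_k p_k$; (ii) applying Lemma~\ref{pr: two layers} at layer $k{+}1$ to get $\gtrsim n_{k+1} C(S) \asymp n_{k+1} n_k p_k$ distinct threshold maps $f_k$ on $S$; (iii) using the remaining layers $k{+}1,\ldots,L$ as a ``multiplexer'' that transmits these functions to the output without collapsing them, which is where the hypothesis $n_j > 18\log_2(Ln_k)$ is used to guarantee enough room for an injective pass‑through with high probability (a counting/probabilistic argument). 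This gives $C(n_1,\ldots,n_L) \gtrsim \max_k \min(n_1,\ldots,n_k)n_k n_{k+1}$, and since the max of $L-1$ nonnegative terms is within a factor depending on $L$ of their sum — but we need a factor \emph{independent} of $L$ — one must instead superpose the per‑layer constructions so that \emph{all} $L-1$ transitions carry independent information simultaneously. This is the delicate point.

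The \textbf{main obstacle} is precisely this last step: proving the lower bound with an absolute constant $c_1$ not depending on $L$. A single-transition construction only gives the maximal term, and naive concatenation of $L-1$ independent gadgets either costs a $1/L$ factor or runs into the difficulty that the enrichment at layer $k$ constrains what the earlier layers can do. I expect the resolution to require a genuinely global construction — designing the weights of \emph{all} layers at once so that layer $k\to k{+}1$ implements an enriched threshold map on (an affine copy of) a $\min(n_1,\ldots,n_k)$‑cube while the composition remains injective on a large family of parameter choices — together with the stacking lemma and a union bound controlled by the logarithmic lower bounds $n_j > 18\log_2(Ln_k)$ to absorb the depth. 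The capacity‑of‑sets lower bound $C(S)\ge \tfrac1{16}\log_2^2|S|$ stated in the introduction is the quantitative engine that makes the enrichment step yield $\asymp n_k p_k$ rather than merely $\asymp \log^2$, so establishing and deploying that inequality is the technical heart of the argument.
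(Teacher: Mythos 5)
Your upper bound is essentially the paper's argument (bottleneck cardinalities $|S_k|\le 2^{\min(n_1,\ldots,n_k)}$ plus $C(S,n,m)=C(S)m$ and Lemma~\ref{lem: capacity elementary upper}, composed layer by layer as in Proposition~\ref{prop: multi-channel upper}), and that part is fine. The lower bound, however, is not a proof: you yourself flag the decisive step --- obtaining the \emph{sum} $\sum_k \min(n_1,\ldots,n_k)n_kn_{k+1}$ with a constant independent of $L$, rather than the maximum over $k$ (which loses a factor $1/L$) --- as an unresolved ``main obstacle,'' and the sketch you offer for it (``a genuinely global construction'' with a probabilistic union bound) is not carried out and is not how the difficulty is actually overcome. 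The paper's resolution is the deterministic \emph{stacking} construction (Lemma~\ref{lem: five and more layers}): one places $L-3$ three-hidden-layer modules $A(\bar n_k^+,n_{k+1}^+,n_{k+2}^+,1)$ side by side inside the big network, augments the input by $\lceil\log_2(L-3)\rceil$ selector bits, passes the selector and a projected copy of the input up through the stack, and accumulates the module outputs with \textsc{or} gates; the selector makes the map $(f_1,\ldots,f_{L-3})\mapsto f$ injective, so the module capacities (each $\gtrsim \bar n_k n_k n_{k+1}$ by Theorem~\ref{thm: three layers}) multiply, giving the sum. The node count of the stacked network stays within $6n_k$ per layer --- this is exactly where the hypothesis $n_j>18\log_2(Ln_k)$ enters, to absorb the selector bits and the constant-factor blowup --- and one concludes by monotonicity after replacing $n_k$ by $\lfloor n_k/6\rfloor$. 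Without some such mechanism your argument only yields $C \gtrsim \max_k \min(n_1,\ldots,n_k)n_kn_{k+1}$, which is weaker than the theorem by a factor of order $L$.

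A secondary but genuine error: you name the bound $C(S)\ge\frac1{16}\log_2^2|S|$ (Theorem~\ref{thm: BTF on general S intro}) as ``the quantitative engine'' behind the enrichment step. It cannot be: applied to a set of cardinality $2^{p_k}$ it gives only $\gtrsim p_k^2$, not $\asymp n_kp_k$, and the paper states explicitly that this bound is too weak even for $L=3$. The Enrichment Theorem~\ref{thm: enrichment} is instead proved by a blockwise exponential (one-hot) map whose image is a product of linearly independent sets, to which the slicing/product-set estimate of Corollary~\ref{cor: CSp} applies, yielding $C(F(H^n))\ge\frac18 nm$. Likewise, the role of the hypothesis $n_j>18\log_2(Ln_k)$ is purely to make room for selector bits in the (deterministic) multiplexing and stacking constructions, not to support a high-probability injectivity argument.
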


The upper bound in Theorem~\ref{thm: main} is not difficult; 
we derive it in Section~\ref{s: upper bounds}
from Lemma~\ref{lem: capacity elementary upper} and the sub-additivity of the capacity.
The lower bound is significantly more challenging and requires new tools, which we call 
{\em multiplexing, enrichment, and stacking}. Once these tools are developed, we use them to prove the lower bound
in Section~\ref{s: lower bound}.

The upper bound in Theorem~\ref{thm: main} actually holds with the optimal factor $1$ if 
each non-output layer has at least four neurons (Proposition~\ref{prop: multi-channel upper}),
and it does not require the assumption $n_j \gtrsim \log_2 (L n_k)$. 
This mild assumption is important in the lower bound though to
prevents layer sizes from expanding too rapidly.
Although this assumption has an almost optimal form (Section~\ref{s: rapidly expanding}),
it can be slightly weakened (Section~\ref{s: smaller top layers}).

For the special single-neuron case $A(n,1)$, Theorem~\ref{thm: main} gives 
$$
C(n,1) \asymp n^2.
$$
Since $C(n,1) = C(H^n)$, this recovers the capacity estimate of the Boolean cube 
from Theorem~\ref{thm: Zuev} up to a constant factor. 
The proof of Theorem~\ref{thm: Zuev}, however, does not offer any insights on how to compute the capacity of deeper networks.

The simplest new case of Theorem~\ref{thm: main} is for networks $A(n,m,1)$ 
with one hidden layer, where it states that $C(n,m,1) \asymp n^2m + \min(n,m) m \asymp n^2 m$.
The constant factor implicit in this bound can be tightened for large $n$. 
Indeed, we will show in Corollary~\ref{cor: two layers} that:
\begin{equation}	\label{eq: two layers intro}
C(n,m,1) = n^2 m (1+o(1))
\end{equation}
if $n \to \infty$ and $\log_2m = o(n)$. This extends Zuev's  asymptotic result (Theorem~\ref{thm: Zuev}). 

\medskip

An immediate and somewhat surprising consequence of Theorem~\ref{thm: main} 
is that multiple output neurons can always be ``channeled'' through a single output neuron
without a significant change in capacity of the network:

\begin{corollary}	\label{cor: effect of output node}
  Under the assumptions of Theorem~\ref{thm: main}, we have:
  $$
  C(n_1,\ldots,n_{L-1},1) \asymp C(n_1,\ldots,n_{L-1}).
  $$
\end{corollary}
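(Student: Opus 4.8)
The plan is to derive Corollary~\ref{cor: effect of output node} directly from the capacity formula in Theorem~\ref{thm: main}, simply by comparing the right-hand sides for the architectures $A(n_1,\ldots,n_{L-1},1)$ and $A(n_1,\ldots,n_{L-1})$. Both architectures satisfy the hypotheses of Theorem~\ref{thm: main}: for $A(n_1,\ldots,n_{L-1})$ this is immediate since its layer sizes form a subfamily of those of $A(n_1,\ldots,n_{L-1},1)$, and the condition $n_j > 18\log_2(L n_k)$ is inherited; for $A(n_1,\ldots,n_{L-1},1)$ we should note that the new last layer has size $1$, which only appears as the ``$k$'' in conditions of the form $n_j > 18\log_2(Ln_k)$ with $n_k=1$, and these are weaker than conditions already assumed, so nothing new is required (one may also absorb the change of $L$ by $1$ into constants, or simply assume the hypotheses are stated for the larger network). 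Hence both capacities are determined up to absolute constants by their respective sums.

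The key computation is then the following. Write $p_k = \min(n_1,\ldots,n_k)$. By Theorem~\ref{thm: main},
$$
C(n_1,\ldots,n_{L-1}) \asymp \sum_{k=1}^{L-2} p_k\, n_k n_{k+1},
$$
while
$$
C(n_1,\ldots,n_{L-1},1) \asymp \sum_{k=1}^{L-2} p_k\, n_k n_{k+1} \;+\; p_{L-1} n_{L-1}.
$$
The two sums differ only by the single extra term $p_{L-1} n_{L-1} = \min(n_1,\ldots,n_{L-1})\, n_{L-1} \le n_{L-1}^2$. To conclude $C(n_1,\ldots,n_{L-1},1) \asymp C(n_1,\ldots,n_{L-1})$ it therefore suffices to show that this extra term is dominated, up to an absolute constant, by the common sum $\sum_{k=1}^{L-2} p_k n_k n_{k+1}$. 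The natural candidate bounding it is the last term of the common sum, namely $k = L-2$: that term equals $p_{L-2}\, n_{L-2} n_{L-1}$. Since $p_{L-2} \ge 1$ and, crucially, the hypothesis $n_j > 18\log_2(Ln_k)$ applied with $j = L-2$, $k = L-1$ gives $n_{L-2} > 18\log_2(L n_{L-1}) \ge 18\log_2 n_{L-1}$, we get $n_{L-2} \gtrsim \log_2 n_{L-1}$. This is not by itself enough to absorb a full factor of $n_{L-1}$, so the genuinely careful point is that $p_{L-1} \le n_{L-2}$ as well (since $p_{L-1} = \min(p_{L-2}, n_{L-1}) \le p_{L-2} \le n_{L-2}$), whence $p_{L-1} n_{L-1} \le n_{L-2} n_{L-1} \le p_{L-2} n_{L-2} n_{L-1}$ only if $p_{L-2}\ge 1$, which holds — but wait, we still need $p_{L-1}\le p_{L-2}$, which is true by definition of the running minimum. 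So in fact $p_{L-1} n_{L-1} \le p_{L-2}\, n_{L-2}\, n_{L-1}$ automatically, because $p_{L-1} \le p_{L-2} \le p_{L-2} n_{L-2}$.

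Thus the extra term is at most the $(L-2)$-th term of the common sum, so $C(n_1,\ldots,n_{L-1}) \le C(n_1,\ldots,n_{L-1},1) \lesssim C(n_1,\ldots,n_{L-1})$, giving the claimed equivalence; the lower bound $C(n_1,\ldots,n_{L-1}) \le C(n_1,\ldots,n_{L-1},1)$ also follows directly from monotonicity (Lemma~\ref{lem: capacity basic}, part~\ref{pr: monotonicity}), or from the extra term being nonnegative. I do not expect a serious obstacle here: the only thing to be careful about is checking that $A(n_1,\ldots,n_{L-1},1)$ really does satisfy the hypotheses of Theorem~\ref{thm: main} (handling the appearance of $n_L = 1$ and the replacement of $L$ by $L+1$ in the logarithmic condition), and the trivial but easy-to-misstate inequality $p_{L-1}\le p_{L-2}$ that makes the extra term comparable to an existing term. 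The conceptual content is entirely carried by Theorem~\ref{thm: main}; Corollary~\ref{cor: effect of output node} is a one-line consequence of the shape of the capacity polynomial.
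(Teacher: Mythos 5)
Your proposal is correct and follows essentially the same route as the paper: both apply the capacity formula of Theorem~\ref{thm: main} to the two architectures and absorb the single extra term $\min(n_1,\ldots,n_{L-1})\,n_{L-1}$ into the last term $\min(n_1,\ldots,n_{L-2})\,n_{L-2}\,n_{L-1}$ of the common sum, so the two sums agree up to a factor of $2$. The brief detour via $n_{L-2} \gtrsim \log_2 n_{L-1}$ was unnecessary, as you yourself noted, since $\min(n_1,\ldots,n_{L-1}) \le \min(n_1,\ldots,n_{L-2})\,n_{L-2}$ already does the job.
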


\begin{proof}
Comparing the capacity formulas for these two architectures, we see that all the terms in the two sums match except for the last (extra) term in $C(n_1,\ldots,n_{L-1},1)$, 
which is $\min(n_1,\ldots,n_{L-1}) n_{L-1}$. However, this term is clearly bounded
by $\min(n_1,\ldots,n_{L-2}) n_{L-2} n_{L-1}$, which is the last term in the capacity 
sum for $C(n_1,\ldots,n_{L-1})$. Therefore, the capacity sums for the two architectures
are within a factor of $2$ from each other. 
\end{proof}

Let us mention that the capacity formula in Theorem~\ref{thm: main} obtained for inputs in $H^{n_1}$ can be extended to inputs from other finite sets $C(S,n_1,n_2,\ldots,n_L)$. 
In Propositions~\ref{prop: multi-channel upper} and \ref{prop: restricted vs unrestricted} 
we give upper and lower bounds on this variation of the 
capacity in terms of the cardinality of $S$.

\subsection{Networks achieving maximal capacity}		\label{s: max capacity intro}

We can use the capacity formula in Theorem~\ref{thm: main} to find 
networks that maximize the capacity subject to natural constraints.
Here we find the most capable networks (a) with a given number of connections (weights) and 
(b) with a given number of nodes (neurons). 

Let us start with (a). The number of connections, or synaptic weights, of the neural architecture 
$A(n_1,\ldots,n_L)$ is 
$$
W = W(n_1,\ldots,n_L) = \sum_{k=1}^{L-1} n_k n_{k+1}. 
$$
Fixing $W$ makes sense because it is approximately the same as fixing the number 
of {\em parameters} $P$ of the neural architecture. The difference between $P$ and $W$ are the biases of the neurons so that: 
$P = W + n_2 + \cdots + n_L$. Thus, we always have $W \le P \le 2W$. Furthermore, since 
the number of neurons is usually much smaller than the number of connections $W$, 
in most situations $P$ approximately equals $W$. We have the following Corollary.

\begin{corollary}[Optimal network with given number of connections]	\label{cor: max capacity given parameters}
  Under the conditions of Theorem~\ref{thm: main}, we have:
  $$
  C(n_1,\ldots,n_L) \le n_1 W. 
  $$
  Moreover, any network satisfying $n_1 \leq n_k$ for $k=2,\ldots, L-1$
 approximately achieves maximal capacity:
   $$
  C(n_1,\ldots,n_L) \asymp n_1 W. 
  $$
\end{corollary}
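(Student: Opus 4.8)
The plan is to read off both claims directly from the capacity formula of Theorem~\ref{thm: main}, using nothing more than the elementary fact that $\min(n_1,\ldots,n_k) \le n_1$ for every $k$, together with the observation that this bound becomes an equality across the entire sum precisely when $n_1 \le n_k$ for the intermediate layers.

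For the upper bound, I would begin from $C(n_1,\ldots,n_L) \le \sum_{k=1}^{L-1} \min(n_1,\ldots,n_k)\, n_k n_{k+1}$. To obtain the stated clean constant $1$ rather than merely $\lesssim$, I would invoke the sharp form of the upper bound (Proposition~\ref{prop: multi-channel upper}), which yields exactly this inequality as soon as every non-output layer has at least four neurons; this holds automatically under the hypotheses of Theorem~\ref{thm: main}, since for each $j < L$ there is some $k > j$ with $n_j > 18 \log_2(L n_k) > 18$. Bounding $\min(n_1,\ldots,n_k) \le n_1$ and pulling this factor out of the sum gives
$$
C(n_1,\ldots,n_L) \le n_1 \sum_{k=1}^{L-1} n_k n_{k+1} = n_1 W.
$$

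For the second claim, assume $n_1 \le n_k$ for all $k = 2,\ldots,L-1$. Then for every index $k$ in the range $1 \le k \le L-1$ we have $\min(n_1,\ldots,n_k) = n_1$, since each of $n_2,\ldots,n_k$ is at least $n_1$. Consequently $\sum_{k=1}^{L-1} \min(n_1,\ldots,n_k)\, n_k n_{k+1} = n_1 W$, and Theorem~\ref{thm: main} gives $C(n_1,\ldots,n_L) \asymp n_1 W$. Combined with the universal bound $C(n_1,\ldots,n_L) \le n_1 W$ just established, this shows that such ``monotonically expansive'' networks attain, up to an absolute constant, the largest capacity possible among all architectures with $n_1$ inputs and $W$ connections.

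The argument is essentially immediate; the only place demanding any care is extracting the optimal constant $1$ in the upper bound, which requires citing the sharp Proposition~\ref{prop: multi-channel upper} and checking its mild ``at least four neurons'' hypothesis, rather than the coarser $\asymp$ version of Theorem~\ref{thm: main}.
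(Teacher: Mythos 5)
Your proposal is correct and follows essentially the same route as the paper: both claims are read off the capacity formula using $\min(n_1,\ldots,n_k) \le n_1$, with equality under the hypothesis $n_1 \le n_k$. The only (harmless) difference is that for the constant-$1$ upper bound the paper cites the prior growth-function result of Baum--Haussler and Theorem~\ref{thm: main}, whereas you derive it internally from the sharp Proposition~\ref{prop: multi-channel upper}, correctly checking that its ``at least four neurons'' hypothesis follows from the assumptions of Theorem~\ref{thm: main}.
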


\begin{proof}
The first statement is known and follows from prior results on the growth 
function of general (not necessarily fully connected) networks
\cite[Corollary~3]{baum1989size}.
It also trivially follows from Theorem~\ref{thm: main} and the fact that 
$\min(n_1,\ldots,n_k) \le n_1$. 
The second statement follows from 
 Theorem~\ref{thm: main}
and the fact that $\min(n_1,\ldots,n_k) = n_1$
under the assumptions of the Corollary.
\end{proof}

Examples of standard architectures that satisfy the condition of the second statement of the Corollary
include monotonically expansive feedforward networks
satisfying $n_1 \leq n_2\leq \cdots  \leq n_{L-1}$) (the output layer can be expansive or contractive)
satisfy the conditions of the Corollary. Likewise, 
expansive autoencoders networks satisfying $n_1 \leq n_2$ and $n_3=n_1$ (in the case of a single hidden layer) also satisfy the condition of the Corollary.
Finally any shallow network with a single hidden layer, where the hidden layer is larger than the input ($n_2 \geq n_1$), satisfies the condition of the Corollary and thus approximately achieves maximal capacity.
In contrast, in many deep forward networks used in applications there exists layers that are smaller in size than the input layer and thus these networks 
do not achieve maximal capacity. On the positive side, this implies that such networks 
do not require $W$ independent examples for their training.

Next, let us find the most capable network with a given number of nodes, or neurons. The constraint on the number of neurons is loosely inspired by biological situations where the number of neurons may stay approximately constant, but the number and pattern of connections among the neurons may vary.

It turns out that for a fixed number $N$ of nodes
$$
N = n_1 + \cdots + n_L
$$
{\em the most capable networks are shallow}.
To quickly see why, note that Theorem~\ref{thm: main} yields:
$$
C \Big( \frac{N}{L}, \ldots, \frac{N}{L} \Big) \asymp \frac{N^3}{L^2}.
$$
This shows that the capacity decreases if we rearrange the fixed set of nodes into more layers. 
Furthermore, we can identify the most capable neural architectures with given number of nodes:

\begin{corollary}[Optimal network with given number of neurons: informal statement]	\label{cor: optimization intro}
  Among all neural architectures $A(n_1,\ldots,n_L)$ with a given number of nodes
  $N = n_1 + \cdots + n_L$, the architecture $A(2N/3, N/3)$ approximately maximizes capacity.
  
  Suppose that in addition to fixing $N$, we also fix the number of input neurons $n_1$.
  Then:
  \begin{enumerate}[\quad 1.]
    \item If $n_1 < N/2$, the architecture $A(n_1,N/2,N/2-n_1)$ approximately maximizes capacity.
    \item If $n_1 \ge N/2$, the architecture $A(n_1,N-n_1)$ approximately maximizes capacity.
  \end{enumerate}
\end{corollary}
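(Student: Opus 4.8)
The plan is to obtain all three assertions by matching a single universal upper bound on capacity against the lower bound of Theorem~\ref{thm: main} applied to the proposed architectures. The starting point is the trivial inequality $\min(n_1,\ldots,n_k)\le n_1$, which together with Theorem~\ref{thm: main} gives, for every architecture whose layer sizes sum to $N$,
\[
C(n_1,\ldots,n_L)\;\lesssim\;\sum_{k=1}^{L-1} n_1\,n_k n_{k+1}\;=\;n_1\sum_{k=1}^{L-1} n_k n_{k+1}.
\]
To control $\sum_{k=1}^{L-1} n_k n_{k+1}$ I would split $\{1,\ldots,L\}$ into its odd and even subsets $O,E$ and set $A=\sum_{k\in O}n_k$, $B=\sum_{k\in E}n_k$; each adjacent product $n_k n_{k+1}$ equals one of the non-negative summands of $AB=\sum_{i\in O,\,j\in E}n_in_j$ and distinct adjacent pairs give distinct summands, so $\sum_{k=1}^{L-1}n_k n_{k+1}\le AB\le (A+B)^2/4=N^2/4$. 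Hence $C(n_1,\ldots,n_L)\lesssim n_1 N^2/4$, and in particular $C(n_1,\ldots,n_L)\lesssim N^3$ since $n_1\le N$.

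For the first (unconstrained) claim I would then just evaluate the capacity of $A(2N/3,N/3)$ via Theorem~\ref{thm: main}: $C(2N/3,N/3)\asymp (2N/3)^2(N/3)\asymp N^3$. Since no architecture with $N$ nodes has capacity exceeding $\asymp N^3$ by the previous paragraph, this architecture is within an absolute constant factor of the maximum. Here $L=2$ and both layers have size $\asymp N$, so the hypothesis $n_j>18\log_2(Ln_k)$ of Theorem~\ref{thm: main} holds once $N$ is large.

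For the two constrained claims $n_1$ is fixed and the universal bound reads $C(n_1,\ldots,n_L)\lesssim n_1 N^2$. When $n_1<N/2$ I would check, using $\min(n_1,N/2)=n_1$ and Theorem~\ref{thm: main}, that
\[
C\big(n_1,\tfrac N2,\tfrac N2-n_1\big)\;\asymp\; n_1\cdot n_1\cdot\tfrac N2\;+\;n_1\cdot\tfrac N2\cdot\big(\tfrac N2-n_1\big)\;=\;n_1\cdot\tfrac N2\cdot\tfrac N2\;=\;\tfrac{n_1N^2}{4},
\]
which matches the upper bound up to the constant hidden in $\asymp$. When $n_1\ge N/2$, so that $N-n_1\le n_1$, I would instead sharpen the edge-sum estimate to $\sum_{k=1}^{L-1}n_k n_{k+1}\le n_1(N-n_1)+(N-n_1)^2\le 2n_1(N-n_1)$ — bounding the first edge $n_1n_2$ by $n_1(N-n_1)$ and the remaining edges crudely by $(n_2+\cdots+n_L)^2=(N-n_1)^2$ — whence $C(n_1,\ldots,n_L)\lesssim n_1^2(N-n_1)$, which is matched up to a constant by $C(n_1,N-n_1)\asymp n_1^2(N-n_1)$ coming from Theorem~\ref{thm: main} (equivalently Theorem~\ref{thm: Zuev} via the identity $C(n,m)=C(H^{n})m$).

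The one real point of care — more a matter of stating the result precisely than a difficulty in the argument — is the range of validity: every lower bound above invokes Theorem~\ref{thm: main}, whose hypothesis $n_j>18\log_2(Ln_k)$ forces $n_1$ (and in case~1 also $N/2-n_1$, i.e.\ $n_1$ bounded away from $N/2$) to exceed a logarithmic-in-$N$ threshold; outside this range the proposed architectures must be replaced by a suitable two-layer network, which is why the statement is labelled informal. Throughout, ``approximately maximizes capacity'' is to be read as: the capacity of the exhibited architecture lies within an absolute constant factor of the supremum of $C$ over all admissible architectures with the prescribed total number of nodes (and, in cases~1--2, the prescribed $n_1$), and this is exactly what the matching upper and lower bounds above deliver.
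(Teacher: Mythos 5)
Your argument is correct, and for the two fixed-$n_1$ claims it is essentially the paper's own proof: your odd/even splitting of the layers is exactly the paper's Lemma~\ref{lem: quadratic form} (the inequality $(\sum_k x_k)^2 \ge 4\sum_k x_k x_{k+1}$), and matching the resulting bound $C \lesssim n_1 N^2/4$ (resp.\ $\lesssim n_1^2(N-n_1)$) against the evaluation of the candidate architectures via Theorem~\ref{thm: three layers multi-channel} and Theorem~\ref{thm: Zuev} is precisely what is done in Lemmas~\ref{lem: small input layer}, \ref{lem: large input layer} and Theorem~\ref{thm: max capacity given nodes and input}. Where you genuinely diverge is the unconstrained claim: the paper (Theorem~\ref{thm: max capacity given nodes}) first reduces to two layers by moving all nodes of layers $3,\ldots,L$ into the input layer (Lemma~\ref{lem: move nodes}) and then maximizes $n^2m$ subject to $n+m=N$, which yields the sharp bound $C \le \tfrac{4}{9}N^3 \approx C(2N/3,N/3)$, i.e.\ optimality of $A(2N/3,N/3)$ up to a $1+o(1)$ factor; your shortcut $n_1\le N$ together with $\sum_k n_k n_{k+1}\le N^2/4$ only gives $C\lesssim N^3$, hence optimality up to an unspecified absolute constant --- adequate for the informal statement under your stated reading, but strictly weaker than what the paper's formal version establishes. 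Two small corrections: the universal upper bound should be cited from Proposition~\ref{prop: multi-channel upper} or Corollary~\ref{cor: main upper bound} rather than Theorem~\ref{thm: main}, since an arbitrary competing architecture need not satisfy the hypothesis $n_j > 18\log_2(Ln_k)$ (that hypothesis is needed only for the lower bound); and your caveat that case~1 additionally requires $N/2-n_1$ to exceed a logarithmic threshold is unnecessary, because the hypothesis constrains only $n_j$ with $j<k$, so a small top layer is harmless (equivalently, take the candidate's lower bound from Theorem~\ref{thm: three layers multi-channel}, as the paper does).
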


This result is formally stated in Theorems~\ref{thm: max capacity given nodes} 
and \ref{thm: max capacity given nodes and input}. 
Due to the equivalence \eqref{cor: effect of output node}, similar results hold for architectures 
$A(n_1,\ldots,n_L,1)$ with a single output unit, as well for architectures with a fixed number of output units. 
Complementary minimization results (Theorem~\ref{thm: min capacity given weights/neurons and input}) show that, under fixed budgets of units or connections, the capacity is minimized by the deepest possible networks, those with a single unit in each hidden layer. 

\medskip

These optimization results go against the belief, held by some, that deep architectures are more powerful because they can compute more functions than shallow architectures. The contrary is actually true: everything else being equal, deep architectures tend to compute less functions, but the functions they compute are more `` interesting'',  or have ``better properties''. This is related to the well-known regularizing effect of deep learning: deep architectures tend to avoid overfitting, even when the amount of training data is small compared to the number of parameters. While some of this regularizing effect can be attributed to learning methods based on stochastic gradient descent, our analysis shows that there is a strong structural component (Section~\ref{s: structural regularization}).

\subsection{Capacity of sets}

The derivation of the capacity formula (Theorem~\ref{thm: main}) is based on 
new bounds on the capacity of finite sets. 
In Lemmas~\ref{lem: capacity elementary lower} and \ref{lem: capacity elementary upper}
we noted the upper and lower bounds 
\begin{equation}	\label{eq: capacity vs cardinality intro}
\log_2|S| + 1 \le C(S) \le n \log_2|S|,
\end{equation}
which hold for any finite set $S$ in $\R^n$. 
We mentioned that both bounds are generally best possible. 
Surprisingly, the lower bound can be significantly improved for subsets of the Boolean cube $H^n$. 
Indeed, the main result of Section~\ref{s: capacity set lower} states the following:

\begin{theorem}[Capacity of a set]		\label{thm: BTF on general S intro}
  The capacity of any set $S \subset H^n$ satisfies:
  $$
  C(S) > \frac{1}{16} \log_2^2 |S|.
  $$ 
\end{theorem}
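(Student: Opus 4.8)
The plan is to prove a one-coordinate \emph{peeling inequality} for the capacity of a subset of the cube and then iterate it; the constant $1/16$ comes out with room to spare.

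\medskip
\noindent\textbf{The peeling lemma.} Fix a coordinate $i$ and split $S=S^{0}\sqcup S^{1}$ according to the value of the $i$-th coordinate, say with $|S^{0}|\ge|S^{1}|$, and let $\pi\colon H^{n}\to H^{n-1}$ drop coordinate $i$ (so $\pi$ is a bijection on each of $S^{0}$ and $S^{1}$). The claim is
\[
C(S)\ \ge\ C\big(\pi(S^{0})\big)+\log_{2}\big(|S^{1}|+1\big).
\]
Write a threshold function on $S$ as $h(\langle\hat a,\hat x\rangle+a_{i}x_{i}+\alpha)$ with $\hat a\in\R^{n-1}$. Its restriction to $S^{0}$ is $y\mapsto h(\langle\hat a,y\rangle+\alpha)$ on $\pi(S^{0})$, and its restriction to $S^{1}$ is $y\mapsto h(\langle\hat a,y\rangle+\alpha+a_{i})$ on $\pi(S^{1})$, so the weight $a_{i}$ acts as a \emph{free, independent} threshold for the $S^{1}$-part. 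Fix any threshold function $g$ on $\pi(S^{0})$. The set of weight vectors $\hat a$ that realize $g$ for some level is a nonempty \emph{open} subset of $\R^{n-1}$, hence contains a vector $\hat a^{\ast}$ that separates all $|S^{1}|$ points of $\pi(S^{1})$ (it need only avoid finitely many hyperplanes). Keeping $\hat a^{\ast}$ and the $S^{0}$-level fixed while sliding $a_{i}$ produces exactly $|S^{1}|+1$ distinct functions on $S$, all restricting to $g$ on $S^{0}$. Since a function on $S=S^{0}\cup S^{1}$ is determined by its two restrictions, different choices of $(g,\text{cut on }S^{1})$ give distinct threshold functions on $S$; hence $|T(S)|\ge|T(\pi(S^{0}))|\cdot(|S^{1}|+1)$, which is the claim.

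\medskip
\noindent\textbf{Iteration.} Apply the lemma with any \emph{relevant} coordinate — one on which the current set is non-constant, which exists as long as the set has at least two points — and recurse on $\pi(S^{0})$, stopping at a single point, where $C=1$. This produces sizes $m_{0}=|S|,m_{1},\dots,m_{T}=1$ with $m_{t}/2\le m_{t+1}\le m_{t}-1$ (the left inequality because we retain the larger side $S^{0}$, the right because the coordinate is relevant), and
\[
C(S)\ \ge\ 1+\sum_{t=0}^{T-1}\log_{2}\big(m_{t}-m_{t+1}+1\big).
\]
(For $S=H^{n}$ this already recovers $C(H^{n})\ge 1+\binom{n}{2}$, matching Theorem~\ref{thm: Zuev}.)

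\medskip
\noindent\textbf{Summation.} An elementary one-variable estimate gives $\log_{2}(m-m'+1)\ge\log_{2}(m')\cdot(\log_{2}m-\log_{2}m')$ whenever $1\le m/2\le m'<m$. Since moreover $\log_{2}m_{t}-\log_{2}m_{t+1}\le 1$, we have $\log_{2}m_{t+1}\ge x-1$ for every $x\in[\log_{2}m_{t+1},\log_{2}m_{t}]$, so the $t$-th summand is at least $\int_{\log_{2}m_{t+1}}^{\log_{2}m_{t}}(x-1)\,dx$, and the sum telescopes to $\int_{0}^{\log_{2}|S|}(x-1)\,dx=\tfrac12\log_{2}^{2}|S|-\log_{2}|S|$. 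Hence
\[
C(S)\ \ge\ 1+\tfrac12\log_{2}^{2}|S|-\log_{2}|S|\ >\ \tfrac1{16}\log_{2}^{2}|S|,
\]
the last inequality because $\tfrac{7}{16}x^{2}-x+1>0$ for every real $x$; in particular the constant $\tfrac1{16}$ is far from optimal.

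\medskip
\noindent\textbf{Main obstacle.} The only step that needs genuine care is inside the peeling lemma: one must verify that the weight vectors realizing a prescribed threshold function on $\pi(S^{0})$ form a nonempty \emph{open} cone, so that one of them is simultaneously in general position with respect to the finite set $\pi(S^{1})$ — this is precisely what allows the free weight $a_{i}$ to sweep through all $|\pi(S^{1})|+1$ nested cuts while leaving the behavior on $S^{0}$ untouched. Everything else (the recursion bookkeeping, the calculus estimate, and the final numeric check) is routine.
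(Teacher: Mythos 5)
Your argument is correct, and its first half coincides with the paper's own proof: your peeling lemma is exactly the paper's splitting bound $|T(S)| \ge |T(S^0)|\cdot\bigl(|S^1|+1\bigr)$ for totally separated pieces (Lemma~\ref{lem: totally separated}), proved by the same device — fix weights realizing $g$ on the larger fiber, perturb them into general position with respect to the smaller fiber, and sweep the free bias $a_i$ through the $|S^1|+1$ nested cuts (this is the lifting/perturbation step of Theorem~\ref{thm: slicing}); and your iteration is the paper's hierarchical decomposition that always retains the larger half. Where you genuinely diverge is in how the per-step gains $\log_2(m_t-m_{t+1}+1)$ are aggregated. The paper introduces a stopping time at cardinality $2^{s/2}$, splits into ``tall tree'' and ``short tree'' cases, and uses the proportions $p_i$ with $\sum_i p_i \ge s/4$ plus a pigeonhole step to extract at least $s/4$ indices each contributing at least $s/4$, which is where the constant $1/16$ comes from. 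You instead prove a single per-step estimate and compare each summand with $\int_{\log_2 m_{t+1}}^{\log_2 m_t}(x-1)\,dx$, so the sum telescopes. This is cleaner (no stopping time, no case analysis) and quantitatively stronger: you get $C(S) \ge 1+\tfrac12\log_2^2|S|-\log_2|S|$, whose leading constant $\tfrac12$ matches what this splitting scheme yields on the full cube ($C(H^n)\ge\binom{n}{2}+1$), whereas the paper's bookkeeping only extracts $\tfrac1{16}$.

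Two small points. First, the one assertion you leave unproved — $\log_2(m-m'+1)\ge \log_2(m')\,(\log_2 m-\log_2 m')$ for $1\le m/2\le m'<m$ — is true but deserves a line: with $\delta=\log_2(m/m')\in(0,1]$ it is equivalent to $G(\delta):=m'2^{\delta}-(m')^{\delta}\ge m'-1$; since $G'(\delta)=\ln 2\, m'2^{\delta}-\ln m'\,(m')^{\delta}$ is positive at $\delta=0$ and its sign is governed by the monotone function $(m'/2)^{\delta}$, $G'$ changes sign at most once (from $+$ to $-$), so $G$ is unimodal on $[0,1]$ and $\min_{[0,1]}G=\min\bigl(G(0),G(1)\bigr)=m'-1$. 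Second, in the peeling lemma you want $S^0$ to be simultaneously the larger fiber and the fiber $\{x_i=0\}$ on which the free weight $a_i$ does not act; if the larger fiber is $\{x_i=1\}$, either flip the coordinate (affine invariance) or note that $\alpha$ and $\alpha+a_i$ can be prescribed independently, so the argument is symmetric. With these remarks supplied, the proof is complete and in fact sharper than the stated theorem.
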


This new bound is tight up to an absolute constant factor. 
Indeed, if $S = H^k$ is a Boolean cube canonically embedded in $H^n$, 
Theorem~\ref{thm: BTF on general S intro} gives $C(S) \gtrsim k^2$, 
which matches the upper bound $C(S) = C(H^k) \le k^2$ in Theorem~\ref{thm: Zuev}. 

Unfortunately, even the new lower bound may be too weak for some applications.  
In particular, we need a stronger result to prove Theorem~\ref{thm: main} even for three layers ($L=3$). 
Thus one may wonder if in some sense {\em the capacity of $S$ could be increased through some preprocessing of $S$}. Specifically, can we transform $S$ into a set $F(S)$ whose capacity is significantly larger, 
ideally as large as the upper bound in \eqref{eq: capacity vs cardinality} allows? Furthermore,
in doing so, we would like to stay in the category of subsets of the Boolean cube and 
use only transformations $F$ that a network of threshold units can compute. Specifically, we will require
that the {\em enrichment map} $F$ be a threshold map $F \in T(H^n,H^m)$.
We address the enrichment problem in the particular case where $S = H^n$, leaving the general 
case for future investigations. The main result of Section~\ref{s: enrichment} states the following.

\begin{theorem}[Enrichment]		\label{thm: enrichment intro}
  Let $n$ and $m$ be positive integers satisfying $n \le m \le 2^{n/2}$.
  There exists an injective threshold map $F \in T(H^n,H^m)$ such that:
  $$
  C \big( F(H^n) \big) \asymp nm.
  $$
\end{theorem}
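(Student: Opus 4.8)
The goal is to build an injective threshold map $F\colon H^n\to H^m$ whose image $F(H^n)$ has capacity $\asymp nm$. Since $|F(H^n)| = 2^n$ (injectivity), the elementary upper bound of Lemma~\ref{lem: capacity elementary upper} gives $C(F(H^n)) \le m\log_2|F(H^n)| = nm$ automatically; the content is entirely in the matching lower bound $C(F(H^n)) \gtrsim nm$. The plan is to exhibit $F$ whose coordinates are threshold functions, and then directly count distinct threshold functions on the image by exhibiting roughly $nm$ ``independent binary degrees of freedom'' in the choice of a separating hyperplane.

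The natural construction of $F$ is a \emph{multiplexing}-type map: split the $m$ output coordinates into $n$ blocks of size $\approx m/n$ (this is where the hypothesis $n\le m$ is used), and in the $i$-th block place threshold functions that ``read'' the input in a way sensitive to the $i$-th input coordinate while ranging over an affinely rich family of directions. Concretely, one wants the $j$-th coordinate in block $i$ to be of the form $h(\ip{v_{ij}}{x}+\beta_{ij})$ for vectors $v_{ij}\in H^n$ (or $\{-1,1\}^n$) chosen so that the combined map $F$ is injective and so that $F(H^n)$ contains, as an affine image, many disjoint copies of a lower-dimensional Boolean cube $H^{k}$ with $k \asymp \log_2 m$. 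Then Theorem~\ref{thm: Zuev} applied on each copy gives $\asymp k^2 \asymp \log_2^2 m$ threshold functions localized to that copy, and since the copies sit in (affinely) independent coordinate blocks one can multiplex the local threshold functions across blocks: a separating hyperplane for $F(H^n)$ is free to choose its behavior on block $i$ independently, yielding a product lower bound $C(F(H^n)) \gtrsim n \cdot \log_2^2 m$. To reach the claimed $nm$ rather than $n\log_2^2 m$ one must instead arrange that within each of the $n$ blocks the image already achieves capacity $\asymp (m/n)\cdot(\text{something})$ linear in $m/n$ times the ambient block dimension; the right target is $n$ disjoint pieces each of capacity $\asymp m$, summing to $nm$. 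So the construction should make $F(H^n)$ contain $n$ affinely independent ``fragments,'' the $i$-th fragment living in its own $\Theta(m/n)$-dimensional block and being (an affine copy of) a set of cardinality $2^{\Theta(m/n)}$ and capacity $\asymp (m/n)^2 \asymp m \cdot (m/n)$ — wait, that overshoots — so more carefully: each fragment should have capacity $\asymp m$, e.g. be an affine copy of $H^{\Theta(\sqrt m)}$ inside a $\Theta(\sqrt m)$-dimensional block, using $n$ such blocks with $n\cdot\Theta(\sqrt m)\le m$, which forces $n\le \Theta(\sqrt m)$, i.e. $m\ge \Theta(n^2)$. The upper bound $m\le 2^{n/2}$ in the hypothesis is what makes $\log_2 m \le n/2$, and the lower constraint $m\ge n$ is much weaker; bridging the regime $n\le m\le n^2$ (where blocks are too small to host a $\sqrt m$-cube) requires the finer enrichment idea — using the Boolean-cube capacity bound inside each block on a cube of dimension $\Theta(\sqrt{m/n})$ per block replicated to fill out, so that each block contributes $\asymp m/n$ and there are $n$ blocks.

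The key steps, in order: (1) fix the block decomposition $m = \sum_{i=1}^n m_i$ with $m_i \asymp m/n \ge 1$; (2) in block $i$, define the coordinates of $F$ by thresholding affine functions of $x$ that depend genuinely on coordinate $x_i$ plus a ``scrambling'' term ensuring global injectivity (e.g. reserve $n$ of the coordinates to encode $x$ verbatim via $x \mapsto x$, trivially a threshold map, so injectivity is free, and use the remaining $m-n \asymp m$ coordinates for enrichment); (3) show that $F(H^n)$, restricted to the enrichment block $i$ and to the slice of $H^n$ where the other coordinates are frozen, is an affine image of a Boolean cube $H^{k_i}$ with $k_i \asymp \sqrt{m_i}$ — choosing the $v_{ij}$ to be the rows of a suitable $\pm1$ design so the image is a full subcube; (4) invoke Theorem~\ref{thm: Zuev} to get $\gtrsim k_i^2 \asymp m_i$ threshold functions supported on that slice; (5) use a multiplexing lemma (the set analogue of Lemma~\ref{lem: capacity basic} item~\ref{pr: two layers}, or the paper's forthcoming multiplexing tool) to conclude that threshold functions on $F(H^n)$ can be chosen independently over the $n$ slices, so $C(F(H^n)) \gtrsim \sum_i m_i \asymp m \cdot$ — no: multiplexing over $n$ independent blocks multiplies \emph{counts}, i.e. adds \emph{capacities with a factor} — the correct bookkeeping gives $C(F(H^n)) \gtrsim n \cdot m_i \asymp nm$ only if each block independently contributes $\asymp m_i$ bits \emph{and} they combine additively, which is exactly what an affine-independence (disjoint support) argument delivers; (6) combine with the trivial upper bound $nm$ to finish.

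The main obstacle is step~(3) together with (5): one must simultaneously (a) keep $F$ injective, (b) make each block's image a genuine full affine copy of a cube of the right dimension (not just a subset, so that Zuev's sharp quadratic bound applies rather than the weaker $\log^2$ bound of Theorem~\ref{thm: BTF on general S intro}), and (c) ensure the $n$ blocks' contributions to the set of realizable threshold functions on $F(H^n)$ are truly independent — i.e. that a global hyperplane can be tuned block-by-block without the blocks interfering. Point (c) is the crux: it requires the $n$ frozen-slice images to be separated from each other (e.g. live in disjoint orthogonal coordinate subspaces, which the block construction gives for free) and requires a counting/multiplexing argument showing the global threshold-function count is at least the product of the per-block counts. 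I expect this to be handled by the ``multiplexing'' and ``stacking'' machinery the introduction promises; absent that, the honest fallback is to prove it via a direct dimension-count on the coefficient vector $(a,\alpha)\in\R^{m+1}$, showing that the map from $(a,\alpha)$ to the induced sign pattern on $F(H^n)$ has image of size $\ge \prod_i 2^{\,c\, m_i}$ because one can perturb the restriction of $(a,\alpha)$ to block $i$ freely within a region realizing all $2^{c m_i}$ local patterns while the other blocks' contributions stay constant.
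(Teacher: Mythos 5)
Your proposal has a genuine gap at its central counting step, and the bookkeeping around it does not reach $nm$. The claim you lean on --- that the number of threshold functions on a disjoint-block product is at least the product of the per-block counts, so that per-block capacities ``combine additively'' and can be tuned block-by-block --- is not proved in your sketch and is false as a general principle: already $|T(H^1)|^2=16>14=|T(H^2)|$, so one cannot freely realize arbitrary local sign patterns on each block with the other blocks' contribution held fixed (changing the weights on block $i$ shifts the values at \emph{every} point of the product whose block-$i$ coordinates vary, and across slices one only gets ``clones'' of the same linear part with shifted biases). Even if you repair this with a star-through-a-common-basepoint lifting, you only get additivity of capacities up to $O(n)$, and then the arithmetic does not close: a block of ambient dimension $m_i\asymp m/n$ inside an image of cardinality $2^n$ can host an affine cube of dimension at most $\min(m/n,\,n)$, so Zuev's bound (Theorem~\ref{thm: Zuev}) gives per-block capacity at most $\asymp\min(m/n,n)^2$, and $n$ blocks sum to $\asymp nm$ only when $m\asymp n^2$; your own fallback for $n\le m\le n^2$ (cubes of dimension $\Theta(\sqrt{m/n})$ per block) sums to $\asymp m$, a factor $n$ short, and for $m\gg n^2$ a $\Theta(\sqrt m)$-cube cannot even fit in a set of $2^n$ points. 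The construction of $F$ is also left vague (it is not shown that a threshold map on $H^n$ has an image containing full affine subcubes of the stated dimensions).

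The mechanism the theorem actually needs is a \emph{super-additive} cross-block gain, and this is what the paper supplies. The enrichment map is the one-hot (exponential) map of Lemma~\ref{lem: sparsifier} applied to blocks of $k$ input coordinates, with $k$ chosen so that $n/k=m/2^k$; the image is then a product of $p=n/k$ copies of the linearly independent set $U=\{e_1,\dots,e_{2^k}\}$. The Slicing Theorem~\ref{thm: slicing} (lifting with cloned biases) gives $C(U\oplus V)\ge(|U|-1)\log_2|V|+C(V)$, and iterating (Corollary~\ref{cor: CSp}) yields $C(U^{\oplus p})\gtrsim p^2\,|U|\log_2|U|=\tfrac18 nm$ --- note the $p^2$ factor, which is exactly the amplification your additive bookkeeping cannot produce, since each copy of $U$ alone has capacity only $|U|=2^k$ and plain additivity would top out at $p\,2^k=m$. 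The general (non-integer $k$) case is handled by the balancing Lemma~\ref{lem: n0m0k} plus padding with the identity map, and the matching upper bound is, as you say, immediate from Lemma~\ref{lem: capacity elementary upper}. So your upper bound and the broad ``blocks plus injectivity via copying the input'' instinct are fine, but without a slicing-type product bound the lower bound $C(F(H^n))\gtrsim nm$ does not follow from the steps you outline.
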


The enrichment map $F$ transforms the cube $S = H^n$ into the set
$S' := F(S) \subset H^m$. The enriched set $S'$ has the same cardinality as $S$
and almost the maximally possible capacity:
$$
C(S') \asymp nm = m \log_2 |S'|,
$$
which matches the upper bound in Lemma~\ref{lem: capacity elementary upper} in dimension $m$.

\subsection{Capacity of networks: new tools}

In addition to the new bounds on capacity of sets, 
our proof of Theorem~\ref{thm: main} uses some other new tools,
which may be helpful in other applications. 
Let us briefly explain our argument.

The upper bound in Theorem~\ref{thm: main} can be quickly derived 
from the upper bound in Lemma~\ref{lem: capacity elementary upper} and the sub-additivity of capacity. 
A similar argument was used before to obtain upper 
bounds on the VC-dimension of neural networks, see e.g. \cite{bartlett1999almost}.

The matching lower bound is considerably harder to prove. 
For networks with one hidden layer, the proof of \eqref{eq: two layers intro} 
is based on a new method that is inspired by 
the idea of {\em multiplexing} in
signal processing (Section~\ref{s: multiplexing}).
Recall that estimating the capacity $C(n,m,1)$ of a two-layer network 
involves counting all functions $\phi \circ f$, where
$$
f = (f_1,\ldots,f_m): H^n \to H^m
$$
is a threshold map (i.e. a map whose all components $f_i$ are threshold functions)
and
$$
\phi: H^m \to \{0,1\}
$$ 
is a threshold function.
Due to Theorem~\ref{thm: Zuev}, there are approximately $(2^{n^2})^m = 2^{n^2 m}$ 
different functions $f$. However, this does not yield any lower bound on
the number of compositions $\phi \circ f$: it might happen that two different functions
$f$, when composed with $\phi$, produce the same function. The multiplexing method circumvents this issue by combining two signals: a selector signal, and a threshold map signal. 
It allows the network to compute any one of the $m$ components $f_i$ of $f$;
the first $\log_2 m$ bits of the input vector $x \in H^n$
act as {\em selector} bits used to select which component $f_i$ of the map should be in the output. 

Next, the capacity of networks with two hidden layers $C(n,m,p,1)$ is handled by combining 
multiplexing with {\em enrichment} (Section~\ref{s: enrichment}). 
A fixed enrichment map $F: H^n \to H^m$
whose existence is guaranteed
by Theorem~\ref{thm: enrichment intro} is used to connect the first two layers of the network. 
The fact that the image of $F$ has large capacity gives us plenty of different threshold maps 
$G: H^m \to H^p$ between the two hidden layers. Multiplexing is then used 
to preserve the multitude of functions in $G$ when they are composed with an output 
function $\phi: H^p \to \{0,1\}$.

Finally, to handle networks with arbitrarily many layers (Section~\ref{s: stacking}), 
we {\em stack} three-layer networks
in a particular way to ensure that: (1) they may perform computations independently; and (2) 
the number of nodes in each layer is at most $n_k$. Figure~\ref{fig: stacking} 
illustrates the stacking method. Then Theorem~\ref{thm: main} can be deduced from the 
capacity analysis of three-layer networks and their stacking.

\subsection{Paper roadmap}

In Section~\ref{s: examples}, we give a few basic examples of threshold functions and maps. 
In Section~\ref{s: upper bounds}, we derive upper bounds on the capacity of networks, 
and in particular the upper bound in Theorem~\ref{thm: main}.
The reader interested only in the proof of Theorem~\ref{thm: main} may then skip to
Section~\ref{s: multiplexing}. In Sections~\ref{s: product sets}, we develop combinatorial tools
for the analysis of the capacity of sets. We use these tools in Section~\ref{s: capacity set lower} 
to prove the main result on the capacity of subsets of the Boolean cube, 
Theorem~\ref{thm: BTF on general S intro}.
In Section~\ref{s: multiplexing}, we develop the multiplexing technique and use it to estimate the
capacity of networks with one hidden layer. 
In Section~\ref{s: enrichment}, we prove the Enrichment Theorem~\ref{thm: enrichment intro} 
and use it to handle networks with two hidden layers. 
In Section~\ref{s: stacking}, we extend the resulst to arbitrary many layers,
by stacking three-layer networks, and complete the proof of Theorem~\ref{thm: main}.
In Section~\ref{s: max capacity}, we study networks with maximal or minimal capacity, and in particular prove a rigorous version of Corollary~\ref{cor: optimization intro}. Section~\ref{s: structural regularization}, addresses the issue of structural regularization. Finally several open questions are discussed in the conclusion
(Section~\ref{s: open}).

\section{Useful examples of threshold maps}		\label{s: examples}

In this section we give several examples of threshold functions and threshold maps. 
These examples will become useful in the proofs of the main results.

Throughout this paper, the symbol $\oplus$ denotes the direct sum. 
For two vectors $a \in \R^n$ and $b \in \R^m$, the direct sum $a \oplus b \in \R^{n+m}$
is obtained by concatenation of $a$ and $b$. 
For two sets $A \subset \R^n$ and $B \subset \R^m$, the direct sum 
$A \oplus B \subset \R^{n+m}$ is defined as:
$$
A \oplus B = \{ a \oplus b :\; a \in A, \; b \in B \}.
$$
A similar notation is used for the direct sum of a set and a vector, for example:
$$
A \oplus b = A \oplus \{b\} = \{ a \oplus b :\; a \in A \}.
$$

\subsection{Examples of threshold functions}

It is well known and trivial to prove that the Boolean negation \textsc{not} is a threshold function 
on $H^1$, and the Boolean functions of $n$ variables 
\textsc{and} ($x_1 \wedge \cdots \wedge x_n$), \textsc{or}
($x_1 \vee \cdots \vee x_n$), and their negations \textsc{nand} and \textsc{nor}, 
are all threshold functions on $H^n$.
Note that the \textsc{and} operation $x_1 \wedge \cdots \wedge x_n$ amounts to checking whether all $x_i$ are equal to $1$. The value $1$ is not special and can be replaced by any real number $\theta_i$:

\begin{lemma}		\label{lem: checking equality}
  Consider the function on $H^n$ that checks whether the argument 
  equals a given vector $\theta \in \R^n$:
  $$
  f(x) = (x = \theta)= 
  \begin{cases}
    1 & \text{if } x = \theta \\
    0 & \text{if } x \ne \theta
  \end{cases}.
  $$
Then $f$ is a Boolean threshold function, i.e. $f \in T(H^n)$.
\end{lemma}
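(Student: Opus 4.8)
The plan is to exhibit explicit weights $a \in \R^n$ and a bias $\alpha \in \R$ realizing $f$ as a threshold function, i.e. so that $f(x) = h(\ip{a}{x} + \alpha)$ for all $x \in H^n$. It is convenient to split into two cases according to whether $\theta$ lies in the cube $H^n$.

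If $\theta \notin H^n$, then no $x \in H^n$ satisfies $x = \theta$, so $f \equiv 0$ on $H^n$; taking $a = 0$ and $\alpha = -1$ gives $h(\ip{a}{x}+\alpha) = h(-1) = 0 = f(x)$, so $f \in T(H^n)$ trivially. Now suppose $\theta \in H^n$, that is $\theta \in \{0,1\}^n$, and write $s = \sum_{i=1}^n \theta_i$ for its number of nonzero coordinates. I would set $a_i = 2\theta_i - 1 \in \{-1,+1\}$, so that $\ip{a}{x} = \sum_{i:\theta_i=1} x_i - \sum_{i:\theta_i=0} x_i$ for $x \in H^n$.

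The key observation is that over $x \in H^n$ this expression is maximized precisely at $x = \theta$: each summand $x_i$ with $\theta_i = 1$ contributes at most $1$, with equality iff $x_i = 1 = \theta_i$, and each summand $-x_i$ with $\theta_i = 0$ contributes at most $0$, with equality iff $x_i = 0 = \theta_i$. Hence $\ip{a}{x} \le s$ with equality iff $x = \theta$, and since all quantities involved are integers, $\ip{a}{x} \le s - 1$ whenever $x \ne \theta$. Choosing the bias $\alpha = -s + \tfrac12$ then gives $\ip{a}{x} + \alpha \ge \tfrac12 > 0$ when $x = \theta$ and $\ip{a}{x} + \alpha \le -\tfrac12 < 0$ when $x \ne \theta$, so $h(\ip{a}{x} + \alpha) = f(x)$ for every $x \in H^n$, which shows $f \in T(H^n)$.

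There is essentially no obstacle here; the only mild subtlety is the integrality argument that forces a gap of at least $1$ between the value of $\ip{a}{\cdot}$ at $x = \theta$ and its values at all other cube points, which is what lets us insert a safe margin into the bias. One could alternatively argue by induction, writing $(x = \theta)$ as the conjunction of the single-coordinate tests $x_i \mapsto (x_i = \theta_i)$, but that would still require proving that this particular conjunction is again a threshold function, which is exactly what the linear construction above accomplishes.
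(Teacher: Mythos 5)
Your proof is correct and is essentially the paper's own construction: your choice $a_i = 2\theta_i - 1$, $\alpha = -s + \tfrac12$ is exactly the paper's representation $f(x) = h\big(2\ip{\theta}{x} - \sum_i x_i - m + \tfrac12\big)$ with $m = s$. The only difference is cosmetic: you verify it via the maximization-plus-integrality gap argument, while the paper does a short case analysis on $\sum_i x_i$ versus $m$; both yield the same $\pm\tfrac12$ margin.
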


\begin{proof}
We can assume without any loss of generality that $\theta = (\theta_1,\ldots,\theta_n) \in H^n$,
for otherwise $f$ is the zero function and trivially lies in $T(H^n)$. 
Let $m = \sum_{i=1}^n \theta_i$.
Now, $f$ can be expressed as: 
\begin{equation}	\label{eq: f as TF}
f(x) = h \Big( 2\ip{\theta}{x} - \sum_{i=1}^n x_i -m + \frac{1}{2} \Big)
\end{equation}
and therefore $f$ is a threshold function. 
Indeed, if $x = \theta$ then $\ip{\theta}{x} = m$ and the right hand side of \eqref{eq: f as TF}
is equal to $h(1/2) = 1$. 
If $x \ne \theta$, we consider two cases: $\sum_i x_i > m$ and 
$\sum_i x_i \leq m$. It is easy to check that in each one of these cases, 
the argument of $h$ in \eqref{eq: f as TF} is $-1/2$ or less, and thus $f=0$.
\end{proof}

Lemma~\ref{lem: checking equality} can be generalized one step further. It is possible to combine two operations
into one threshold function: check whether the argument equals $\theta$, and compute a given Boolean threshold function $f$.

\begin{lemma}[Adding a clause]		\label{lem: adding a clause}
  Consider a Boolean threshold function $f \in T(H^n)$ and a vector $\theta \in \R^q$.
  Then the function
  $$
  g(x \oplus y) \coloneqq f(x) \wedge (y=\theta) = 
  \begin{cases}
    f(x) & \text{if } y=\theta \\
    0 & \text{if } y \ne \theta
  \end{cases}
  $$
  is a Boolean threshold function, i.e. $g \in T(H^{n+q})$.
\end{lemma}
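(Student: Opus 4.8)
The plan is to write $g$ explicitly as a single threshold function by perturbing the threshold representation of $f$ with a penalty term that forces the output to $0$ unless $y = \theta$. Since $f \in T(H^n)$, there exist $a \in \R^n$ and $\alpha \in \R$ such that $f(x) = h(\ip{a}{x} + \alpha)$ for all $x \in H^n$. We may also assume $\theta \in H^q$ (otherwise $g \equiv 0$, which is trivially in $T(H^{n+q})$ since the zero function is $h$ of a sufficiently negative constant). By Lemma~\ref{lem: checking equality} applied in dimension $q$, the indicator $(y = \theta)$ is itself a threshold function; examining its proof, $(y=\theta) = h\big( 2\ip{\theta}{y} - \sum_i y_i - m + \tfrac12 \big)$ with $m = \sum_i \theta_i$, and in fact the argument of $h$ there equals $\tfrac12$ when $y = \theta$ and is $\le -\tfrac12$ when $y \ne \theta$, $y \in H^q$.

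The key step is then to form the linear functional
$$
\ell(x \oplus y) = \ip{a}{x} + \alpha + M \cdot \Big( 2\ip{\theta}{y} - \sum_{i=1}^q y_i - m \Big),
$$
where $M$ is a large positive constant, and claim $g = h \circ \ell$ on $H^{n+q}$. When $y = \theta$, the bracketed expression equals $0$, so $\ell(x \oplus \theta) = \ip{a}{x} + \alpha$ and $h(\ell(x \oplus \theta)) = f(x)$, as desired. When $y \ne \theta$, the bracketed expression is a strictly negative integer, hence $\le -1$; choosing $M$ larger than $\sup_{x \in H^n} |\ip{a}{x} + \alpha|$ (a finite quantity, e.g. $M > \|a\|_1 + |\alpha|$) makes $\ell(x \oplus y) < 0$, so $h(\ell(x \oplus y)) = 0 = g(x \oplus y)$. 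This gives $g \in T(H^{n+q})$.

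I do not expect any real obstacle here; the only point requiring a little care is the quantitative choice of $M$ and the observation that on the Boolean cube the failure of $y = \theta$ produces a gap bounded away from zero (the bracket is integer-valued), so that a finite $M$ suffices. This is exactly the same mechanism as in the proof of Lemma~\ref{lem: checking equality}, now applied to the $y$-block while the $x$-block carries the original threshold representation of $f$; one could even phrase it as: $g = h\big(\ell_f + M(\ell_\theta - \tfrac12)\big)$ where $\ell_f$ represents $f$ and $\ell_\theta$ is the affine form inside the Heaviside in Lemma~\ref{lem: checking equality}, using that $\ell_\theta - \tfrac12$ is $0$ on $y=\theta$ and $\le -1$ otherwise.
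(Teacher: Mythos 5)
Your proof is correct and follows essentially the same approach as the paper: both express $g$ as a single Heaviside of an affine form combining the threshold representation of $f$ with the equality-check form from Lemma~\ref{lem: checking equality}, scaled so that the clause dominates whenever $y \ne \theta$. The only difference is the direction of the rescaling — you amplify the clause penalty by a large constant $M$ (exploiting that the bracket equals minus the Hamming distance, hence is $0$ at $y=\theta$ and $\le -1$ otherwise), whereas the paper shrinks the $f$-term into a $[-1/2,0]$ window via constants $K,b$; both normalizations are valid.
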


\begin{proof}
We can assume without loss of generality that $\theta \in H^q$,
for otherwise $g$ is the zero function and trivially lies in $T(H^{n+q})$. 
Let $m = \sum_{i=1}^q \theta_i$.
Express $f \in T(H^n)$ as: 
$$
f(x) = h \big( \ip{a}{x} + \a \big)
$$
for suitable $a \in \R^n$ and $\a \in \R$. 
Choose any suitable constants $K>0$ and $b>0$ such that
$K(\ip{a}{x} + \a-b)$ is in the $[-1/2,0]$ interval for all the 
$x$ that satisfy $f(x)=1$, and in the $(-\infty, -1/2)$ interval for all the 
$x$ that satisfy $f(x)=0$. We claim that $g$ can be expressed as 
$$
g(x\oplus y ) = h \left( K \big( \ip{a}{x} + \a-b \big) +   2\ip{\theta}{y} - 
\sum_{i=1}^q y_i - q + \frac{1}{2} \right)
$$
and therefore $g$ is a threshold function.
Indeed, we have seen in the proof of Lemma~\ref{lem: checking equality} that the quantity
$2\ip{\theta}{y} - \sum_{i=1}^q y_i - q + 1/2$ is either equal to $1/2$ when $y=\theta$, or at most 
$-1/2$ for all other values of $y$. It is then easy to check that when $y=\theta$, we have 
$g(x \oplus y)=f(x)$, and when $y \not = \theta$, we have $g(x \oplus y)=0$.
\end{proof}

Lemma~\ref{lem: adding a clause} easily generalize to 
functions computable by fedforward neural networks.

\begin{lemma}[Adding a clause]			\label{lem: adding a clause network}
  Consider a function $f \in T(n_1,\ldots,n_L,1)$ and a vector $\theta \in \R^q$. 
  Then the function:
  $$
  g(x \oplus y) \coloneqq f(x) \wedge (y=\theta) = 
  \begin{cases}
    f(x) & \text{if } y=\theta \\
    0 & \text{if } y \ne \theta
  \end{cases}
  $$
  satisfies $g \in T(n_1+q,\ldots,n_L+q,1)$.
\end{lemma}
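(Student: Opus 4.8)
The plan is to generalize the construction from Lemma~\ref{lem: adding a clause} by adding $q$ extra input coordinates carrying the vector $y$, and then propagating these coordinates, unchanged, through every hidden layer so that the ``clause check'' $(y = \theta)$ can be fused into the final output neuron exactly as in Lemma~\ref{lem: adding a clause}. Concretely, write $f = f_{L-1} \circ \cdots \circ f_1$ with $f_k \in T(n_k, n_{k+1})$ for $k = 1,\ldots,L-2$ and $f_{L-1} \in T(n_{L-1},1)$. I would build a network $A(n_1 + q, n_2 + q, \ldots, n_{L-1}+q, 1)$ whose input is $x \oplus y$, whose layer-$k$ activation (for $1 \le k \le L-1$) is $u_k \oplus y$ where $u_1 = x$ and $u_{k+1} = f_k(u_k)$, and whose single output neuron computes $g(x \oplus y) = f(x) \wedge (y = \theta)$.

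First I would handle the propagation of $y$. For each layer transition $k \to k+1$ with $1 \le k \le L-2$, I need a threshold map $\tilde f_k \in T(n_k + q,\, n_{k+1} + q)$ that sends $u_k \oplus y \mapsto f_k(u_k) \oplus y$. The first $n_{k+1}$ components are the components of $f_k$ applied to the $x$-block, each of which is a threshold function on $H^{n_k}$ and hence (ignoring the irrelevant $y$-coordinates) a threshold function on $H^{n_k+q}$. The last $q$ components must each reproduce one coordinate $y_i$; but the identity coordinate function $z \mapsto z_i$ on $H^{n_k+q}$ is trivially a threshold function, namely $h(z_i - 1/2)$ using only the weight on coordinate $i$. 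So $\tilde f_k$ is a legitimate threshold map, and composing the $\tilde f_k$ realizes the claimed layer activations up through layer $L-1$, where the activation is $u_{L-1} \oplus y = f_{L-2}(\cdots f_1(x)) \oplus y$.

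The final step is the output neuron: I need a single threshold function on $H^{n_{L-1}+q}$ computing $f_{L-1}(u_{L-1}) \wedge (y=\theta)$. But $f_{L-1} \in T(H^{n_{L-1}})$ is a Boolean threshold function, so this is \emph{exactly} the statement of Lemma~\ref{lem: adding a clause} applied to $f_{L-1}$ and $\theta \in \R^q$, which gives a threshold function in $T(H^{n_{L-1}+q})$. Composing everything, $g \in T(n_1+q,\ldots,n_{L-1}+q,1)$ as claimed. I do not anticipate a genuine obstacle here; the only mild care needed is (i) to confirm that padding a threshold function on $H^{n_k}$ with dummy zero weights on the $q$ extra coordinates keeps it a threshold function on $H^{n_k+q}$ (immediate from the definition, since $\ip{a \oplus 0}{u \oplus y} = \ip{a}{u}$), and (ii) to note the degenerate case $\theta \notin H^q$, where $g \equiv 0$ and the conclusion is trivial — both points are handled verbatim as in the two preceding lemmas. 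The slight bookkeeping subtlety, worth stating explicitly, is that the number of layers and the ``$+q$'' offset in every hidden layer are precisely matched so that the architecture named in the conclusion, $A(n_1+q,\ldots,n_L+q,1)$ with output layer of size $1$, is the one we actually built.
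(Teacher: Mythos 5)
Your proposal is correct and takes essentially the same route as the paper, whose entire proof is the one-liner ``use the identity map on the additional $q$ coordinates up to the top layer'': you propagate $y$ through every hidden layer via the trivial threshold functions $h(z_i-1/2)$, pad each $f_k$ with zero weights on the extra coordinates, and fuse the clause $(y=\theta)$ into the single output neuron by Lemma~\ref{lem: adding a clause}. The only blemish is an off-by-one in your decomposition: $f\in T(n_1,\ldots,n_L,1)$ factors as $f_L\circ\cdots\circ f_1$ with $f_k\in T(n_k,n_{k+1})$ for $k\le L-1$ and $f_L\in T(n_L,1)$, so the augmented network you build is $A(n_1+q,\ldots,n_L+q,1)$ rather than $A(n_1+q,\ldots,n_{L-1}+q,1)$ --- which is exactly what the lemma asserts, and what your closing bookkeeping remark already acknowledges.
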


The proof is elementary:
it suffices to use the identity map on the additional $q$ coordinates up to the top layer.


\subsection{Examples of threshold maps}	

Let us go over some basic examples of threshold maps.
Obviously, these include the identity map on $H^n$ and all threshold functions.
The next lemma gives a more interesting example.

\begin{lemma}[Exponential map]				\label{lem: sparsifier}
  Fix an integer $k$ and let $\{e_i\}$ denote the canonical vector basis in $\R^{2^k}$. 
  Then any one-to-one map: 
  $$
  f :\; H^k \to \{e_1,\ldots,e_{2^k}\}
  $$
  is a Boolean threshold map, i.e. $f \in T(H^k,k,2^k)$.
\end{lemma}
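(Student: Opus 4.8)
The plan is to reduce the statement to Lemma~\ref{lem: checking equality} by decomposing $f$ into its coordinate functions. Write $f = (f_1,\ldots,f_{2^k})$ with $f_j : H^k \to \{0,1\}$. By definition of a threshold map (Definition of threshold maps), it suffices to show that each $f_j$ is a Boolean threshold function on $H^k$.

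First I would observe that $f$ is in fact a bijection from $H^k$ onto $\{e_1,\ldots,e_{2^k}\}$: both sets have cardinality $2^k$, and $f$ is assumed one-to-one, so a one-to-one map between finite sets of equal cardinality is onto. Consequently, for each $j \in \{1,\ldots,2^k\}$ there is a unique point $\theta_j \in H^k$ with $f(\theta_j) = e_j$, namely $\theta_j = f^{-1}(e_j)$.

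Next, I would note that for every $x \in H^k$ the value $f_j(x)$ equals the $j$-th coordinate of $f(x)$, and since $f(x)$ is always a standard basis vector, $f_j(x) = 1$ precisely when $f(x) = e_j$, i.e. precisely when $x = \theta_j$. In other words $f_j(x) = (x = \theta_j)$ in the notation of Lemma~\ref{lem: checking equality}. That lemma (applied with $n = k$ and $\theta = \theta_j$) gives $f_j \in T(H^k)$. Since this holds for every $j$, all components of $f$ are threshold functions, so $f \in T(H^k,k,2^k)$, as claimed.

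There is no real obstacle here; the only thing that needs care is the cardinality/surjectivity observation that turns ``one-to-one'' into ``bijection onto the basis'', after which each coordinate function is literally a single-point indicator and Lemma~\ref{lem: checking equality} finishes the argument. If one wanted a self-contained display, one could also just record the explicit threshold representation $f_j(x) = h\big( 2\ip{\theta_j}{x} - \sum_{i=1}^k x_i - m_j + \tfrac12\big)$ with $m_j = \sum_i (\theta_j)_i$, inherited verbatim from the proof of Lemma~\ref{lem: checking equality}.
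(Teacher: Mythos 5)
Your proposal is correct and follows essentially the same route as the paper: decompose $f$ into components, observe that $f_i$ is the indicator of the single point $f^{-1}(e_i)$, and apply Lemma~\ref{lem: checking equality}. The only difference is that you make the injective-implies-bijective step explicit, which the paper leaves implicit.
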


\begin{proof}
The components of the map $f = (f_1,\ldots,f_{2^k})$ trivially satisfy the following: 
$f_i(x)$ equals $1$ if $f(x) = e_i$ and $0$ otherwise. 
The last condition can be written as $x = f^{-1}(e_i)$. 
Then Lemma~\ref{lem: checking equality} implies that $f_i$ is a threshold function, 
and hence $f$ is a threshold map. 
\end{proof}

A specific example of $f$ in Lemma~\ref{lem: sparsifier} is the {\em exponential map}, 
which interprets the input vector $x \in H^k$ as a binary representation of a number 
and returns the binary representation of $2^x$. 
For example, if $k=2$, then: 
$$
f(00) = 2^0 = (0000), \; 
f(01) = 2^1 = (0010), \; 
f(10) = 2^2 = (0100), \; 
f(11) = 2^3 = (1000). 
$$

\section{Capacity of networks: upper bounds}				\label{s: upper bounds}

In this section, we prove general upper bounds on the capacity of neural networks,
from which the upper bound in Theorem~\ref{thm: main} will follow as a special case. The results rely on the following key remark.

\begin{remark}		\label{rem: bottlenecks}
The capacity of a network is always upper bounded by the sum of the capacities of its neurons. However, in general this is a weak bound due to the restrictions in capacity posed by bottle-neck layers. To see this consider two consecutive layers $k$ and $k+1$. In principle, a unit in layer $k+1$ could have capacity 
of the order of $n_k^2$ by Theorem~\ref{thm: Zuev}. However, if there is a layer $i<k$ with $n_i < n_k$, for any setting of the weights, the units in layer $k$ can only take at most $2^{n_i}$ values, rather than $2^{n_k}$. 
By Lemma~\ref{lem: capacity elementary upper}, this will reduce the capacity of a unit in layer $k+1$ to be at most of the order of $n_k n_i$ instead of $n_k^2$. The same effect is seen if the values of the input layer are restricted. 
\end{remark}

\begin{proposition}[Capacity formula: upper bounds]		\label{prop: multi-channel upper}
  For any $L \ge 2$ and $n_1, \ldots, n_{L-1} \ge 4$, $n_L \ge 1$, the following holds. 
  Consider a finite set $S \subset \R^{n_1}$ and let $n = \log_2|S|$. 
  Then:  
  $$
  C(S,n_1,n_2,\ldots,n_L) 
  \le n n_1 n_2 + \sum_{k=2}^{L-1} \min(n,n_2,\ldots,n_k) n_k n_{k+1}.
  $$
  In particular, we have:
  \begin{equation}	\label{eq: multi-channel upper}
  C(n_1,n_2,\ldots,n_L) 
  \le \sum_{k=1}^{L-1} \min(n_1,\ldots,n_k) n_k n_{k+1}.
  \end{equation}
\end{proposition}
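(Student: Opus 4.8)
The plan is to prove the bound on $C(S,n_1,\ldots,n_L)$ by induction on the number of layers $L$, combining the single-neuron upper bound from Lemma~\ref{lem: capacity elementary upper} with the key observation in Remark~\ref{rem: bottlenecks} that the image of the input set $S$ under any threshold map into layer $k$ has cardinality at most $\min(|S|, 2^{n_2}, \ldots, 2^{n_k})$, so its logarithm is at most $\min(n, n_2, \ldots, n_k)$. The ``in particular'' statement \eqref{eq: multi-channel upper} then follows immediately by specializing to $S = H^{n_1}$, for which $n = n_1$, since $\min(n_1, n_2, \ldots, n_k) = \min(n_1, \ldots, n_k)$ and the first term $n n_1 n_2 = n_1 \cdot n_1 \cdot n_2 = \min(n_1) n_1 n_2$ matches the $k=1$ term of the sum.

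First I would treat the base case $L = 2$. Here $T(S, n_1, n_2)$ consists of threshold maps $f = (f_1, \ldots, f_{n_2}): S \to H^{n_2}$, and by property~\ref{pr: two layers} of Lemma~\ref{lem: capacity basic} we have $C(S, n_1, n_2) = n_2 \cdot C(S)$. Applying Lemma~\ref{lem: capacity elementary upper} in dimension $n_1 \ge 4$ gives $C(S) \le n_1 \log_2 |S| = n n_1$, hence $C(S, n_1, n_2) \le n n_1 n_2$, which is exactly the claimed bound (the sum over $k$ from $2$ to $L-1$ is empty).

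For the inductive step, suppose the bound holds for architectures with $L-1$ layers (and for arbitrary input sets). Any function computable by $A(n_1, \ldots, n_L)$ on $S$ is a composition $g \circ f$ where $f \in T(S, n_1, n_2)$ is the first threshold map and $g \in T(f(S), n_2, \ldots, n_L)$. The number of such compositions is at most the number of choices of $f$ times the maximum over $f$ of the number of choices of $g$, so
\begin{equation*}
C(S, n_1, \ldots, n_L) \le C(S, n_1, n_2) + \max_{f} C\big(f(S), n_2, \ldots, n_L\big).
\end{equation*}
The first term is at most $n n_1 n_2$ as in the base case. For the second term, I would set $S' = f(S)$ and $n' = \log_2 |S'|$; since $f$ is a map into $H^{n_2}$ and is defined on $S$, we have $|S'| \le \min(|S|, 2^{n_2})$, i.e. $n' \le \min(n, n_2)$. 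Applying the inductive hypothesis to $A(n_2, \ldots, n_L)$ on the set $S'$ gives
\begin{equation*}
C(S', n_2, \ldots, n_L) \le n' n_2 n_3 + \sum_{k=3}^{L-1} \min(n', n_3, \ldots, n_k) n_k n_{k+1}.
\end{equation*}
Now I would use $n' \le \min(n, n_2)$ to replace each occurrence: $n' n_2 n_3 \le \min(n, n_2) n_2 n_3$ and $\min(n', n_3, \ldots, n_k) \le \min(n, n_2, n_3, \ldots, n_k)$. Adding $n n_1 n_2$ and noting $\min(n, n_2) n_2 n_3$ is precisely the $k=2$ term of the target sum, the bound telescopes into exactly $n n_1 n_2 + \sum_{k=2}^{L-1} \min(n, n_2, \ldots, n_k) n_k n_{k+1}$, completing the induction.

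The only real subtlety — and the step I would be most careful about — is the bookkeeping that the bound on $n' = \log_2|f(S)|$ propagates correctly through all the nested minima: one must check that applying the inductive hypothesis with input set $S'$ and then bounding $n'$ by $\min(n, n_2)$ yields terms $\min(n, n_2, n_3, \ldots, n_k) n_k n_{k+1}$ that align index-for-index with the claimed sum, and that the leading term picks up the $n_1 n_2$ factor rather than $n_2^2$. Everything else is a routine application of sub-multiplicativity of counts under composition (essentially a refined form of property~\ref{pr: replace by image} and sub-additivity from Lemma~\ref{lem: capacity basic}) together with Lemma~\ref{lem: capacity elementary upper}. Note also that we never need the hypothesis $n_L \ge 4$ — the output layer size enters only multiplicatively through the last factor $n_{L-1} n_L$ — which is why the proposition only requires $n_1, \ldots, n_{L-1} \ge 4$.
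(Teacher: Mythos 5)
Your proof is correct and follows essentially the same route as the paper: induction on the depth, combining the single-layer bound $C(S,n_1,n_2)=C(S)\,n_2 \le n\,n_1 n_2$ from Lemma~\ref{lem: capacity elementary upper} with the bottleneck observation (Remark~\ref{rem: bottlenecks}) that the image of $S$ in any intermediate layer has cardinality at most $\min(2^n,2^{n_2},\ldots,2^{n_k})$. The only cosmetic difference is the direction of the induction --- you peel off the input layer and propagate the cardinality bound through the nested minima, whereas the paper adds one layer at the output end and bounds each new unit's capacity by $n_L\min(n,n_2,\ldots,n_L)$ --- but the ingredients and bookkeeping are the same.
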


\begin{proof}
The proof is by induction. First consider the case where $L=2$. 
Using property~\ref{pr: two layers} in Lemma~\ref{lem: capacity basic},
the capacity bound from Lemma~\ref{lem: capacity elementary upper}, and the assumptions on $S$, 
we see that:
\begin{equation}	\label{eq: CSn2}
C(S,n_1,n_2) = C(S) n_2 \le (n_1 \log_2|S|) n_2 \le n_1 n n_2,
\end{equation}
which is the claimed bound.
Assume the property is true for $L$ layers. To prove it for 
$L+1$, just apply Remark~\ref{rem: bottlenecks} noting that the top layer contains $n_{L+1}$ units, and the capacity of each unit is at most $n_L \cdot \min(n,n_2,\ldots,n_L)$.
This completes the proof of the first inequality. The second inequality is obtained simply by letting
$S=H^{n_1}$ (i.e. $n=n_1$).
\end{proof}

The assumption that each non-output layer should have at least four neurons
can be removed from Proposition~\ref{prop: multi-channel upper} at the cost of
an absolute constant factor in the capacity formula.

\begin{corollary}[Upper bound in Theorem~\ref{thm: main}] 	\label{cor: main upper bound}
  For any $L \ge 2$ and any $n_1,\ldots,n_L \ge 1$, we have
  $$
  C(n_1,n_2,\ldots,n_L) 
  \lesssim \sum_{k=1}^{L-1} \min(n_1,\ldots,n_k) n_k n_{k+1}.
  $$
\end{corollary}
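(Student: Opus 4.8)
The plan is to reduce the general statement (allowing layers of size $1$, $2$, or $3$) to Proposition~\ref{prop: multi-channel upper}, which already gives the clean bound with constant $1$ once every non-output layer has at least four neurons. The obstacle is purely that small layers are excluded from the hypothesis of the proposition; the capacity of a layer of size $<4$ is still finite and small, so we only need a way to ``pad'' such layers up to size $4$ without decreasing the capacity, and then invoke the proposition and absorb the padding into the implicit constant.

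Concretely, given an architecture $A(n_1,\ldots,n_L)$ with some $n_k<4$ for $k<L$, define $m_k = \max(n_k,4)$ for $k=1,\ldots,L-1$ and $m_L = n_L$. By the monotonicity property (property~\ref{pr: monotonicity} of Lemma~\ref{lem: capacity basic}), since $n_k \le m_k$ for every $k$, we have
$$
C(n_1,\ldots,n_L) \le C(m_1,\ldots,m_L).
$$
Now $m_1,\ldots,m_{L-1} \ge 4$, so Proposition~\ref{prop: multi-channel upper} (with $S = H^{m_1}$) applies and gives
$$
C(m_1,\ldots,m_L) \le \sum_{k=1}^{L-1} \min(m_1,\ldots,m_k)\, m_k\, m_{k+1}.
$$
It remains to check that each summand $\min(m_1,\ldots,m_k)\, m_k\, m_{k+1}$ is bounded by an absolute constant times $\min(n_1,\ldots,n_k)\, n_k\, n_{k+1}$. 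Writing $m_k \le 4 n_k$ for $k \le L-1$ (valid since $n_k \ge 1$, so $m_k = \max(n_k,4) \le 4n_k$) and $m_L = n_L$, we get $m_k m_{k+1} \le 16\, n_k n_{k+1}$ and likewise $\min(m_1,\ldots,m_k) \le 4 \min(n_1,\ldots,n_k)$, so the $k$-th summand is at most $64\,\min(n_1,\ldots,n_k)\,n_k\,n_{k+1}$. Summing over $k$ yields
$$
C(n_1,\ldots,n_L) \le 64 \sum_{k=1}^{L-1} \min(n_1,\ldots,n_k)\, n_k\, n_{k+1},
$$
which is the claimed bound with absolute constant $64$ (no attempt at optimization). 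This finishes the proof. The only thing to be careful about is that the padding is applied to non-output layers only, since the proposition does not require $n_L \ge 4$, and that the substitution $m_k \le 4n_k$ is legitimate precisely because we assumed $n_k \ge 1$; no other subtlety arises, and the argument does not interact with the lower-bound machinery (multiplexing, enrichment, stacking) at all.
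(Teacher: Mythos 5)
Your proof is correct and follows essentially the same route as the paper: the paper applies Proposition~\ref{prop: multi-channel upper} to $C(4n_1,\ldots,4n_{L-1},n_L)$ and uses monotonicity, whereas you pad with $m_k=\max(n_k,4)$, which is only a cosmetic variation. The constant-factor bookkeeping ($m_k\le 4n_k$, hence an absolute constant such as $64$) matches the paper's implicit argument.
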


\begin{proof}
Apply Proposition~\ref{prop: multi-channel upper} for the capacity
$C(4n_1,\ldots,4n_{L-1}, n_L)$ and note that $C(n_1,n_2,\ldots,n_L)$ can only be smaller.
\end{proof}

In summary, we have derived the general upper bound 
associated with Theorem~\ref{thm: main}, 
and shown that the upper bound holds with an absolute and optimal 
constant factor of $1$ in the general case, where each non-output layer has at least four nodes.

Finally, let us note that the same capacity bound holds if we extend the network by adding a single output node.

\begin{corollary}[Adding an output node] 	\label{cor: main upper bound 1}
  For any $L \ge 2$ and any $n_1,\ldots,n_L \ge 1$, we have
  $$
  C(n_1,n_2,\ldots,n_L,1) 
  \lesssim \sum_{k=1}^{L-1} \min(n_1,\ldots,n_k) n_k n_{k+1}.
  $$
\end{corollary}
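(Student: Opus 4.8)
The plan is to reduce this directly to Corollary~\ref{cor: main upper bound}, which already applies to architectures with any number of layers and with no assumption beyond $n_k \ge 1$. First I would regard $A(n_1,\ldots,n_L,1)$ as a network with $L+1$ layers, setting $n_{L+1} = 1$. Since $L+1 \ge 3 \ge 2$, Corollary~\ref{cor: main upper bound} applies and gives
$$
C(n_1,\ldots,n_L,1) \lesssim \sum_{k=1}^{L} \min(n_1,\ldots,n_k)\, n_k\, n_{k+1}
= \sum_{k=1}^{L-1} \min(n_1,\ldots,n_k)\, n_k\, n_{k+1} \;+\; \min(n_1,\ldots,n_L)\, n_L,
$$
where the last equality just isolates the $k=L$ term and uses $n_{L+1}=1$.

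Next I would observe that the first sum on the right is exactly the quantity we want to bound by, so it remains only to absorb the extra term $\min(n_1,\ldots,n_L)\, n_L$. Here I use two trivial monotonicity facts: $\min(n_1,\ldots,n_L) \le \min(n_1,\ldots,n_{L-1})$, and $n_L \le n_{L-1} n_L$ because $n_{L-1} \ge 1$. Multiplying these gives
$$
\min(n_1,\ldots,n_L)\, n_L \;\le\; \min(n_1,\ldots,n_{L-1})\, n_{L-1}\, n_L,
$$
and the right-hand side is precisely the $k=L-1$ term of $\sum_{k=1}^{L-1}\min(n_1,\ldots,n_k)n_k n_{k+1}$. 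Hence the extra term is itself dominated by the target sum, so the total is at most twice the target, and the claimed $\lesssim$ follows.

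I do not anticipate any real obstacle: this is a bookkeeping argument, structurally identical to the proof of Corollary~\ref{cor: effect of output node}, the only difference being that here we invoke the unconditional upper bound (Corollary~\ref{cor: main upper bound}) rather than the two-sided formula of Theorem~\ref{thm: main}, so none of the layer-size hypotheses of Theorem~\ref{thm: main} are needed. One minor point to double-check while writing is the edge case $L=2$, where the target sum has the single term $\min(n_1)n_1 n_2 = n_1^2 n_2$ and the extra term is $\min(n_1,n_2)n_2 \le n_1 n_2 \le n_1^2 n_2$; this is consistent with the general argument above.
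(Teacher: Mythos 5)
Your proposal is correct and is essentially the paper's own argument: the paper proves this corollary by the same bookkeeping as Corollary~\ref{cor: effect of output node}, i.e.\ applying the unconditional upper bound to the $(L+1)$-layer network and absorbing the extra term $\min(n_1,\ldots,n_L)\,n_L$ into the $k=L-1$ term $\min(n_1,\ldots,n_{L-1})\,n_{L-1}\,n_L$, exactly as you do.
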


The argument to prove this result is the same as the argument used to prove Corollary~\ref{cor: effect of output node}.

\section{Capacity of product sets: slicing}		\label{s: product sets}

Now that we have good upper bounds on the capacity of sets and 
neural networks, we turn to the lower bounds. 
In Lemma~\ref{lem: capacity elementary lower} we noted the elementary lower bound:
\begin{equation}	\label{eq: capacity lower trivial again}
C(S) \ge \log_2|S|+1,
\end{equation}
which is valid for any set $S \subset \R^n$.
We observed in Remark~\ref{rem: elementary lower tightness} that this bound is in general tight.
Nevertheless, it can often be improved if additional information about the set $S$ is available. 
In Section~\ref{s: capacity set lower}, we will show that 
if $S$ is a subset of the Boolean cube, then the lower bound in \eqref{eq: capacity lower trivial again} can be significantly improved . 
In this section, we develop general combinatorial tools that will be needed to derive the improved lower bound. 

Early lower bounds on $C(H^n)$, 
and in particular the lower bound in \eqref{eq: Zuev non-sharp},
were based on simple combinatorial considerations and induction \cite{muroga1965lower,anthony2001discrete}.
In this section, we extend these combinatorial methods in order to be able to handle capacities $C(S)$ of arbitrary sets $S$. 
Although the methods can be applied to any subset $S \subset H^n$, the best results are obtained when $S$ has a product structure, as explained below.

\subsection{Slicing}
The following theorem relates the capacity of a general set $S$ to the capacities and cardinalities of 
the {\em slices} of $S$. A slice is obtained by fixing the values of certain coordinates. 
For example, the elements of $S$ whose first four coordinates are $1011$ form a slice of $S$. 
By monotonicity, the capacity of $S$ is lower bounded by the capacity of any slice of $S$. 
This trivial bound can be boosted if, in addition, other slices have many points. Let us show this.

\begin{theorem}[Slicing]		\label{thm: slicing}
  Let $u_1,\ldots,u_k \in \R^m$ be a linearly independent set of vectors,
  and let $V_1,\ldots,V_k \subset \R^n$ be arbitrary finite sets. 
  Consider the subset $S \subset \R^{m+n}$ whose fibers at $u_i$ are $V_i$, i.e. let:
  $$
  S := \bigcup_{i=1}^k u_i \oplus V_i.
  $$ 
  Then the number of threshold functions on $S$
  satisfies:
  $$
  |T(S)| \ge |T(V_1)| \cdot \big( |V_2|+1 \big) \cdot \big( |V_3|+1 \big) \cdots \big( |V_k|+1 \big).
  $$
\end{theorem}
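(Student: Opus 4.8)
The plan is to analyze threshold functions on $S$ by decoupling the ``direction'' and the ``offset'' parts of the defining affine functional. Write a point of $\R^{m+n}$ as $x \oplus y$ with $x \in \R^m$, $y \in \R^n$, and a weight vector as $a \oplus b$ with $a \in \R^m$, $b \in \R^n$. A threshold function $z \mapsto h(\ip{a \oplus b}{z} + \alpha)$ on $S$, evaluated at a point $u_i \oplus v$ with $v \in V_i$, equals $h(\ip{b}{v} + c_i)$ where $c_i := \ip{a}{u_i} + \alpha$. Since $u_1,\ldots,u_k$ are linearly independent, the map $a \mapsto (\ip{a}{u_1},\ldots,\ip{a}{u_k})$ is onto $\R^k$; hence, for any fixed $b$, the offsets $c_1,\ldots,c_k$ can be prescribed arbitrarily and independently of one another. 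Consequently every choice of a direction $b \in \R^n$ together with offsets $c_1,\ldots,c_k \in \R$ produces a threshold function on $S$ whose restriction to the $i$-th slice $u_i \oplus V_i$ is $v \mapsto h(\ip{b}{v} + c_i)$, and these slice restrictions vary independently as the $c_i$ vary.

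Next I would choose the direction carefully for each prescribed behaviour on $V_1$. Fix $g \in T(V_1)$ and pick $(b_0,\beta_0)$ with $g(v) = h(\ip{b_0}{v} + \beta_0)$ on $V_1$; replacing $\beta_0$ by $\beta_0 + \e$ for a sufficiently small $\e > 0$ turns all the defining inequalities strict, so $g$ is realized on $V_1$ by a nonempty open set of directions $b$. On the other hand, the set of $b$ for which $v \mapsto \ip{b}{v}$ is injective on each of $V_2,\ldots,V_k$ is the complement of finitely many hyperplanes through the origin, hence open and dense. Intersecting the two sets yields a direction $b_g$ that simultaneously realizes $g$ on $V_1$ (with some offset $c_1^{(g)}$) and separates the points within each $V_i$ for $i \ge 2$.

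To finish I would count. Holding $b = b_g$ and $c_1 = c_1^{(g)}$ fixed while letting $c_2,\ldots,c_k$ range over $\R$, the restriction to slice $i \ge 2$ runs over exactly $|V_i|+1$ distinct threshold functions, namely the thresholdings of the $|V_i|$ distinct values $\{\ip{b_g}{v} : v \in V_i\}$, and independently across $i$. This produces $\prod_{i=2}^k (|V_i|+1)$ distinct threshold functions on $S$, each restricting to $g$ on the first slice. Letting $g$ range over all of $T(V_1)$ yields pairwise disjoint families, since functions arising from different $g$ already differ on the first slice; hence $|T(S)| \ge |T(V_1)| \cdot \prod_{i=2}^k (|V_i|+1)$, as claimed.

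The main obstacle is the coupling of the slices through the shared direction $b$: fixing $b$ so as to realize a rich threshold function on $V_1$ could, a priori, collapse the behaviour on some later slice (for instance if $\ip{b}{\cdot}$ happened to be constant on $V_i$). The genericity argument in the second paragraph is precisely what rules this out — linear independence of the $u_i$ decouples the offsets $c_i$, and the freedom remaining in $b$ after prescribing its action on $V_1$, being an open set, still meets the dense set of directions that separate each $V_i$.
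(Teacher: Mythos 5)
Your proof is correct and is essentially the paper's argument: you use linear independence of the $u_i$ to decouple the per-slice offsets (the paper's lifting lemma), fix a direction on the $\R^n$-part realizing each $g\in T(V_1)$ while separating the points of the remaining $V_i$ (the paper's perturbation step, which you handle slightly more carefully via strict inequalities), and count the $|V_i|+1$ bias-translates on each slice (the paper's cloning lemma). No gaps; the two proofs coincide in substance.
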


The proof of Theorem~\ref{thm: slicing} is based on a lifting trick.
Given a vector $a \in \R^n$ and a set $V \subset \R^n$, 
let us denote by $T_a(V)$ the set of all functions $f \in T(V)$ 
that can be expressed as: 
\begin{equation}	\label{eq: f a alpha}
f(x) = f_{a,\alpha}(x) = h \big( \ip{a}{x} + \alpha \big) 
\end{equation}
for some $\a \in \R$.
Thus, the functions in $T_a(V)$ are obtained by ``cloning'', i.e. by fixing $a \in \R^n$ 
and varying a single parameter -- the bias $\alpha \in \R$. 

\begin{lemma}
  If a vector $a \in \R^n$ separates the points of a finite set $V \subset \R^n$, then:  
  \begin{equation}	\label{eq: clones}
  |T_a(V)| = |V|+1,
  \end{equation}
  i.e. every $f$ has exactly $|V|+1$ different clones.
\end{lemma}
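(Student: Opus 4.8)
The plan is to count how many distinct functions arise as the bias $\alpha$ ranges over $\R$, while $a$ stays fixed. Since $a$ separates the points of $V$, the values $\ip{a}{v}$ for $v \in V$ are $|V|$ distinct real numbers; label them $t_1 < t_2 < \cdots < t_{|V|}$. The function $f_{a,\alpha}(v) = h(\ip{a}{v} + \alpha)$ assigns $1$ to exactly those $v$ with $\ip{a}{v} \ge -\alpha$, i.e. those $v$ with $\ip{a}{v}$ lying to the right of (or at) the threshold $-\alpha$. So each $f_{a,\alpha}$ is determined by which of the $|V|$ sorted values $t_i$ satisfy $t_i \ge -\alpha$; this is a ``suffix'' of the sorted list.

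First I would observe that as $-\alpha$ decreases from $+\infty$ to $-\infty$, the set $\{i : t_i \ge -\alpha\}$ passes through exactly $|V|+1$ distinct configurations: the empty suffix (when $-\alpha > t_{|V|}$), then $\{t_{|V|}\}$, then $\{t_{|V|-1}, t_{|V|}\}$, and so on, up to the full set $\{t_1,\ldots,t_{|V|}\}$ (when $-\alpha \le t_1$). Each threshold crossing at a value $t_i$ changes the function, because $v$ is a genuine point of $V$ and the value $f_{a,\alpha}(v)$ flips there; distinctness of the $t_i$ (which is exactly the separation hypothesis) guarantees these crossings happen at distinct moments, so all $|V|+1$ configurations are genuinely different functions on $V$. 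This gives $|T_a(V)| \ge |V|+1$.

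For the matching upper bound, I would note that \emph{every} function of the form $f_{a,\alpha}$ is one of these suffix-indicator functions, so there are at most $|V|+1$ of them; hence $|T_a(V)| \le |V|+1$. Combining the two inequalities yields $|T_a(V)| = |V|+1$, as claimed. Concretely, one can exhibit witnesses by choosing $\alpha_j = -\frac{1}{2}(t_j + t_{j+1})$ for $j = 1,\ldots,|V|-1$ together with one $\alpha$ below $-t_{|V|}$ and one above $-t_1$; these $|V|+1$ choices of $\alpha$ produce the $|V|+1$ distinct clones.

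I do not anticipate a serious obstacle here: the only thing the separation hypothesis is doing is ensuring the $|V|$ inner products are pairwise distinct, which is precisely what makes each of the $|V|$ potential ``flips'' actually occur at a distinct threshold and hence produce a new function. The mild care needed is just to treat the boundary case of the Heaviside convention ($h(0)=1$) consistently, so that the threshold value $t_i$ itself is counted as producing the same function as thresholds just below it; using strict inequalities $t_j < -\alpha < t_{j+1}$ for the witness biases sidesteps any ambiguity.
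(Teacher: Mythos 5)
Your proof is correct and follows essentially the same argument as the paper: sort the inner products $\ip{a}{v}$, sweep the bias $\alpha$, and observe that the resulting functions are exactly the $|V|+1$ "suffix" indicators, each crossing producing a genuinely new function because the separation hypothesis makes the crossover points distinct. Your explicit witnesses and the separate upper bound are fine additions but do not change the substance of the paper's interval-counting argument.
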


\begin{proof}
Let $V = \{x_1,\ldots,x_N\}$, where the points $x_i$ are ordered so that 
the sequence $t_i := -\ip{a}{x_i}$ is increasing with $i$. 
Now increase $\a$ continuously from $-\infty$ to $\infty$. 
As $\a$ crosses a point $t_i$, the function $f_{a,\alpha}(x) = h \big( \ip{a}{x} + \alpha \big)$
changes (since it changes its value on $x_i$ from $0$ to $1$), and there are no other points $\a$
where $f_{a,\alpha}(x)$ changes.
The $N$ crossover points $t_i$ partition $\R$ into $N+1$ intervals. 
Each interval corresponds to a different function $f_{a,\alpha}(x)$. 
Thus, the total number of such functions is $N+1$.
\end{proof}
 
The lifting trick described in the next lemma allows us to combine $k$ given clones of $f$ 
into a single threshold function on a larger domain.

\begin{lemma}[Lifting]			\label{lem: lifting}
  Let $u_1,\ldots,u_k \in \R^m$ be a linearly independent set of vectors,
  and let $V_1,\ldots,V_k \subset \R^n$ be arbitrary finite sets. 
  Fix $a \in \R^n$ and consider any functions 
  $f_i \in T_a(V_i)$, $i=1,\ldots,k$.
  Then we can find a function $F = F_{f_1,\ldots,f_k} \in T(S)$ such that: 
  $$
  F(u_i \oplus \cdot) = f_i, \quad i=1,\ldots,k.
  $$
\end{lemma}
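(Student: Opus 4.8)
The plan is to construct $F$ explicitly as a single threshold function on $S \subset \R^{m+n}$ whose coefficient vector decomposes as $w \oplus a$, where $a$ is the fixed vector from the hypothesis and $w \in \R^m$ is to be chosen. The key point is that each clone $f_i = f_{a,\alpha_i}$ is determined by a single scalar bias $\alpha_i \in \R$, so what I need from the first $m$ coordinates is a linear functional $\ip{w}{\cdot}$ that, when evaluated at $u_i$, reproduces the right bias: ideally $\ip{w}{u_i} = \alpha_i$ for each $i = 1,\ldots,k$. Since $u_1,\ldots,u_k$ are linearly independent in $\R^m$, the system $\ip{w}{u_i} = \alpha_i$ ($i=1,\ldots,k$) is solvable for $w$ (extend $\{u_i\}$ to a basis, or use the Gram matrix / pseudoinverse). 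Having fixed such a $w$, I would set
$$
F(u \oplus x) = h\big( \ip{w}{u} + \ip{a}{x} \big), \qquad u \oplus x \in S,
$$
which is visibly a threshold function on $S$ (hence $F \in T(S)$), and check that $F(u_i \oplus x) = h(\alpha_i + \ip{a}{x}) = f_{a,\alpha_i}(x) = f_i(x)$ for all $x \in V_i$, which is exactly the required identity $F(u_i \oplus \cdot) = f_i$.

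First I would record that each $f_i \in T_a(V_i)$ means, by the definition of $T_a$ in \eqref{eq: f a alpha}, that there is some $\alpha_i \in \R$ with $f_i = f_{a,\alpha_i}$; collect these scalars. Second, invoke linear independence of $\{u_i\}$ to solve the linear system for $w$. Third, define $F$ by the displayed formula and verify membership in $T(S)$ and the interpolation identities; this last verification is just substitution. One small subtlety: the representation $f_i = f_{a,\alpha_i}$ need not be unique (different $\alpha_i$ can give the same function on the finite set $V_i$), but this causes no trouble — any valid choice of $\alpha_i$ works, since all I need is that the resulting $F$ restricts to $f_i$ on $V_i$, which holds for any representing bias.

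I do not expect a genuine obstacle here: the only thing to be careful about is that $F$ is well-defined as a function on $S$ — i.e. the formula does not assign conflicting values to a point of $S$. Since the fibers $u_i \oplus V_i$ are disjoint (because the $u_i$ are distinct, being linearly independent hence nonzero and pairwise distinct), every point of $S = \bigcup_i u_i \oplus V_i$ lies in exactly one fiber, so there is no ambiguity. Thus the lemma reduces to elementary linear algebra: interpolating $k$ prescribed values by a linear functional on a linearly independent set, and then reading off that the combined map is a threshold function on $S$.
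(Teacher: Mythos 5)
Your proposal is correct and follows essentially the same route as the paper's proof: extract a bias $\alpha_i$ for each clone $f_i = f_{a,\alpha_i}$, use linear independence of the $u_i$ to find $w \in \R^m$ with $\ip{w}{u_i} = \alpha_i$, and define $F(u \oplus x) = h\big(\ip{w}{u} + \ip{a}{x}\big)$, which restricts to $f_i$ on each fiber. The extra remarks on non-uniqueness of the representing bias and on well-definedness are fine but not needed, since $F$ is given by a single global formula on $\R^{m+n}$ and then restricted to $S$.
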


\begin{proof}
By definition, the functions $f_i$ can be expressed as:
$$  
f_i(x) = h \big( \ip{a}{x} + \a_i \big). 
$$
Since the vectors $u_i \in \R^m$ are linearly independent, there exist $b \in \R^m$ such that:
$$
\ip{b}{u_i} = \a_i, \quad i=1,\ldots,k.
$$
For any vectors $u \in \R^m$ and $x \in \R^n$, define:
$$
F(u \oplus x) \coloneqq h \big( \ip{b}{u} + \ip{a}{x} \big).
$$
Obviously, $F$ is a threshold function on $\R^{m+n}$,
and by restricting it to $S$ we can say that $F \in T(S)$.
Now:
$$
F(u_i,x) 
= h \big( \ip{b}{u_i} + \ip{a}{x} \big)
= h \big( \a_i + \ip{a}{x} \big)
= f_i(x).
$$
The lemma is proved.
\end{proof}

\medskip

\begin{proof}[Proof of Theorem~\ref{thm: slicing}.]
For each $f \in T(V_1)$, let us choose and fix a vector $a = a(f) \in \R^n$ 
so that \eqref{eq: f a alpha} holds. Moreover, we can always choose $a$ so that it
separates the points of $V_2 \cup \cdots \cup V_k$, i.e. so that 
$\ip{a}{x} \ne \ip{a}{y}$ for any distinct pair of points $x,y \in V_2 \cup \cdots \cup V_k$.
(This can be done by perturbing $a$ slightly. Such perturbation does not change the function $f$ 
but allows $a$ to separate points.)

Consider the set of all $k$-tuples of functions 
$(f,f_2,f_3,\ldots,f_k)$ where $f_i \in T_a(V_i)$ for each $i$. 
Each such tuple consists of a function $f \in T(V_1)$ and some $k-1$ ``clones'' $f_i$ of $f$.
Due to \eqref{eq: clones}, each clone $f_i$ in the tuple can be chosen in exactly $|V_i|+1$ ways.
Thus the number of all such tuples is: 
\begin{equation}	\label{eq: tuple count}
|T(V_1)| \cdot \big( |V_2|+1 \big) \cdot \big( |V_3|+1 \big) \cdots \big( |V_k|+1 \big).
\end{equation}

Lemma~\ref{lem: lifting} implies that different tuples $(f,f_2,f_3,\ldots,f_k)$
produce different liftings $F = F_{f,f_2,f_3,\ldots,f_k} \in T(S)$. 
Indeed, one can uniquely recover the tuple from the fibers $F(u_i \oplus \cdot)$ of $F$.

Summarizing, $T(S)$ is lower bounded by the number of different liftings $F$, 
which in turn is lower bounded by the number of different tuples $(f,f_2,f_3,\ldots,f_k)$,
which finally is lower bounded by the expression in \eqref{eq: tuple count}, completing the proof of Theorem~\ref{thm: slicing}.
\end{proof}

\subsection{Product sets}

Theorem~\ref{thm: slicing} is especially useful when $S$ is a product of sets. 

\begin{corollary}[Capacity of product sets]		\label{cor: CSp2}
  Let $U \subset \R^m$ and $V \subset \R^n$ be finite sets.
  If $U$ is linearly independent, then: 
  $$
  C(U \oplus V) \ge (|U|-1) \log_2|V| + C(V).
  $$  
\end{corollary}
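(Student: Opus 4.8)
The plan is to derive this immediately from the Slicing Theorem (Theorem~\ref{thm: slicing}) by taking all the fibers to be equal to $V$. Concretely, enumerate the linearly independent set as $U = \{u_1,\ldots,u_k\}$ with $k = |U|$, and set $V_1 = V_2 = \cdots = V_k = V$. Since the $u_i$ are linearly independent vectors in $\R^m$ and the $V_i$ are arbitrary finite subsets of $\R^n$, the hypotheses of Theorem~\ref{thm: slicing} are met, and the set it produces is
$$
S = \bigcup_{i=1}^k u_i \oplus V_i = \bigcup_{i=1}^k u_i \oplus V = U \oplus V.
$$

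Next I would simply read off the conclusion of Theorem~\ref{thm: slicing} in this special case:
$$
|T(U \oplus V)| = |T(S)| \ge |T(V_1)| \cdot \prod_{i=2}^k \big(|V_i|+1\big) = |T(V)| \cdot \big(|V|+1\big)^{k-1}.
$$
Taking binary logarithms and using the definition $C(\cdot) = \log_2 |T(\cdot)|$ gives
$$
C(U \oplus V) \ge C(V) + (k-1)\log_2\big(|V|+1\big).
$$
Finally, since $|V|+1 \ge |V|$, we have $\log_2(|V|+1) \ge \log_2|V|$, and substituting $k = |U|$ yields the claimed bound $C(U\oplus V) \ge (|U|-1)\log_2|V| + C(V)$.

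There is essentially no obstacle here: the corollary is a direct specialization of Theorem~\ref{thm: slicing}, and the only minor point to keep straight is that the linear independence hypothesis on $U$ is exactly what licenses the lifting step inside the proof of that theorem, so it must be carried over verbatim. (One could also remark that the slightly stronger inequality $C(U\oplus V) \ge (|U|-1)\log_2(|V|+1) + C(V)$ holds, but the stated form with $\log_2|V|$ is cleaner and sufficient for later use.)
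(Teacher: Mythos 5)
Your proposal is correct and follows essentially the same route as the paper: specialize Theorem~\ref{thm: slicing} with $V_1=\cdots=V_k=V$, observe that the resulting set is $U\oplus V$, and take logarithms (the paper simply weakens $|V_i|+1$ to $|V|$ before taking logs, whereas you do it after, which is the same argument).
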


\begin{proof}
If $U = \{u_1,\ldots,u_k\}$, we can write $U \oplus V = \bigcup_{i=1}^k u_i \oplus V$.
Applying Theorem~\ref{thm: slicing} for $V_i = V$, we get:
$$
|T(U \oplus V)| \ge |T(V)| \cdot |V|^{|U|-1}.
$$
Taking logarithms of both sides completes the proof.
\end{proof}

By induction, this bound extends to products of arbitrary many sets.

\begin{corollary}[Capacity of product sets]		\label{cor: CSp}
  Assume $S_p = U \oplus \cdots \oplus U \in \R^{pm}$ is the product 
  of $p >1$ copies of a linearly independent 
  subset $U \subset \R^m$ with $|U| > 1$. Then:
  $$
  C(S_p) \ge \frac{1}{8} p^2 |U| \log_2|U|.
  $$
\end{corollary}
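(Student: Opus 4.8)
The plan is to obtain the bound by peeling off one copy of $U$ at a time and applying Corollary~\ref{cor: CSp2} at each step. Write $S_p = U \oplus S_{p-1}$, where $S_{p-1} \subset \R^{(p-1)m}$ is the product of $p-1$ copies of $U$, so that $|S_{p-1}| = |U|^{p-1}$. The first factor $U$ is linearly independent by hypothesis (the remaining factor $S_{p-1}$ need not be), so Corollary~\ref{cor: CSp2} applies and yields
$$
C(S_p) \ge (|U|-1)\log_2 |S_{p-1}| + C(S_{p-1}) = (|U|-1)(p-1)\log_2 |U| + C(S_{p-1}).
$$

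Next I would iterate this inequality from $p$ down to $1$. Writing $q := |U| \ge 2$, the contributions telescope:
$$
C(S_p) \ge (q-1)\log_2 q \sum_{j=1}^{p-1} j + C(S_1) = \frac{p(p-1)}{2}\,(q-1)\log_2 q + C(U).
$$
Since $C(U) \ge 0$ (in fact $C(U) \ge \log_2 q + 1$ by Lemma~\ref{lem: capacity elementary lower}), the last term may be discarded, giving $C(S_p) \ge \frac{1}{2} p(p-1)(q-1)\log_2 q$.

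Finally it remains to pass from this expression to the cleaner bound $\frac{1}{8} p^2 q \log_2 q$ claimed in the statement, which reduces to the elementary inequality $\frac{1}{2} p(p-1)(q-1) \ge \frac{1}{8} p^2 q$, i.e.\ $4(q-1)(p-1) \ge pq$, i.e.\ $3pq - 4p - 4q + 4 \ge 0$. This holds for all integers $p,q \ge 2$: the left-hand side is affine in $p$ with positive slope $3q-4$, hence minimized at $p=2$, where it equals $2q - 4 \ge 0$. I do not expect a genuine obstacle here — the substance is the single application of Corollary~\ref{cor: CSp2} together with the telescoping, and the final step is pure arithmetic; the one point to keep in mind is that linear independence is required only for the factor being peeled off, not for the remaining product $S_{p-1}$.
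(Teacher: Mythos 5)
Your proof is correct and follows essentially the same route as the paper: repeatedly apply Corollary~\ref{cor: CSp2}, peeling off one linearly independent copy of $U$ at each step, telescope to $\tfrac{p(p-1)}{2}(|U|-1)\log_2|U| + C(U)$, and discard $C(U)\ge 0$. The only cosmetic difference is the final arithmetic: the paper bounds $p(p-1)/2 \ge p^2/4$ and $|U|-1 \ge |U|/2$ separately, which is equivalent to your combined inequality $4(p-1)(|U|-1)\ge p|U|$.
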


\begin{proof}
Apply Corollary~\ref{cor: CSp2} for the sets 
$U$ and $V = S_{p-1}$, whose cardinalities are $k \coloneqq |U|$ and $|V| = k^{p-1}$, and get:
$$
C(S_p) = C(U \oplus S_{p-1}) 
\ge (p-1) (k-1) \log_2 k + C(S_{p-1}).
$$
Apply Corollary~\ref{cor: CSp2} again for $S_{p-1} = U \oplus S_{p-2}$. 
Continuing in this way $p-1$ times, we obtain:
\begin{equation}	\label{eq: CSp prelim}
C(S_p) \ge \Big( (p-1) + (p-2) + \cdots + 1 \Big) (k-1) \log_2 k + C(U).
\end{equation}
Now:
$$
(p-1) + (p-2) + \cdots + 1 = \frac{p(p-1)}{2} \ge \frac{p^2}{4}
$$
as $p \ge 2$, $k-1 \ge k/2$ as $k \ge 2$, and $C(U) \ge 0$.
Substituting this into \eqref{eq: CSp prelim} completes the proof. 
\end{proof}

\begin{remark}[Relaxing the linear independence assumption]	\label{rem: relaxing linear independence}
In the main results of this section, we assumed that the set 
$U = \{u_1,\ldots,u_k\} \subset \R^m$ is linear independent.
This could be relaxed by assuming only that the set: 
$$
U \oplus 1 = \{u_1 \oplus 1, \ldots, u_k \oplus 1\} \in \R^{m+1}
$$ 
be linearly independent.

To see this, modify the argument in the Lifting Lemma~\ref{lem: lifting} as follows. 
Since the vectors $u_i \oplus 1 \in \R^{m+1}$ are linearly independent, 
there exists a vector $b \oplus \beta \in \R^{m+1}$ such that
$\ip{b \oplus \beta}{u_i \oplus 1} = \ip{b}{u_i} + \beta = \a_i$ for all $i=1,\ldots,k$.
Now define $F(u \oplus x) \coloneqq h \left( \ip{b}{u} + \beta + \ip{a}{x} \right)$.
\end{remark}

\subsection{Totally separated sets}			\label{s: totally separated}

In addition to product sets, Theorem~\ref{thm: slicing} can easily be specialized to
totally separated sets. 

\begin{definition}			\label{def: totally separated}
  Two subsets $A$ and $B$ of $\R^n$
  are {\em totally separated} if they lie in two different parallel hyperplanes of $\R^n$. 
\end{definition}

\begin{lemma}		\label{lem: totally separated}
  Let $S \subset \R^n$. If $A$ and $B$ are totally separated subsets of $S$ then: 
  $$
  |T(S)| \ge |T(A)| \cdot \big( |B|+1 \big).
  $$
\end{lemma}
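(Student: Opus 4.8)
The plan is to recognize this as the two-fiber ($k=2$) case of the Slicing Theorem~\ref{thm: slicing}. First, by monotonicity of capacity (property~\ref{pr: monotonicity} in Lemma~\ref{lem: capacity basic}), it suffices to prove the bound for $S = A \cup B$, since $|T(S)| \ge |T(A \cup B)|$ whenever $A, B \subseteq S$. By the definition of total separation there is a vector $v \in \R^n$ and distinct reals $s \ne t$ with $\ip{v}{x} = s$ for all $x \in A$ and $\ip{v}{x} = t$ for all $x \in B$. Applying an invertible affine transformation, which changes neither $|T(\cdot)|$ nor the total-separation hypothesis (affine invariance, property~\ref{pr: affine invariance} in Lemma~\ref{lem: capacity basic}), I may assume $v$ is the first coordinate vector, so that, writing $\R^n = \R \oplus \R^{n-1}$, we have $A = s \oplus A'$ and $B = t \oplus B'$ for finite sets $A', B' \subseteq \R^{n-1}$ with $|A'| = |A|$, $|B'| = |B|$, and $|T(A')| = |T(A)|$. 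Thus $S = A \cup B = (s \oplus A') \cup (t \oplus B')$ is exactly of the form $\bigcup_{i=1}^{2} u_i \oplus V_i$ of Theorem~\ref{thm: slicing}, with $m=1$, $k=2$, $u_1 = s$, $u_2 = t$, $V_1 = A'$, $V_2 = B'$.

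The one point that needs care is that Theorem~\ref{thm: slicing} asks $u_1, u_2$ to be linearly independent, which is impossible for two scalars. However, Remark~\ref{rem: relaxing linear independence} shows the conclusion survives under the weaker hypothesis that $u_1 \oplus 1, \ldots, u_k \oplus 1$ be linearly independent, and here $(s,1)$ and $(t,1)$ are linearly independent precisely because $s \ne t$, i.e. because $A$ and $B$ lie in \emph{two distinct} parallel hyperplanes. Invoking Theorem~\ref{thm: slicing} in this relaxed form gives $|T(S)| \ge |T(V_1)|\cdot(|V_2|+1) = |T(A)|\cdot(|B|+1)$, and combining with the monotonicity reduction above finishes the proof.

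Alternatively, one can argue directly, mirroring the proof of Theorem~\ref{thm: slicing} without a coordinate change: for each $f \in T(A)$ fix a representation $f(x) = h(\ip{a}{x} + \alpha)$ with $a = a(f)$ perturbed (as in the proof of Theorem~\ref{thm: slicing}) so as to separate the points of $B$; by \eqref{eq: clones} there are then exactly $|B|+1$ distinct clones $h(\ip{a}{x}+\gamma)\big|_B$. Given such a pair $(f,\gamma)$, put $\lambda = (\alpha - \gamma)/(s-t)$, $c = a + \lambda v$, $\delta = \alpha - \lambda s$; then $F(x) := h(\ip{c}{x} + \delta)$ is a threshold function on $\R^n$ with $\ip{c}{x}+\delta$ equal to $\ip{a}{x}+\alpha$ on $A$ and to $\ip{a}{x}+\gamma$ on $B$, so $F|_A = f$ and $F|_B$ is the chosen clone. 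Since $(f, \text{clone on } B)$ is recovered from $(F|_A, F|_B)$, the map $(f,\gamma) \mapsto F$ is injective, giving $|T(S)| \ge |T(A)|(|B|+1)$. In either route the only real obstacle is the bookkeeping around the degenerate $k=2$, $m=1$ situation; everything else is a direct application of the slicing machinery already developed.
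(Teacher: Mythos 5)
Your first argument is exactly the paper's proof: after an affine change of coordinates placing $A$ and $B$ in hyperplanes normal to the first coordinate vector, write $A = u_1 \oplus V_1$, $B = u_2 \oplus V_2$, observe that $u_1 \oplus 1$ and $u_2 \oplus 1$ are linearly independent since $u_1 \ne u_2$, and invoke Theorem~\ref{thm: slicing} with Remark~\ref{rem: relaxing linear independence}. The proposal is correct and takes essentially the same route (your second, direct construction is just an unwinding of the same lifting argument), so there is nothing to add.
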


\begin{proof}
By affine invariance, we may assume that 
$e_1 = (1,0,\ldots,0)$ is a normal vector to both hyperplanes in which $A$ and $B$ lie. 
Thus, we can express:
$$
A = u_1 \oplus V_1, \quad B = u_2 \oplus V_2
$$
for some distinct numbers $u_1, u_2 \in \R$ and sets $V_1, V_2 \subset \R^{n-1}$.
Moreover, since $u_1 \ne u_2$, the vectors $u_1 \oplus 1 = (u_1,1)$ and $u_2 \oplus 1 = (u_2,1)$ 
are linearly independent in $\R^2$.
Applying Theorem~\ref{thm: slicing} together with Remark~\ref{rem: relaxing linear independence} 
yields:
$$
|T(S)| 
\ge |T(A \cup B)| 
\ge |T(V_1)| \cdot \big( |V_2|+1 \big)
= |T(A)| \cdot \big( |B|+1 \big).
$$
The last step follows from the affine invariance of the capacity. 
\end{proof}

\begin{example}[Capacity of the Boolean cube]		\label{ex: Zuev}
  Let us apply Lemma~\ref{lem: totally separated} to the Boolean cube $H^n$. 
  This cube splits naturally into two totally separated copies of $H^{n-1}$ formed by 
  opposite faces of $H^n$. Using Lemma~\ref{lem: totally separated} and taking logarithms
  of both sides, we get:
  $$
  C(H^n) \ge C(H^{n-1}) + \log_2 |H^{n-1}|
  = C(H^{n-1}) + n-1.
  $$
  By induction, this gives:
  $$
  C(H^n) \ge (n-1) + (n-2) + \cdots + 1 = \frac{n(n-1)}{2}.
  $$
  This recovers the lower bound given in Theorem~\ref{thm: Zuev}.
\end{example}

\section{Capacity of general sets}		\label{s: capacity set lower}	

At the beginning of Section~\ref{s: product sets} we stated that the simple lower bound:
$$
C(S) \ge \log_2|S|+1,
$$
which is valid for any finite set $S \subset \R^n$, 
can be significantly improved 
if we assume that $S$ lies in the Boolean cube $H^n$. 
The following result (restating Theorem~\ref{thm: BTF on general S}) gives such improvement.
 
\begin{theorem}[Capacity of a set]		\label{thm: BTF on general S}
  The capacity of any set $S \subset H^n$ satisfies:
  $$
  C(S) > \frac{1}{16} \log_2^2 |S|.
  $$ 
\end{theorem}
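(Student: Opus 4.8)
The plan is to find inside any $S \subset H^n$ a large product structure (or a long chain of totally separated pieces) and then invoke the product-set bound of Corollary~\ref{cor: CSp}. The key quantitative fact we need is that a subset of the Boolean cube of cardinality $N$ must, after passing to a suitable sub-cube of coordinates, contain a ``copy'' of a cube $H^p$ for $p$ of order $\log_2 N$ on which we have total separation at each splitting step. More precisely, I would argue by induction on $n$, peeling off one coordinate at a time: writing $S = (S \cap \{x_1 = 0\}) \cup (S \cap \{x_1 = 1\})$, these two slices are totally separated, so Lemma~\ref{lem: totally separated} gives $|T(S)| \ge |T(S_0)| \cdot (|S_1| + 1)$ where $S_0, S_1$ are the two slices (identified with subsets of $H^{n-1}$), and by symmetry we may take $|S_0| \ge |S_1| \ge |S|/2 - $ something; the honest inequality is $\max(|S_0|,|S_1|) \ge |S|/2$, and we recurse into the larger slice while collecting the factor $|S_1| + 1$ from the smaller one.

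Carrying this recursion out: set $N = |S|$ and run the peeling for $p := \lfloor \log_2 N \rfloor - O(1)$ steps. At step $j$ we are inside a slice of size at least $N/2^{j}$, we split it, recurse into the half of size at least $N/2^{j+1}$, and harvest a factor of roughly $(N/2^{j+1})$ from the discarded half (more carefully, a factor at least $2$ as long as the discarded half is nonempty, and a factor $\gtrsim N/2^{j+1}$ while that is $\ge 2$). Taking logarithms,
$$
C(S) = \log_2 |T(S)| \ \ge\ \sum_{j=0}^{p-1} \log_2\!\Big(\tfrac{N}{2^{j+1}} + 1\Big) \ \ge\ \sum_{j=0}^{p-1} (\log_2 N - j - 1)_+ .
$$
The right-hand side is a sum of an arithmetic-progression-like sequence of positive terms starting near $\log_2 N$ and decreasing by $1$ each step, so with $p \asymp \log_2 N$ it is $\asymp (\log_2 N)^2 / 2$; pushing the constants, one gets $C(S) \ge \tfrac{1}{16}\log_2^2 |S|$ once $\log_2 N$ is large enough, and the small-cardinality cases are handled directly from $C(S) \ge \log_2|S| + 1$.

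The main obstacle is bookkeeping the constant: the crude ``take the larger half'' step loses a factor $2$ in cardinality per coordinate, so one must be careful that after $p$ steps the residual slice is still nonempty (hence we can only afford $p \le \log_2 N$ steps) and that the harvested factors $|S_j|+1$ are genuinely of size $\gtrsim N 2^{-j}$ rather than just $\ge 2$ for the first $\Theta(\log_2 N)$ steps. An alternative, cleaner route that avoids some of this loss is to first locate a set of $p \asymp \log_2|S|$ coordinates along which $S$ projects onto all of $H^p$ — equivalently, a combinatorial ``cube'' sitting inside $S$ in the shattering sense — and then apply Corollary~\ref{cor: CSp} (with $U = H^1$, or more efficiently a genuine product decomposition) to get $C(S) \gtrsim p^2 \asymp \log_2^2 |S|$ in one stroke; the existence of such a shattered coordinate set of size $\log_2|S|$ is exactly the Sauer–Shelah lemma, since $|S| > \sum_{k < p}\binom{n}{k}$ forces VC-dimension $\ge p$. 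I would present whichever of the two gives the stated constant $1/16$ most transparently; the peeling argument is elementary and self-contained given Lemma~\ref{lem: totally separated}, so that is the version I would write out.
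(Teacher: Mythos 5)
Your starting point---split $S$ along a coordinate into two totally separated slices and apply Lemma~\ref{lem: totally separated}---is the same as the paper's, but the step where you ``harvest a factor of roughly $N/2^{j+1}$'' at step $j$ is a genuine gap, not constant bookkeeping. Nothing controls the size of the \emph{smaller} slice: only the slice you keep is guaranteed to contain at least half of the current points, while the discarded slice can be a single point at every step, in which case each harvested factor is only $2$ and your displayed bound $C(S)\ge\sum_{j}\log_2(N/2^{j+1}+1)$ is simply not valid. The trade-off (very unbalanced splits yield tiny harvested factors but make the recursion last much longer) is exactly what has to be exploited, and the paper does this with a dichotomy that is missing from your proposal: run the decomposition only until the surviving slice drops below $2^{s/2}$, where $s=\log_2|S|$; if this takes $k\ge s^2/16$ steps, then $C(S)>k$ already suffices, since each split contributes at least one bit; otherwise $k<s^2/16$, and writing $|V_i|=p_i|U_{i-1}|$ for the discarded slices one shows $\sum_{i\le k}p_i\ge s/4$, hence at least $s/4$ indices satisfy $p_i\ge s/(8k)$, and for those $|V_i|\ge (s/8k)\,2^{s/2}\ge 2^{s/4}$, so that $C(S)>\sum_{i\in I}\log_2|V_i|\ge (s/4)(s/4)=s^2/16$. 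Without an argument of this type (balancing tree height against slice sizes), the peeling recursion does not close.

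Your alternative route via Sauer--Shelah is also quantitatively insufficient. Sauer--Shelah forces a shattered coordinate set of size $d$ only when $|S|>\sum_{k<d}\binom{n}{k}\approx(en/d)^d$, i.e.\ it yields $d\gtrsim \log_2|S|/\log_2\bigl(en/\log_2|S|\bigr)$, \emph{not} $d\asymp\log_2|S|$; for example $S=\{0,e_1,\dots,e_n\}$ has $\log_2|S|\approx\log_2 n$ but VC-dimension $1$. Feeding this $d$ into $C(S)\ge C(H^d)\gtrsim d^2$ loses a $\log^2$ factor relative to the claimed bound $\tfrac1{16}\log_2^2|S|$. This is precisely the content of Proposition~\ref{prop: restricted vs unrestricted}, and removing that logarithmic loss is listed in the paper as an open problem, so this route cannot substitute for the hierarchical-decomposition argument.
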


Before we prove this result, let us note that this bound is generally tight for any magnitude of $|S|$, 
up to an absolute constant factor. Indeed, consider the cube $S = H^k$ 
as a subset of $H^n$. Theorem~\ref{thm: BTF on general S} gives: 
$$
C(S) = C(k) \asymp k^2 = \log_2^2|S|,
$$
which matches the bound in Theorem~\ref{thm: Zuev}.

\subsection{A hierarchical decomposition}		\label{s: hierarchical decomposition}

To prove Theorem~\ref{thm: BTF on general S}, we are going to construct
a {\em hierarchical decomposition} of $S$ into totally separated sets.\footnote{We introduced totally separated sets in Section~\ref{s: totally separated}.}
The next lemma defines the decomposition, and 
Lemma~\ref{lem: totally separated} will be used to keep track of the change in capacity at each step. 

\begin{lemma}[A totally separated partition]		\label{lem: totally separated partition}
  Any set $S \subset H^n$ that consists of more than one point
  can be partitioned into two non-empty totally separated subsets $A$ and $B$.
\end{lemma}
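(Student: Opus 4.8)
The goal is to split $S \subset H^n$ with $|S| > 1$ into two non-empty pieces $A, B$ that lie on parallel hyperplanes. Since the points of $S$ are distinct Boolean vectors, there must be a coordinate $i \in \{1,\ldots,n\}$ on which not all points of $S$ agree. First I would fix such a coordinate $i$ and set
$$
A = \{ x \in S : x_i = 0 \}, \qquad B = \{ x \in S : x_i = 1 \}.
$$
Because $i$ is a coordinate where $S$ is not constant, both $A$ and $B$ are non-empty, and clearly $A \cup B = S$, $A \cap B = \varnothing$.

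**Total separation.**
It remains to check that $A$ and $B$ are totally separated in the sense of Definition~\ref{def: totally separated}, i.e. that they lie in two distinct parallel hyperplanes of $\R^n$. This is immediate: take the hyperplanes $\{x \in \R^n : x_i = 0\}$ and $\{x \in \R^n : x_i = 1\}$. These are parallel (both have normal vector $e_i$) and distinct, and by construction $A$ is contained in the first and $B$ in the second. Hence $A$ and $B$ form a totally separated partition of $S$, which is exactly what the lemma asserts.

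**Main obstacle.**
There is essentially no obstacle here: the statement is a triviality once one observes that distinct Boolean vectors must differ in some coordinate, and that a single coordinate hyperplane slicing provides the required parallel hyperplanes. The only thing to be a little careful about is the non-emptiness of both parts, which is precisely why one must choose a coordinate $i$ on which $S$ is genuinely non-constant rather than an arbitrary one. The real content is not in this lemma but in how it will be iterated (a hierarchical decomposition) and combined with Lemma~\ref{lem: totally separated} to accumulate a $\log^2|S|$ lower bound on capacity; this lemma is just the single-step building block.
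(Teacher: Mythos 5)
Your proof is correct and follows essentially the same route as the paper: pick a coordinate $i$ on which $S$ is non-constant and split $S$ according to $x_i=0$ versus $x_i=1$, the two parts lying in the parallel hyperplanes $\{x_i=0\}$ and $\{x_i=1\}$. The only (harmless) difference is that you spell out the total-separation check explicitly, which the paper leaves implicit.
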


\begin{proof}
Choose a pair of distinct points $x,y \in S$. They must differ in at least one coordinate $i$, 
and without loss of generality we may assume that $x_i=0$ and $y_i=1$.
Let the set $A$ consist of all points of $S$ whose $i$-th coordinate equals $0$, 
and $B$ consist of all points of $S$ whose $i$-th coordinate equals $1$.
Then the sets $A$ and $B$ form a partition of $S$ and they are non-empty 
since $x \in A$ and $y \in B$.
\end{proof}

We use the following procedure to decompose $S$ into a tree of totally separated subsets. 
First, let: 
$$
U_0 := S.
$$
If $|U_0| = 1$, stop. Otherwise Lemma~\ref{lem: totally separated partition} gives a partition: 
\begin{equation}	\label{eq: U0 decomposed}
U_0 = U_1 \sqcup V_1, \quad |U_1| \ge |V_1| \ge 1,
\end{equation}
where $U_1$ and $V_1$ are totally separated sets.
If $|U_1| = 1$, stop. Otherwise Lemma~\ref{lem: totally separated partition} gives a partition: 
\begin{equation}	\label{eq: U1 decomposed}
U_1 = U_2 \sqcup V_2, \quad |U_2| \ge |V_2| \ge 1,
\end{equation}
where $U_2$ and $V_2$ are totally separated sets. 
Generally, after $i-1$ partitioning steps, we check if $|U_{i-1}| = 1$ and if so, we stop. 
Otherwise Lemma~\ref{lem: totally separated partition} gives a partition: 
\begin{equation}	\label{eq: Ui decomposed}
U_{i-1} = U_i \sqcup V_i, \quad |U_i| \ge |V_i| \ge 1,
\end{equation}
where $U_i$ and $V_i$ are totally separated sets. 

Since at each step the set $U_i$ becomes strictly smaller, 
the iterative construction must terminate after a finite number $K$ of steps, when we have:
$$
|U_K| = 1.
$$
Firgure~\ref{fig: decomposition} may help to visualize the decomposition process.
\begin{figure}[htp]			
  \centering 
    \includegraphics[width=0.27\textheight]{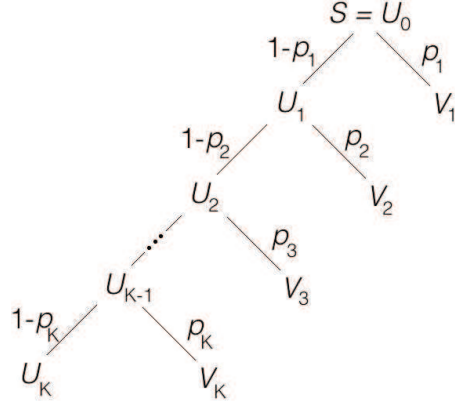} 
    \caption{A hierarchical decomposition of a set $S \subset H^n$ into totally separated subsets.}
  \label{fig: decomposition}
\end{figure}

There are two overlapping situations where a hierarchical decomposition of $S$ automatically 
yields a good lower bound on the capacity of $S$: (1) when the tree is tall, i.e. $K$ is large; and (2) when many ``leaves'' $V_i$ are not too small. The following lemma quantifies this statement.
 
\begin{lemma}[Hierarchical decomposition and capacity]		\label{lem: decomposition capacity}
  In the hierarchical decomposition described above, one has:
  $$
  C(S) > K
  \qquad \text{and} \qquad
  C(S) > \sum_{i=1}^K \log_2 |V_i|.
  $$
\end{lemma}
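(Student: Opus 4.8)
The plan is to unwind the hierarchical decomposition one step at a time, applying Lemma~\ref{lem: totally separated} at each step to track the growth of the number of threshold functions. At stage $i$ of the construction we have $U_{i-1} = U_i \sqcup V_i$ with $U_i$ and $V_i$ totally separated subsets of $\R^n$. Viewing $U_i$ and $V_i$ as totally separated subsets of $U_{i-1}$, Lemma~\ref{lem: totally separated} gives
$$
|T(U_{i-1})| \ge |T(U_i)| \cdot \big( |V_i| + 1 \big), \qquad i = 1, \ldots, K.
$$
Chaining these inequalities from $i=1$ up to $i=K$ (and recalling $U_0 = S$) yields
$$
|T(S)| \ge |T(U_K)| \cdot \prod_{i=1}^K \big( |V_i| + 1 \big).
$$

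Next I would handle the base of the recursion: since the construction terminates with $|U_K| = 1$, there are exactly two threshold functions on the one-point set $U_K$ (it can be sent to $0$ or to $1$), so $|T(U_K)| = 2$; alternatively one may just invoke Lemma~\ref{lem: capacity elementary lower} to get $|T(U_K)| \ge 2$. Substituting and taking binary logarithms gives
$$
C(S) \ge 1 + \sum_{i=1}^K \log_2 \big( |V_i| + 1 \big).
$$

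Finally, both claimed bounds fall out of the constraint $|V_i| \ge 1$ built into the decomposition \eqref{eq: Ui decomposed}. On one hand $|V_i| + 1 \ge 2$, so each term $\log_2(|V_i|+1) \ge 1$, whence $C(S) \ge 1 + K > K$. On the other hand $|V_i| + 1 > |V_i|$, so $\log_2(|V_i|+1) > \log_2|V_i|$ and therefore $C(S) \ge 1 + \sum_{i=1}^K \log_2(|V_i|+1) > \sum_{i=1}^K \log_2 |V_i|$. (The degenerate case $|S| = 1$, i.e. $K = 0$, is immediate: $C(S) = 1 > 0$, and the empty sum is $0$.) There is no real obstacle here beyond bookkeeping; the only point requiring a moment's care is that $U_i$ and $V_i$ genuinely remain totally separated subsets of the ambient space at every stage, which holds because each split in Lemma~\ref{lem: totally separated partition} is obtained by fixing a single coordinate to $0$ versus $1$, placing the two pieces in parallel hyperplanes.
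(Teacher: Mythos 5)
Your proposal is correct and follows essentially the same route as the paper: chain Lemma~\ref{lem: totally separated} along the decomposition, use $|T(U_K)|=2$ at the base, take logarithms, and extract the two bounds from $|V_i|+1\ge 2$ and $|V_i|+1>|V_i|$. The extra remarks on the degenerate case and on why each split is totally separated are harmless additions, not deviations.
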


\begin{proof}
Applying Lemma~\ref{lem: totally separated} for decompositions
\eqref{eq: U0 decomposed} and \eqref{eq: U1 decomposed}, we get: 
$$ 
|T(S)| 
= |T(U_0)| 
\ge |T(U_1)| \cdot \big( |V_1|+1 \big) 
\ge |T(U_2)| \cdot \big( |V_1|+1 \big) \big( |V_2|+1 \big).
$$ 
Continuing in this way, after $K$ steps we get: 
\begin{equation}	\label{eq: CS long product}
|T(S)| \ge 2 \big( |V_1|+1 \big) \big( |V_2|+1 \big) \cdots \big( |V_K|+1 \big),
\end{equation}
since at the last step $|U_K| = 1$ and thus $|T(U_K)| = 2$.
To get the first conclusion of the lemma, note that $|V_i|+1 > 1$ and take the logarithm of both sides 
of \eqref{eq: CS long product}.
To get the second conclusion, note that $|V_i|+1 > |V_i|$ and finish similarly.
\end{proof}

\subsection{Proof of Theorem~\ref{thm: BTF on general S}}
Let:
$$
s \coloneqq \log_2|S|.
$$
If $s \le 16$, the conclusion of the theorem follows from 
the trivial capacity bound in Lemma~\ref{lem: capacity elementary lower}:
$$
C(S) > s \ge \frac{s^2}{16}.
$$
Thus in the rest of the proof we can assume that:  
\begin{equation}	\label{eq: s bound}
s > 16.
\end{equation}

\subsubsection*{Step 1. Stopping criterion.}
Consider the hierarchical decomposition of $S$ constructed in Section~\ref{s: hierarchical decomposition}.
We will need only the initial portion of that tree decomposition, where the sets $U_i$ are still large. 
Specifically, let $k$ be the smallest integer such that:
\begin{equation}	\label{eq: stopping}
|U_k| \le 2^{s/2};
\end{equation}
our argument will focus on the sets $U_i$ and $V_i$ for $i \le k$ only.
Note that:
$$
1 < k \le K.
$$
The upper bound is trivial. To check the lower bound, recall that:
\begin{align*} 
|U_1| 
  &\ge \frac{1}{2} |U_0|   \quad \text{(due to \eqref{eq: U0 decomposed})} \\
  &= \frac{1}{2} |S| = 2^{s-1} > 2^{s/2} \quad \text{(since $s > 16$).}
\end{align*}
The definition of $k$ then yields $k > 1$. 

\subsubsection*{Step 2. Tall trees.}
If $k^2 \ge s^2/16$ then the conclusion of the theorem follows from 
the first bound in Lemma~\ref{lem: decomposition capacity}. Indeed, 
in this case we have:
$$
C(S) > K \ge k \ge \frac{s^2}{16} = \frac{1}{16} \log_2^2 |S|.
$$
Thus, in the rest of the proof we may assume that:
\begin{equation}	\label{eq: k bounds}
1 < k < \frac{s^2}{16}.
\end{equation}

\subsubsection*{Step 3. Decomposition proportions}
Recall that in the hierarchical decomposition \eqref{eq: Ui decomposed}, 
the set $U_{i-1}$ is partitioned into two sets $U_i$ and $V_i$. Let $1-p_i$ and $p_i$ 
denote the proportions of these sets 
(Figure~\ref{fig: decomposition}), i.e. 
\begin{equation}	\label{eq: pi}
|U_i| = (1-p_i) |U_{i-1}| \quad {\rm and} \quad  |V_i| = p_i |U_{i-1}|.
\end{equation}
The condition $|U_i| \ge |V_i| \ge 1$ in \eqref{eq: Ui decomposed} implies that:
$$
0 < p_i \le \frac{1}{2}.
$$
By induction, we have:
$$
|U_i| = |U_0| (1-p_1) (1-p_2) \cdots (1-p_i).
$$
Let us use this identity for $i=k$. 
By the stopping criterion \eqref{eq: stopping}, and since $|U_0| = |S| = 2^s$, 
we have:
\begin{equation}	\label{eq: exp sum pi}
2^{s/2} 
\ge 2^s (1-p_1) (1-p_2) \cdots (1-p_k)
\ge 2^s \, 2^{-2(p_1+\cdots p_k)}.
\end{equation}
To get the last bound we used the numerical inequality $1-x \ge 2^{-2x}$, 
which is valid for all $0 \le x \le 1/2$; we can apply it since $0 < p_i \le 1/2$ for all $i$.
Rearranging the terms in the bound \eqref{eq: exp sum pi} gives:
\begin{equation}	\label{eq: sum pi}
p_1 + \cdots + p_k \ge \frac{s}{4}.
\end{equation}
As a consequence, there must be many $p_i$ that are not too small. 
Specifically, consider the subset of indices $I \subset \{1,\ldots,k\}$ 
defined by:
$$
I \coloneqq \Big\{ i: \; p_i \ge \frac{s}{8k} \Big\}.
$$
We claim that:
\begin{equation}	\label{eq: I}
|I| \ge \frac{s}{4}.
\end{equation}
Indeed, according to \eqref{eq: sum pi}, we have:
$$
\frac{s}{4} 
\le \sum_{i=1}^k p_i
= \sum_{i \in I} p_i + \sum_{i \not\in I} p_i.
$$
There are $|I|$ terms $p_i$ in the first sum, all of which are bounded by $1/2$.
There are at most $k$ terms in the second sum, all of which are bounded by $s/8k$
according to the definition of $I$. Therefore:
$$
\frac{s}{4} \le |I| \cdot \frac{1}{2} + k \cdot \frac{s}{8k}.
$$
Solving this inequality gives $|I| \ge s/4$, as  claimed in \eqref{eq: I}.

\subsubsection*{Step 4. Short trees.}
We are going to use the second bound in Lemma~\ref{lem: decomposition capacity}. 
To apply it effectively, we will first show that all the sets $V_i$ for $i \in I$ are not too small. 
So, fix an $i \in I$ and recall that by the definition of the proportions $p_i$, we have:
$$
|V_i| = p_i |U_{i-1}|.
$$
Since $i \in I$, one has: $p_i \ge 2/8k$. Furthermore considering that $i \le k$, together 
with the definition of the stopping time $k$, yields $|U_{i-1}| \ge 2^{s/2}$. 
Thus:
\begin{equation}	\label{eq: Vi large}
|V_i| \ge \frac{s}{8k} \cdot 2^{s/2} 
\ge \frac{2}{s} \cdot 2^{s/2}
\ge 2^{s/4}.
\end{equation}
In the second bound we use the assumption $k < s^2/16$ from \eqref{eq: k bounds},
and in the last bound we use the assumption $s \ge 16$ from \eqref{eq: s bound}.

Now we are ready to apply the second bound in Lemma~\ref{lem: decomposition capacity}.
It gives in particular:
$$
C(S) > \sum_{i \in I} \log_2 |V_i|.
$$
There are at least $s/4$ terms in this sum due to \eqref{eq: I}, 
each bounded below by $s/4$ according to \eqref{eq: Vi large}. It follows that:
$$
C(S) > \frac{s}{4} \cdot \frac{s}{4} = \frac{s^2}{16} = \frac{1}{16} \log_2^2 |S|.
$$
completing the proof of Theorem~\ref{thm: BTF on general S}.
\qed

\bigskip

Although Theorem~\ref{thm: BTF on general S} gives a bound that is generally tight, 
for many subsets $S \subset H^n$ it can be improved even further. 
We address this phenomenon in Section~\ref{s: enrichment}, 
where we study the {\em enrichment} transformation as a way of increasing the capacity.

\section{Networks with one hidden layer: multiplexing}			\label{s: multiplexing}

Starting from this section, we focus on networks with  at least one hidden layer.
The ultimate goal is to prove the tight lower bound on their capacity stated in Theorem~\ref{thm: main}.
But for now, we begin with a more basic question. For a given input set $S \subset \R^n$, 
can we relate the capacity of the network with one hidden layer $C(S,n,m,1)$  
to the capacity $C(S)=C(S,n,1)$ of the set $S$? 
It is easy to derive a simple upper bound.

\begin{proposition}[The effect of a hidden layer: upper bound]		\label{prop: two layers upper}	
  For any $n,m \ge 4$ and any finite set $S \subset \R^n$, we have:
  $$
  C(S,n,m,1) \le 2C(S)m.
  $$
\end{proposition}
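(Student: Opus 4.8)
The plan is to count compositions $\phi \circ f$ where $f = (f_1, \ldots, f_m) : S \to H^m$ is a threshold map and $\phi : H^m \to \{0,1\}$ is a threshold function, and to bound this count by the number of choices of the pair $(f, \phi)$ — i.e. by $|T(S,n,m)| \cdot |T(f(S))|$ — and then reduce both factors to quantities controlled by $C(S)$. The key observation is that a composition $\phi \circ f$ is determined by the map $f$ together with the behavior of $\phi$ on the image $f(S) \subset H^m$; since $|f(S)| \le |S|$, the number of distinct threshold functions $\phi$ that matter is at most $|T(f(S))| \le |T(S')|$ for a set $S'$ of cardinality $|S|$, but we want everything in terms of $C(S)$ directly.

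More precisely, first I would write
$$
|T(S,n,m,1)| \le |T(S,n,m)| \cdot \max_{f} |T(f(S))|,
$$
the maximum over threshold maps $f \in T(S,n,m)$, since each computable function of the two-layer network is obtained by choosing $f$ and then a threshold function on the finite set $f(S)$. By property~\ref{pr: two layers} of Lemma~\ref{lem: capacity basic}, $|T(S,n,m)| = |T(S)|^m = 2^{C(S)m}$, so $\log_2 |T(S,n,m)| = C(S)m$. For the second factor, note $f(S)$ is a subset of $H^m$ with $|f(S)| \le |S|$, so by the upper bound in Lemma~\ref{lem: capacity elementary upper} (valid since $m \ge 4$),
$$
\log_2 |T(f(S))| = C(f(S)) \le m \log_2 |f(S)| \le m \log_2 |S|.
$$
On the other hand, by the trivial lower bound in Lemma~\ref{lem: capacity elementary lower}, $C(S) \ge \log_2 |S| + 1 > \log_2 |S|$, hence $\log_2 |T(f(S))| \le m \log_2|S| < C(S) m$. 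Combining the two factors gives $C(S,n,m,1) \le C(S)m + C(S)m = 2 C(S) m$, as claimed.

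The main obstacle — really the only subtlety — is justifying the first displayed inequality: that every function computable by $A(S,n,m,1)$ factors as $\phi \circ f$ with $f$ a threshold map on $S$ and $\phi$ a threshold function on the finite set $f(S)$, and that the total count is therefore at most the product of the number of such $f$ and the worst-case number of such $\phi$. This is essentially the definition of $T(S,n,m,1)$ together with the fact that once $f$ is fixed, $\phi$ only needs to be specified on the image $f(S)$; two output functions $\phi, \phi'$ agreeing on $f(S)$ yield the same composition. One must be slightly careful that the bound uses $\max_f$ rather than a fixed set, but since Lemma~\ref{lem: capacity elementary upper} gives a bound depending only on $|f(S)| \le |S|$ and the ambient dimension $m$, this is uniform over $f$. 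Everything else is a direct appeal to the already-established Lemmas~\ref{lem: capacity basic}, \ref{lem: capacity elementary upper}, and \ref{lem: capacity elementary lower}, plus the elementary inequality $\log_2|S| < C(S)$.
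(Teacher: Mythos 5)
Your proof is correct and follows essentially the same route as the paper: bound $|T(S,n,m,1)|$ by $|T(S,n,m)|\cdot\max|T(V)|$ over images $V=f(S)$ of cardinality at most $|S|$, then use property~\ref{pr: two layers} of Lemma~\ref{lem: capacity basic} for the first factor and Lemmas~\ref{lem: capacity elementary upper} and \ref{lem: capacity elementary lower} (with $m\ge 4$) to get $C(V)\le m\log_2|S|\le C(S)m$ for the second. No substantive differences from the paper's argument.
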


\begin{proof}
The argument is similar to the proof of
Proposition~\ref{prop: multi-channel upper}.
We need to count all functions of the form $\psi \circ \phi$
where $\phi \in T(S,n,m)$ and $\psi \in T(V,m,1) = T(V)$, and where: 
$$
V \coloneqq \im \phi \subset H^m.
$$
The cardinality of the image of $\phi$ is bounded by the cardinality of its domain, so:
$$
|V| = |\im \phi| \le |S|.
$$
There are $|T(S,n,m)|$ functions $\phi$, and for each $\phi$ there are 
$|T(V)|$ functions $\phi$. Thus the total number of compositions $\phi \circ f$  
is:
$$
|T(S,n,m,1)|
\le |T(S,n,m)| \cdot \max_V |T(V)|,
$$
where the maximum is taken over all subsets $V \subset H^m$ 
with cardinality at most $|S|$.
Taking logarithms on both sides gives:
\begin{equation}	\label{eq: CSm1}
C(S,n,m,1) \le C(S,n,m) + \max_{V} C(V).
\end{equation}
Property~\ref{pr: two layers} of Lemma~\ref{lem: capacity basic} gives:
$$
C(S,n,m) = C(S) m.
$$
Furthermore, using the capacity bound from Lemma~\ref{lem: capacity elementary upper},
the assumption on $|V|$, and Lemma~\ref{lem: capacity elementary lower}, wee see that:
$$
C(V) \le m \log_2|V| \le m \log_2|S| \le C(S) m.
$$
Substituting these two bounds in \eqref{eq: CSm1} completes the proof.
\end{proof}

We can interpret Proposition~\ref{prop: two layers upper} as a result that 
compares capacities of single-output and multiple-output networks. 
Indeed, due to part~\ref{pr: two layers} of Lemma~\ref{lem: capacity basic}, 
the bound in Proposition~\ref{prop: two layers upper} states that:
$$
C(S,n,m,1) \le 2C(S,n,m).
$$
What about the converse: can channeling the output through a single node
substantially reduce the capacity of a network? 
In principle, it can. 
Indeed, $C(S,n,m,1)$ is always bounded by $2^{|S|}$, the logarithm of the 
total number of binary functions on $S$, 
while $C(S,n,m) = m C(S)$ is always bounded below by $2m$ 
due to Lemma~\ref{lem: capacity elementary lower}.
Thus, whenever $|S| \ll \log_2 m$, we necessarily have:
$$
C(S,n,m,1) \ll C(S,n,m).
$$ 
Nevertheless, we will now show how to prevent the collapse in capacity 
by modifying $S$ a little -- namely, by adding just $\log_2 m$ bits to the input.

\begin{theorem}[The effect of a hidden layer: lower bound]		\label{thm: two layers general lower}
  Let $S \subset \R^n$ be a finite set.
  Let $m^- \coloneqq \lceil \log_2 m \rceil$
  and $S^+ \coloneqq S \oplus H^{m^-}$. 
  Then: 
  $$
  C(S^+,n+m^-,m,1) \ge C(S,n,m) = C(S) m.
  $$
\end{theorem}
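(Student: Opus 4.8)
The plan is to exhibit an explicit injection from the set $T(S,n,m)$ of threshold maps $f : S \to H^m$ into the set $T(S^+, n+m^-, m, 1)$ of functions computable by the architecture $A(n+m^-, m, 1)$ on the enlarged input set $S^+ = S \oplus H^{m^-}$. The idea behind the construction is multiplexing: the $m^- = \lceil \log_2 m\rceil$ extra input bits will serve as a \emph{selector} that picks out one of the $m$ coordinate functions of $f$, and the single output neuron will simply read off that selected coordinate. Since different threshold maps $f$ and $f'$ must differ in \emph{some} coordinate on \emph{some} point of $S$, the selector can be set to expose that disagreement, so the associated functions on $S^+$ differ; hence the map $f \mapsto (\text{its multiplexed version})$ is injective, giving $|T(S^+, n+m^-, m, 1)| \ge |T(S,n,m)|$ and thus the claimed capacity bound, together with part~\ref{pr: two layers} of Lemma~\ref{lem: capacity basic} which identifies $C(S,n,m)$ with $C(S)m$.

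The concrete construction goes as follows. Fix a threshold map $f = (f_1,\ldots,f_m) : S \to H^m$, with each $f_i(x) = h(\ip{a_i}{x} + \alpha_i)$. Identify $H^{m^-}$ with (a superset of) $\{1,\ldots,m\}$ via binary encoding, and write an input to the new network as $x \oplus y$ with $x \in S$, $y \in H^{m^-}$. The first layer of the new network, which has $m$ neurons, should compute on input $x \oplus y$ a vector whose $i$-th coordinate is $f_i(x) \wedge (y = i)$ — i.e. it passes through the value $f_i(x)$ only when the selector $y$ encodes index $i$, and outputs $0$ otherwise. By Lemma~\ref{lem: adding a clause} each of these $m$ functions is a threshold function on $H^{n+m^-}$ (more precisely on $S^+$, by restriction), so this is a legitimate threshold map in $T(S^+, n+m^-, m)$. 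The output neuron then computes the \textsc{or} of its $m$ inputs, which is a threshold function; on input $x \oplus y$ with $y$ encoding a valid index $i \in \{1,\ldots,m\}$, exactly the $i$-th first-layer coordinate can be nonzero, so the output equals $f_i(x)$. (For values of $y$ not encoding any index in $\{1,\ldots,m\}$ the output is simply $0$; this does not matter.) Thus each $f$ yields a well-defined function $\Phi_f \in T(S^+, n+m^-, m, 1)$ with the property that $\Phi_f(x \oplus i) = f_i(x)$ for all $x \in S$ and $i \in \{1,\ldots,m\}$.

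Injectivity is then immediate: if $f \ne f'$ as maps $S \to H^m$, there is a point $x \in S$ and an index $i$ with $f_i(x) \ne f_i'(x)$, whence $\Phi_f(x \oplus i) \ne \Phi_{f'}(x \oplus i)$, so $\Phi_f \ne \Phi_{f'}$. Therefore $|T(S^+, n+m^-, m, 1)| \ge |T(S,n,m)|$, and taking binary logarithms gives $C(S^+, n+m^-, m, 1) \ge C(S,n,m)$, which equals $C(S)m$ by Lemma~\ref{lem: capacity basic}. The main point requiring care — really the only obstacle — is verifying that the ``select-and-pass'' first-layer functions $x \oplus y \mapsto f_i(x) \wedge (y=i)$ are genuinely threshold functions; but this is exactly what Lemma~\ref{lem: adding a clause} provides, so the argument is essentially a bookkeeping assembly of tools already established. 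One should also note in passing that we only claim a one-sided bound, so there is no need to control functions $\Phi \in T(S^+, n+m^-, m, 1)$ that do not arise from any $f$.
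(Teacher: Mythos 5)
Your proposal is correct and follows essentially the same route as the paper: it is precisely the multiplexing construction (selector bits encoding the index $i$, first-layer units computing $f_i(x)\wedge(y=\sigma(i))$ via Lemma~\ref{lem: adding a clause}, an \textsc{or} output unit, and injectivity of $f\mapsto f^+$ by recovering each $f_i$ from the selected slices), followed by taking logarithms and invoking part~\ref{pr: two layers} of Lemma~\ref{lem: capacity basic}. No gaps; this matches the paper's argument in both structure and the key lemmas used.
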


The proof of this theorem is based on a {\em multiplexing} technique,  
which allows one to transmit $m$ output functions through a single channel.
To describe this technique, fix an arbitrary injective map:
$$
\s : \{1,\ldots,m\} \to H^{m^-},
$$
where $m^- \coloneqq \lceil \log_2 m \rceil$.

\begin{lemma}[Multiplexing]		\label{lem: multiplexing}
  Let $S \subset \R^n$ be a finite set.
  Then, for any function $f = (f_1,\ldots,f_m) \in T(S,n,m)$, 
  we can construct a function $f^+ \in T(S^+,n+m^-,m,1)$ such that: 
  \begin{equation}	\label{eq: multiplexing}
  f^+(x \oplus x^-) = f_i(x) 
  \end{equation}
  if $x^- = \s(i)$ for some $i$.
\end{lemma}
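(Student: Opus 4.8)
The plan is to realize $f^+$ as a composition $\phi \circ g$ of a threshold map $g = (g_1,\ldots,g_m) \in T(S^+, n+m^-, m)$ with a single threshold function $\phi \in T(H^m, m, 1)$, arranged so that on input $x \oplus \s(i)$ exactly the $i$-th hidden unit carries the value $f_i(x)$ while every other hidden unit outputs $0$; the output function $\phi$ then simply reads off that value. This is the essence of multiplexing: the selector bits $x^- = \s(i)$ route the computation to the appropriate coordinate $f_i$.

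First I would define the hidden units. Writing each component as $f_i(x) = h\big(\ip{a_i}{x} + \alpha_i\big)$, set
$$
g_i(x \oplus x^-) \coloneqq f_i(x) \wedge (x^- = \s(i)),
$$
i.e.\ $g_i$ equals $f_i(x)$ when $x^- = \s(i)$ and equals $0$ otherwise. Since $\s(i) \in H^{m^-}$, the clause $(x^- = \s(i))$ is itself a Boolean threshold function by Lemma~\ref{lem: checking equality}, and the ``adding a clause'' construction of Lemma~\ref{lem: adding a clause} shows that $g_i$ is a threshold function on $S \oplus H^{m^-} = S^+$. Hence $g = (g_1,\ldots,g_m) \in T(S^+, n+m^-, m)$. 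For the output layer I would take $\phi = \OR$, i.e.\ $\phi(z_1,\ldots,z_m) = z_1 \vee \cdots \vee z_m = h\big(\sum_i z_i - \tfrac12\big)$, which is a threshold function on $H^m$, and define $f^+ \coloneqq \phi \circ g \in T(S^+, n+m^-, m, 1)$.

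It remains to verify \eqref{eq: multiplexing}. Fix $x \in S$ and suppose $x^- = \s(i)$. Because $\s$ is injective, $\s(i) = \s(j)$ holds only for $j = i$, so $g_j(x \oplus \s(i)) = f_i(x)$ when $j = i$ and $g_j(x \oplus \s(i)) = 0$ when $j \ne i$; therefore $f^+(x \oplus \s(i)) = \bigvee_{j=1}^m g_j(x \oplus \s(i)) = f_i(x)$, as claimed. There is no genuine obstacle here — the argument is a direct construction — and the only place that calls for a little care is confirming that Lemma~\ref{lem: adding a clause} applies with the general finite input set $S \subset \R^n$ in place of the Boolean cube: inspecting its proof, finiteness of $S$ is all that is used to produce the scaling constants that separate the values of $\ip{a_i}{x} + \alpha_i$, so the construction of $g_i$ goes through verbatim.
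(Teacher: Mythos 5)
Your construction is exactly the paper's: you define $g_i(x \oplus x^-) = f_i(x) \wedge (x^- = \s(i))$ via Lemma~\ref{lem: adding a clause}, take the \textsc{or} as the single output threshold unit, and use injectivity of $\s$ to verify \eqref{eq: multiplexing}. The only difference is your explicit remark that the adding-a-clause construction extends from $H^n$ to a general finite $S$, a point the paper uses implicitly; your proof is correct and essentially identical.
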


Note that the injectivity of $\s$ guarantees 
that there exists at most one $i$ that satisfies \eqref{eq: multiplexing}.

\begin{proof}
Define:
\begin{equation}	\label{eq: fi+}
f_i^+(x \oplus x^-) \coloneqq f_i(x) \wedge (x^- = \s(i)), \quad i=1,\ldots,m
\end{equation}
and:
$$
f^+ \coloneqq f_1^+ \vee \cdots \vee f_m^+.
$$
This definition and the injectivity of $\s$ ensure that \eqref{eq: multiplexing} holds. 
Moreover, each $f_i^+$ is a threshold function according to Lemma~\ref{lem: adding a clause},
i.e. $f_i^+ \in T(S^+)$.
Since the \textsc{or} operation ($\vee$) is also a threshold function,
it follows that: $f^+ \in T(S^+,n+m^-,m,1)$. 
\end{proof}

\begin{proof}[Proof of Theorem~\ref{thm: two layers general lower}]
The Multiplexing Lemma~\ref{lem: multiplexing} implies that
the transformation $f \mapsto f^+$ is an injective map
from $T(S,n,m)$ into $T(S^+,n+m^-,m,1)$.
Indeed, \eqref{eq: multiplexing} allows one to uniquely recover all the threshold functions $f_i$ and thus 
$f = (f_1,\ldots,f_m)$ from $f^+$.
Thus:
$$
|T(S^+,n+m^-,m,1)| \ge |T(S,n,m)|.
$$
Taking logarithms on both sides completes the proof.
\end{proof}

Specializing the result to $S =H^n$, yields 
a tight bound on the capacity of networks with a single hidden layer.  

\begin{corollary}[Capacity of networks with a single hidden layer]		\label{cor: two layers}
  If $n \ge 1.1 \lceil \log_2 m \rceil$, then:
  \begin{equation}	\label{eq: two layers non-asymptotic}
  C(n,m,1) \asymp n^2 m.
  \end{equation}
  Moreover, if $n \to \infty$ and $2 \le \log_2 m \ll n$, then:
  \begin{equation}	\label{eq: two layers asymptotic}
  C(n,m,1) = n^2 m (1+o(1)).
  \end{equation}
\end{corollary}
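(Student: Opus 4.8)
The plan is to combine the matching upper and lower bounds that have already been assembled. For the upper bound, I would invoke Proposition~\ref{prop: multi-channel upper} (or equivalently Corollary~\ref{cor: main upper bound}) with the three-layer architecture $A(n,m,1)$: this gives $C(n,m,1) \le n\cdot n\cdot m + \min(n,m)\cdot m\cdot 1 \le n^2 m + nm \lesssim n^2 m$, using only $n,m \ge 4$ (and the small-layer cases are absorbed into the constant, as in Corollary~\ref{cor: main upper bound}). So the work is entirely on the lower bound.

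For the lower bound, the idea is to reduce to a set of the form $S = H^{n'} \oplus H^{m^-}$ and apply the multiplexing Theorem~\ref{thm: two layers general lower}. Concretely, write $m^- = \lceil \log_2 m\rceil$ and set $n' := n - m^-$. The hypothesis $n \ge 1.1\, m^-$ guarantees $n' \ge 0.1\, n \gtrsim n$, so $n'$ is of the same order as $n$. Now $H^n$ contains (as an affine copy, after relabeling coordinates) the set $S^+ := H^{n'}\oplus H^{m^-}$; by monotonicity of capacity in the input set, $C(n,m,1) = C(H^n,n,m,1) \ge C(S^+, n'+m^-, m, 1)$. By Theorem~\ref{thm: two layers general lower} applied with $S = H^{n'}$, the right side is at least $C(H^{n'})\cdot m$. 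Finally Theorem~\ref{thm: Zuev} gives $C(H^{n'}) \ge n'(n'-1)/2 \gtrsim (n')^2 \gtrsim n^2$, since $n' \gtrsim n$. Chaining these inequalities yields $C(n,m,1) \gtrsim n^2 m$, which together with the upper bound proves \eqref{eq: two layers non-asymptotic}.

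For the asymptotic statement \eqref{eq: two layers asymptotic}, the same chain of inequalities is run with the sharp form of Zuev's theorem: under $n\to\infty$ and $\log_2 m \ll n$ we have $m^- = \lceil\log_2 m\rceil = o(n)$, hence $n' = n - o(n) = n(1+o(1))$, and $C(H^{n'}) = (n')^2(1+o(1)) = n^2(1+o(1))$ by \eqref{eq: Zuev sharp}. Thus $C(n,m,1) \ge n^2 m(1+o(1))$. The matching upper bound $C(n,m,1) \le n^2 m + nm = n^2 m(1+o(1))$ comes from Proposition~\ref{prop: multi-channel upper} with constant $1$ (valid since $n,m\ge 4$ eventually), noting $nm = o(n^2 m)$. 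Combining gives $C(n,m,1) = n^2 m(1+o(1))$.

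The only subtle point — and the one I would be most careful about — is the bookkeeping that turns the hypothesis $n \ge 1.1\lceil\log_2 m\rceil$ into the statement $n' = n - \lceil\log_2 m\rceil \gtrsim n$: one needs $\lceil \log_2 m\rceil \le n/1.1$, so $n' = n - \lceil\log_2 m\rceil \ge n(1 - 1/1.1) = n/11 > 0$, and in the asymptotic regime the weaker hypothesis $\log_2 m \ll n$ does the analogous job. Everything else is a direct concatenation of Theorem~\ref{thm: two layers general lower}, Theorem~\ref{thm: Zuev}, monotonicity (part~\ref{pr: monotonicity} and affine invariance, part~\ref{pr: affine invariance}, of Lemma~\ref{lem: capacity basic}), and the upper bound from Proposition~\ref{prop: multi-channel upper}; no new ideas are needed beyond what has already been proved.
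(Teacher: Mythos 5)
Your proposal is correct and takes essentially the same route as the paper: the upper bound from Proposition~\ref{prop: multi-channel upper}/Corollary~\ref{cor: main upper bound 1}, and the lower bound by applying the multiplexing Theorem~\ref{thm: two layers general lower} with $S = H^{n-m^-}$ (your $S^+$ is literally $H^n$, so the monotonicity/affine-invariance step is vacuous) followed by Theorem~\ref{thm: Zuev}, with the sharp form of Zuev's theorem giving the asymptotic statement. The only nit is the passing claim $n' \ge 0.1\,n$ — the hypothesis $n \ge 1.1\lceil \log_2 m\rceil$ actually gives $n' \ge n/11$, as you yourself compute later — which is harmless since only the order of magnitude matters.
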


\begin{proof}
The upper bound in \eqref{eq: two layers non-asymptotic} is a partial case of 
Corollary~\ref{cor: main upper bound 1}).
To prove the asymptotic upper bound in \eqref{eq: two layers asymptotic}, note that 
if $n,m \ge 4$, Proposition~\ref{prop: multi-channel upper} gives
$$
C(n,m,1) \le n^2 m + \min(n,m) m.
$$
Furthermore, we have $\min(n,m) m \ll n^2 m$ if $n \to \infty$. 

To obtain the lower bounds, apply Theorem~\ref{thm: two layers general lower} for $S = H^{n-m^-}$. 
It gives:
\begin{align*} 
C(n,m,1) 
  &\ge C(n-m^-) m \\
  &\gtrsim (n-m^-)^2 m \quad \text{(by Theorem~\ref{thm: Zuev})}\\
  &\gtrsim n^2 m,
\end{align*}
where in the last step we used the assumption that $m^- = \lceil \log_2 m \rceil$.
This proves the first part of the corollary.
The second part follows from the same argument and the assumption that $m^- \ll n$. 
\end{proof}

Figure~\ref{fig: multiplexing} illustrates the multiplexing technique of Lemma~\ref{lem: multiplexing}.
The additional $m^-$ input bits form the vector $x^-$ act as {\em selector} bits. These bits are used to select any one of the $m$ functions $f_1,\ldots,f_m$ to be the final output of the network. Since $m^- \ll m$, the selector is very small and usually does not 
interfere with the capacity count.

\begin{figure}[htp]			
  \centering 
    \includegraphics[width=0.4\textwidth]{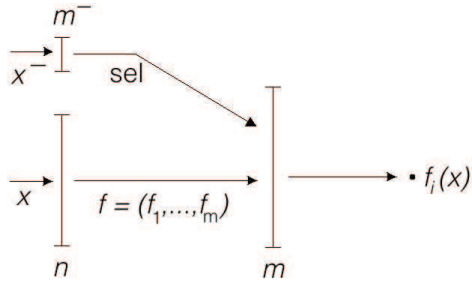} 
    \caption{Multiplexing allows one to transmit any one of the $m$ given functions $f_i$ 
      through a single output channel. The selector bits $x^-$ are used to select which function to transmit.}
  \label{fig: multiplexing}
\end{figure}

\section{Networks with two hidden layers: enrichment}			\label{s: enrichment}

\subsection{Enrichment}

A key recurrent question in this paper is: what is the relation 
between the capacity and cardinality of a general set $S \subset H^n$?
Lemma~\ref{lem: capacity elementary upper} and Theorem~\ref{thm: BTF on general S}
established upper and lower bounds that are generally best possible:
\begin{equation}	\label{eq: capacity vs cardinality}
\log_2^2 |S| \lesssim C(S) \le n \log_2|S|.
\end{equation}
The lower bound, however, is sometimes too weak for practical applications, particularly for
the forthcoming analysis of networks with two hidden layers. 
One may wonder if the capacity of $S$ can be increased by first preprocessing $S$. In particular, 
can we transform $S$ into a set $F(S)$ whose capacity is significantly larger, 
ideally as large as the upper bound in \eqref{eq: capacity vs cardinality} allows?
In doing so, we would like to stay in the category of subsets of the Boolean cube and 
use only transformations $F$
that a neural network can compute. Thus, we require the {\em enrichment map} $F$ 
to be a threshold map $F \in T(n,m)$, i.e. a map from $H^n$ to $H^m$ 
whose all $m$ components are threshold functions.
We address the enrichment problem in the particular case where $S = H^n$, leaving the general 
case of $S \subset H^n$ for future work.

\begin{theorem}[Enrichment]		\label{thm: enrichment}
  Let $n$ and $m$ be positive integers satisfying $n \le m \le 2^{n/2}$.
  There exists an injective linear threhsold map $F \in T(n,m)$ such that:
  $$
  C \big( F(H^n) \big) 
  \asymp nm.
  $$
\end{theorem}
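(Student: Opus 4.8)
The plan is to construct $F$ explicitly as a ``tensor-style'' map that writes each input $x\in H^n$ into a structured subset of $H^m$ on which we can exhibit exponentially many threshold functions via the slicing machinery of Section~\ref{s: product sets}. The target capacity $nm$ is, up to constants, $m\log_2|F(H^n)|$ since $|F(H^n)|=2^n$ (we need $F$ injective), so we are asking $F(H^n)$ to be as rich as the upper bound in Lemma~\ref{lem: capacity elementary upper} permits. The natural way to force richness is to make $F(H^n)$ contain (an affine copy of) a product set $U\oplus\cdots\oplus U$ with $U$ linearly independent, because Corollary~\ref{cor: CSp} then yields $C\bigl(F(H^n)\bigr)\gtrsim p^2|U|\log_2|U|$, and we tune $p,|U|$ so that $p^2|U|\log_2|U|\asymp nm$ while $|U|^p = 2^n$.

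First I would fix the parameters. Write $p\asymp n/\log_2 m$ and let $U\subset H^{m/p}$ be a linearly independent set (e.g.\ a set of distinct standard-basis-type or prefix vectors) with $|U|\asymp m/p$, so that $\log_2|U|\asymp \log_2(m/p)$; one checks that the constraint $n\le m\le 2^{n/2}$ makes these choices consistent (in particular $p\ge 1$ and $m/p\ge 1$), and that $|U|^p\ge 2^n$ so one can embed $H^n$ injectively into $U\oplus\cdots\oplus U$ ($p$ copies). Concretely, partition the $n$ input coordinates into $p$ blocks, use the exponential-type map of Lemma~\ref{lem: sparsifier} (or Lemma~\ref{lem: checking equality}) on each block to land in one copy of $U$, and concatenate; Lemma~\ref{lem: checking equality} guarantees each coordinate of $F$ is a threshold function, so $F\in T(n,m)$, and block-wise injectivity gives global injectivity. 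Then $F(H^n)$ contains a set affinely equivalent to a full product $U^{\oplus p}$ (or is sandwiched between such a product and its image), and Corollary~\ref{cor: CSp} together with affine invariance (Lemma~\ref{lem: capacity basic}, part~\ref{pr: affine invariance}) gives
$$
C\bigl(F(H^n)\bigr)\ \ge\ C\bigl(U^{\oplus p}\bigr)\ \gtrsim\ p^2\,|U|\log_2|U|\ \asymp\ \Bigl(\frac{n}{\log_2 m}\Bigr)^{\!2}\cdot\frac{m}{p}\cdot\log_2\frac{m}{p}.
$$
Plugging $p\asymp n/\log_2 m$ back in, the right-hand side simplifies to $\asymp n m$ provided $\log_2(m/p)\asymp\log_2 m$, which holds because $p\le n\le m$. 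The matching upper bound $C\bigl(F(H^n)\bigr)\le m\log_2 2^n = nm$ is immediate from Lemma~\ref{lem: capacity elementary upper} (for $m\ge 4$), so the two sides match up to constants.

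The main obstacle I anticipate is the bookkeeping in the parameter regime: one must verify that the integer roundings of $p$ and $|U|$ never degrade the bound by more than a constant, that $|U|^p$ genuinely exceeds $2^n$ so the embedding exists, and — most delicately — that $\log_2(m/p)$ is comparable to $\log_2 m$ uniformly over the whole range $n\le m\le 2^{n/2}$, since if $m$ is only polynomially larger than $n$ then $p$ is large and $m/p$ could in principle be small. Handling the two sub-regimes ($m$ close to $n$, where $p\asymp n/\log_2 m$ is large but $|U|\asymp m/p$ may be as small as a constant, versus $m$ close to $2^{n/2}$, where $p$ is a constant) separately, or choosing $U$ to be a slightly larger linearly independent set to buy slack, should resolve this. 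A secondary point to check carefully is that taking the image under a threshold map $F$ and then passing to a product subset is compatible with the hypotheses of Corollary~\ref{cor: CSp} (linear independence of $U$ is preserved, and the direct-sum structure survives the block construction); this is where the explicit exponential-map construction, rather than an abstract counting argument, pays off.
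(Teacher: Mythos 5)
Your overall strategy is exactly the paper's: map each block of input coordinates through the exponential/sparsifier map of Lemma~\ref{lem: sparsifier}, so that $F(H^n)$ is a direct sum of copies of a linearly independent set $U$, get the lower bound from Corollary~\ref{cor: CSp}, and get the matching upper bound from Lemma~\ref{lem: capacity elementary upper}. The gap is in the parameter choice, and it is not merely bookkeeping: with $p \asymp n/\log_2 m$ blocks, each block has $k = n/p \asymp \log_2 m$ input bits, so the sparsifier sends it into $H^{2^k}$ with $2^k \asymp m$ coordinates, and the concatenated image needs $p\cdot 2^k \asymp nm/\log_2 m \gg m$ coordinates --- it does not fit in $H^m$. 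If instead you insist, as written, that each block lands in a linearly independent $U \subset H^{m/p}$ with $|U| \asymp m/p$, then injectivity fails: $|U|^p = 2^{p\log_2(m/p)}$, and for instance at $m = n^2$ one gets $p\log_2(m/p) \approx n/2$, so $|U|^p \approx 2^{n/2} \ll 2^n$ and $F$ cannot be injective on $H^n$ (each block of $\asymp \log_2 n^2$ input bits would have to inject into a set of size $\asymp n\log n < n^2$). The condition $\log_2(m/p)\asymp\log_2 m$ that your capacity computation needs also genuinely fails when $m$ is comparable to $n$, as you suspected.

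The correct choice is forced by requiring the image to fill $H^m$ exactly while staying injective: the block length $k$ must solve $n/k = m/2^k$ (equivalently $p\,2^{n/p}=m$, not $2^{n/p}=m$, which is what $p\asymp n/\log_2 m$ solves). With that choice $U=\{e_1,\dots,e_{2^k}\}$, $|U|=2^k$, $p=n/k$, and Corollary~\ref{cor: CSp} gives $C(F(H^n)) \ge \tfrac18 (n/k)^2 2^k k = \tfrac18 n^2\cdot\frac{2^k}{k} = \tfrac18 nm$ with no extra comparability hypothesis. The remaining (real) work, which your proposal defers to ``handling sub-regimes'' and ``buying slack,'' is to deal with the fact that this equation need not have an integer solution: the paper does this by an intermediate-value/balancing argument producing $k\in[2,n/2]$, $n_0\in[n/2,n]$, $m_0\in[m/8,m/2]$ with $n_0/k=m_0/2^k$ (valid when $4n\le m\le 2^{n/2}$), then extends the balanced map to $T(n,m)$ by the identity map plus zero padding, and treats the remaining range $n\le m\le 4n$ separately via the identity embedding together with Theorem~\ref{thm: BTF on general S} (or Theorem~\ref{thm: Zuev}). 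As it stands, your construction is not injective (or does not fit in $H^m$) over most of the admissible range of $m$, so the proof is incomplete until the balancing step is carried out.
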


Let us make two remarks before proving this result. First, the map $F$ transforms the cube $S = H^n$ into an ``enriched'' version 
$S' := F(S) \subset H^m$. The enriched set $S'$ has the same cardinality as $S$
and almost the largest possible capacity:
$$
C(S') \asymp nm = m \log_2 |S'|,
$$
which matches the upper bound in \eqref{eq: capacity vs cardinality} in dimension $m$. Second, note also that an upper bound associated with Theorem~\ref{thm: enrichment} holds for any map 
$F: H^n \to H^m$.
This follows straight from Lemma~\ref{lem: capacity elementary upper}.
Indeed, $S' = F(H^n)$ is a subset of $H^m$ and has cardinality $2^n$,
so:
$$
C(S') \le m \log_2 |S'| = mn.
$$
The non-trivial part in Theorem~\ref{thm: enrichment} is the lower bound.
Our construction of $F$ will be based on sparsity considerations.

\subsection{Construction of the enrichment map}		\label{s: construction of enrichment}

Let $k$ be a positive integer and $e_i$ be the canonical basis vectors of $\R^{2^k}$. 
Fix any one-to-one map:
$$
f :\; H^k \to \{e_1,\ldots,e_{2^k}\}.
$$
According to Lemma~\ref{lem: sparsifier}, $f \in T(k,2^k)$.
Define the enrichment map $F: H^n \to H^m$ by applying $f$ to each block of $k$ 
successive coordinates of $x$. 
For $F$ to be well defined, the length $k$ of the blocks must satisfy the equation:
\begin{equation}	\label{eq: nmk}
\frac{n}{k} = \frac{m}{2^k},
\end{equation}
as both sides of the equation determine the number of blocks. 
Assume for now that this equation has an integer solution $k \in [2, n/2]$, and let us
prove the theorem in this `balanced' case.  
The general case will be considered in 
Sections~\ref{s: rich embedding proof general}--\ref{s: full generality}.

For this, we partition a vector $x \in H^n$ into $n/k$ vectors $x_i \in H^k$, 
each containing a block of successive coordinates of length $k$:
$$
x = \bar{x}_1 \oplus \cdots \oplus \bar{x}_{n/k},
$$
and define:
$$
F(x) \coloneqq f(\bar{x}_1) \oplus \cdots \oplus f(\bar{x}_{n/k}).
$$
Since $f$ is a Boolean threshold map, $F$ is a threshold map too,
i.e. $F \in T(n,m)$ as required.

\subsection{Proof of Theorem~\ref{thm: enrichment} in the balanced case}	\label{s: proof balanced}

By construction, the image of $F$ consists of $p = n/k$ copies 
of the image of $f$:
$$
F(H^n) = U \oplus \cdots \oplus U, 
\quad \text{where} \quad
U \coloneqq f(H^k).
$$
Next, recall that the image of $f$ is the set of  canonical vectors of $\R^{2^k}$, i.e. 
$$
U = \{e_1,\ldots,e_{2^k}\}.
$$
Let us apply Corollary~\ref{cor: CSp}.
Since  $U$ is linear independent, $|U| = 2^k \ge 2$,
and $p = n/k \ge 2$ by the assumptions on $k$,
the corollary can be applied. This application gives:
\begin{align*} 
C(F(H^n) 
  &\ge \frac{1}{8} p^2 |U| \log_2|U|
  = \frac{1}{8} \Big( \frac{n}{k} \Big)^2 2^k k \\
  &= \frac{1}{8} n^2 \cdot \frac{2^k}{k}
  = \frac{1}{8} n^2 \cdot \frac{m}{n} \quad \text{(by \eqref{eq: nmk})}\\
  &= \frac{1}{8} n m.
\end{align*}
This proves Theorem~\ref{thm: enrichment} in the special balanced case, where the equation \eqref{eq: nmk} has an integer solution $k \in [2,n/2]$.
Note that the argument so far did not use the assumption $4n \le m \le 2^{n/2}$ of the theorem;
this assumption is used next in order to address the general (unbalanced) case.
\qed

\subsection{Balancing} \label{s: rich embedding proof general}

The following lemma shows how to adjust $n$ and $m$ 
so that the equation \eqref{eq: nmk} has an integer solution $k$. 

\begin{lemma}			\label{lem: n0m0k}
  Let $n \ge 4$ and $m$ be positive integers such that $4n \le m \le 2^{n/2}$. 
  Then there exist integers $n_0 \in [n/2,n]$, $m_0 \in [m/8,m/2]$, and $k \in [2,n/2]$
  such that: 
  \begin{equation}	\label{eq: n0m0k}
  \frac{n_0}{k} = \frac{m_0}{2^k}.		
  \end{equation}
\end{lemma}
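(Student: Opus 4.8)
The plan is as follows. Once we fix a block length $k$ and a number of blocks $p$, we may simply set $n_0 := pk$ and $m_0 := p\,2^k$; then the identity \eqref{eq: n0m0k} holds automatically, and the whole problem reduces to choosing integers $k \ge 2$ and $p \ge 1$ so that $n_0$ falls in $[n/2,n]$ and $m_0$ falls in $[m/8,m/2]$. The key structural fact I would exploit is that $g(k) := 2^k/k$ is non-decreasing in $k$ and satisfies $g(k+1)/g(k) = 2k/(k+1) < 2$, so $g$ cannot jump across a window spanning a factor of $2$.

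First I would choose $k$. From $m \ge 4n$ we get $g(2) = 2 \le m/(2n)$, while the hypothesis $m \le 2^{n/2}$ yields $g(\lfloor n/2\rfloor) > m/(2n)$ by a short computation (for $n$ even, $g(n/2) = 2^{n/2+1}/n \ge 2m/n$; the odd case is analogous). Hence there is a largest integer $k \in [2,\lfloor n/2\rfloor]$ with $g(k) \le m/(2n)$, and since $g(\lfloor n/2\rfloor)$ already exceeds $m/(2n)$ this $k$ satisfies $k+1 \le \lfloor n/2\rfloor$; maximality then forces $g(k+1) > m/(2n)$, and the factor-$2$ bound on consecutive ratios gives $g(k) \ge \tfrac12 g(k+1) > m/(4n)$. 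Thus we have $k \in [2,n/2]$ with $m/(4n) < 2^k/k \le m/(2n)$.

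Then I would take $p := \lfloor n/k\rfloor$. Since $k \le n/2$ we have $n/k \ge 2$, hence $p \ge 2$ and, using $k \le n/2$ once more, $n/(2k) \le p \le n/k$; multiplying by $k$ gives $n_0 = pk \in [n/2,n]$. For $m_0 = p\,2^k$, the bound $2^k/k \le m/(2n)$ gives $m_0 \le (n/k)\,2^k = n\cdot(2^k/k) \le m/2$, while $2^k/k > m/(4n)$ gives $m_0 \ge (n/(2k))\,2^k = (n/2)\cdot(2^k/k) > m/8$. So all the required conditions hold, and $n_0/k = p = m_0/2^k$, which completes the construction.

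I expect the one genuinely delicate point to be that the factor-$2$ slack in $n_0$ and the factor-$4$ slack in $m_0$ are exactly enough to absorb both the rounding in $p = \lfloor n/k\rfloor$ and the up-to-factor-$2$ gap between $2^k/k$ and its target $m/(2n)$; this is precisely why the lower bound $g(k) > m/(4n)$ matters, and it rests on consecutive values of $g$ lying within a factor of $2$. One must also be careful with the boundary estimate $g(\lfloor n/2\rfloor) > m/(2n)$, which is what guarantees that $k+1$ is a legitimate index, and with the odd-$n$ versions of these inequalities; but these are routine.
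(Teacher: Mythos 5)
Your proposal is correct and follows essentially the same route as the paper: the identical construction $n_0 = pk$, $m_0 = p\,2^k$ with $p = \lfloor n/k \rfloor$, driven by the monotonicity of $2^k/k$ together with the hypotheses $4n \le m \le 2^{n/2}$, and with the same factor-of-two rounding losses absorbed by the slack in the intervals $[n/2,n]$ and $[m/8,m/2]$. The only (minor) difference is how $k$ is located: you take the largest integer $k$ with $2^k/k \le m/(2n)$ and use the consecutive-ratio bound $2k/(k+1) < 2$ to get $2^k/k > m/(4n)$, whereas the paper solves the real equation $2^x/x = m/(2n)$ by the intermediate value theorem and rounds $x$ down to $k$ -- both yield the same two-sided control and the remainder of the verification coincides.
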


\begin{proof}
We claim that:
\begin{equation}	\label{eq: x}
\frac{n}{x} = \frac{m/2}{2^x} \quad \text{for some } x \in \Big[ 2, \frac{n}{2} \Big].
\end{equation}
To show this, consider the function $r: \R^+ \to \R^+$ defined by:
$$
r(x) = \frac{2^x}{x}.
$$
It is easy to check that $r$ increases to infinity on the interval $[2,\infty)$.
Since $r(2) = 2 \le m/(2n)$ by assumption, the intermediate value theorem 
guarantees the existence of a point $x \ge 2$ where 
$r(x) = m/(2n)$. Equivalently, the equation \eqref{eq: x} has a solution $x \in [2,\infty)$.
To give an upper bound on $x$, note that by the assumptions on $n$ and $m$ 
we have:
$$
r(x) = \frac{m}{2n} \le \frac{2^{n/2}}{n/2} = r(n/2).
$$
Since $r$ is an increasing function on the interval $[2,\infty)$ and both $x$ and $n/2$ lie 
in this interval, it follows that $x \le n/2$. This verifies our claim.

Now define: 
$$
k \coloneqq \lceil x \rceil, \quad
n_0 \coloneqq \left\lceil \frac{n}{k} \right\rceil k, \quad
m_0 \coloneqq \left\lceil \frac{n}{k} \right\rceil 2^k.
$$
Then the identity \eqref{eq: n0m0k} obviously holds.
Next, we must check the ranges for $k$, $n_0$ and $m_0$.

By the definition of $k$, we have $k = \lceil x \rceil \ge 2$ since $x \ge 2$ and 
$k = \lceil x \rceil \le x \le n/2$. Thus $k \in [2,n/2]$, as required. 

By the definition of $n_0$, we have:
$n_0 \le (n/k) k = n$ and: 
$$
n_0 \ge \Big( \frac{n}{k}-1 \Big) k = n-k \ge \frac{n}{2},
$$
where the last bound holds since $k = \lceil x \rceil \le x \le n/2$.
Thus $n_0 \in [n/2,n]$, as required.

Finally, by the definition of $m_0$, we have:
\begin{equation}	\label{eq: m0 upper}
m_0 \le \frac{n}{k} 2^k 
\le \frac{n}{x} 2^x 
= \frac{m}{2}
\end{equation}
where the middle bound holds since $r$ increases on the interval $[2,\infty)$, 
both $k$ and $x$ lie in that interval, and $k = \lceil x \rceil \le x$. 
The last bound in \eqref{eq: m0 upper} follows from \eqref{eq: x}.

As for the lower bound on $m_0$, the definition of $m_0$ yields:
\begin{align*} 
m_0 
&\ge \Big( \frac{n}{k} - 1 \Big) 2^k \\
&\ge \Big( \frac{n}{x} - 1 \Big) 2^{x-1} 
	\quad \text{(since $k = \lceil x \rceil$ satisfies $x-1 \le k \le x$)} \\
&\ge \frac{n}{2x} \cdot \frac{1}{2} 2^x
	\quad \text{(since $n/x \ge 2$, which follows from \eqref{eq: x})} \\
&= \frac{m}{8}
	\quad \text{(by the identity \eqref{eq: x} that determines $x$).}
\end{align*}
Thus, in short, $m_0 \in [m/8,m/2]$ and the proof of the lemma is complete.
\end{proof}

\subsection{Proof of Theorem~\ref{thm: enrichment} in full generality}	\label{s: full generality}

Without any loss of generality, we can assume that:
$$
n \ge 4 
\quad \text{and} \quad 4n \le m \le 2^{n/2}.
$$
Indeed, for $n < 4$ the conclusion of the theorem
is trivially true by adjusting the implicit absolute constant factors.
In the range $n \le m \le 4n$, we can use the identity embedding $F$
and get the conclusion from Theorem~\ref{thm: BTF on general S} 
or Theorem~\ref{thm: Zuev}. 

This allows us to apply Lemma~\ref{lem: n0m0k}. 
Let $n_0$, $m_0$ and $k$ be the numbers from the conclusion of that lemma.
Then there exist a map $F' \in T(n_0,m_0)$ such that:
\begin{equation}	\label{eq: CImF'}
C \big( F'(H^{n_0}) \big) \gtrsim n_0 m_0
\end{equation}
This follows from the balanced case of the theorem we proved in 
Sections~\ref{s: construction of enrichment}--\ref{s: proof balanced}, by replacing $n$ and $m$ with $n_0$ and $m_0$ in that argument.

Now extend $F' \in T(n_0,m_0)$ to a map $F \in T(n,m)$ using the identity function. More formally, 
partition each vector $x \in H^n$ as:
$$
x = \bar{x}' \oplus \bar{x}'', 
\quad \text{where } \bar{x}' \in H^{n_0} \text{ and } \bar{x}'' \in H^{n-n_0},
$$
and define $F(x) \in H^m$ by: 
$$
F(x) \coloneqq F'(\bar{x}') \oplus \bar{x}'' \oplus 0.
$$
Here $0 = (0,\ldots,0)$ is a padding vector of zeros, which we add in order to make 
$F(x)$ consist of exactly $m$ coordinates.

We must check that $F(x)$ is well defined.
The vector $F'(\bar{x}')$ consists of $m_0$ coordinates and $\bar{x}''$ consists of $n-n_0$ coordinates. 
In order for the concatenation of these two vectors
to fit in $H^m$, we must have: $m_0 + (n-n_0) \le m$.
This is indeed the case since $m_0 \le m/8$ and $n-n_0 \le n \le m/4$ by Lemma~\ref{lem: n0m0k}.

Since both $F'$ and the identity map are injective threshold maps, 
the map $F$ is an injective threshold map too. By construction, the projection of 
$F(H^n)$ onto the first $m_0$ coordinates equals $F'(H^{n_0})$. Therefore:
$$
C \big( F(H^n) \big) 
  \ge C \big( F'(H^{n_0}) \big) \gtrsim n_0 m_0 \gtrsim n m,
$$
where we used \eqref{eq: CImF'} and Lemma~\ref{lem: n0m0k}. This completes the proof of 
Theorem~\ref{thm: enrichment}.
\qed

\subsection{Capacity of networks with two hidden layers}

As an application of the Enrichment Theorem~\ref{thm: enrichment}, 
we can estimate the capacity of networks with two hidden layers.

\begin{theorem}[Two hidden layers]			\label{thm: three layers}
  If $n \ge 3 \lceil \log_2 m \rceil$, $m \ge 3 \lceil \log_2 p \rceil$,
  and $n \ge 3 \lceil \log_2 p \rceil$, then:
  $$
  C(n,m,p,1) \asymp n^2 m + \min(n, m) mp.
  $$
\end{theorem}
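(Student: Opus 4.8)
The plan is to prove the matching upper and lower bounds separately; since $a+b \asymp \max(a,b)$ for nonnegative $a,b$, the lower bound splits into the two independent estimates $C(n,m,p,1) \gtrsim n^2 m$ and $C(n,m,p,1) \gtrsim \min(n,m)\,mp$. The upper bound is immediate: applying Corollary~\ref{cor: main upper bound 1} to the architecture $A(n,m,p,1)$ gives
$C(n,m,p,1) \lesssim n\cdot n m + \min(n,m)\cdot m p = n^2 m + \min(n,m)\,mp$
(one may equally invoke Corollary~\ref{cor: main upper bound} and absorb the output-node term $\min(n,m,p)p \le \min(n,m)\,mp$). So all the work is in the lower bound.

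For $C(n,m,p,1) \gtrsim n^2 m$, I would show that $A(n,m,p,1)$ can reproduce every function computed by the single-hidden-layer network $A(n,m,1)$: given $\phi\circ g$ with $g\in T(n,m)$ and $\phi\in T(m,1)$, take the first threshold map to be $g$, the middle threshold map $H^m\to H^p$ to be $(\phi,0,\dots,0)$ (the constant $0$ is a threshold function), and the output function $H^p\to\{0,1\}$ to be the projection onto the first coordinate; the composition is $\phi\circ g$ and distinct such functions stay distinct, so $C(n,m,p,1)\ge C(n,m,1)$, which is $\asymp n^2 m$ by Corollary~\ref{cor: two layers} (applicable since $n\ge 3\lceil\log_2 m\rceil$). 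For the estimate $C(n,m,p,1)\gtrsim \min(n,m)\,mp$ when $n\ge m$, i.e.\ $\min(n,m)\,mp = m^2 p$, I would let the first threshold map be the coordinate projection $H^n\to H^m$, whose image is all of $H^m$; property~\ref{pr: replace by image} of Lemma~\ref{lem: capacity basic} then gives $C(n,m,p,1)\ge C(H^m,m,p,1)=C(m,p,1)\asymp m^2 p$ by Corollary~\ref{cor: two layers} (using $m\ge 3\lceil\log_2 p\rceil$).

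The remaining case $n<m$, where $\min(n,m)\,mp = nmp$, is the core of the argument and combines enrichment with multiplexing. Put $m^- := \lceil\log_2 p\rceil$. The hypotheses $n\ge 3\lceil\log_2 p\rceil$ and $m\ge 3\lceil\log_2 p\rceil$ give $m^-\le n/3$ and $m^-\le m/3$, and $n\ge 3\lceil\log_2 m\rceil$ gives $m \le 2^{n/3}\le 2^{(n-m^-)/2}$; hence the Enrichment Theorem~\ref{thm: enrichment} applies with $n':=n-m^-$ and $m':=m-m^-$ (note $n'\le m'$ since $n\le m$), yielding an injective threshold map $F_0\in T(n',m')$ with $C(F_0(H^{n'}))\asymp n'm'\asymp nm$. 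Now define the first threshold map $H^n\to H^m$ by $x_0\oplus x^-\mapsto F_0(x_0)\oplus x^-$ (enrichment on the first $n'$ bits, identity on the last $m^-$ bits); this is an injective threshold map with image $S'\oplus H^{m^-}$, where $S' := F_0(H^{n'})$. Property~\ref{pr: replace by image} of Lemma~\ref{lem: capacity basic} gives $C(n,m,p,1)\ge C(S'\oplus H^{m^-},\,m,p,1)$, and Theorem~\ref{thm: two layers general lower} applied to the set $S'\subset\R^{m'}$ with hidden width $p$ (so that the number of selector bits is exactly $\lceil\log_2 p\rceil = m^-$ and $m'+m^- = m$) yields $C(S'\oplus H^{m^-},\,m,p,1)\ge C(S')\,p\asymp nmp$, closing this case.

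The step I expect to be the main obstacle is precisely this last case: arranging the bookkeeping so that the enriched copy of $H^{n'}$ inside $H^{m'}$ and the $\lceil\log_2 p\rceil$ multiplexing selector bits fit together into exactly $m$ coordinates feeding the middle layer, and checking that the three numerical hypotheses of the theorem are exactly strong enough to make both the Enrichment Theorem and the multiplexing-based two-layer lower bound applicable with only constant-factor loss (in particular that $n-m^-\asymp n$, $m-m^-\asymp m$, and $m-m^- \le 2^{(n-m^-)/2}$).
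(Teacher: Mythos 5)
Your proposal is correct and follows essentially the same route as the paper: the upper bound from Corollary~\ref{cor: main upper bound 1}, the $n^2m$ and $m^2p$ terms by reduction to single-hidden-layer networks via Corollary~\ref{cor: two layers}, and the $nmp$ term (case $n<m$) by combining the Enrichment Theorem~\ref{thm: enrichment} with the multiplexing bound of Theorem~\ref{thm: two layers general lower}. The only difference is bookkeeping: the paper first proves the bound for the inflated network $A(n+p^-,m+p^-,p,1)$ under weaker factor-$2$ assumptions and then rescales $n,m$ by $2/3$ at the end, whereas you carve the $\lceil\log_2 p\rceil$ selector bits out of the given $n$ and $m$ from the start; your verification that the hypotheses make the enrichment and multiplexing steps applicable (in particular $m-m^-\le 2^{(n-m^-)/2}$) is sound.
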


\begin{proof}
The upper bound follows as a special case of Corollary~\ref{cor: main upper bound 1}.
To prove the lower bound, let us first only assume that: 
\begin{equation}	\label{eq: weaker assumptions nmp}
n \ge 2 \lceil \log_2 m \rceil, \quad 
m \ge 2 \lceil \log_2 p \rceil, \quad
n \ge 2 \lceil \log_2 p \rceil.
\end{equation}
Then we can obtain the $n^2 m$ term by comparing 
the $A(n,m,p,1)$ network with the $A(n,m,1)$ network. 
Indeed, we have:
\begin{align*} 
C(n,m,p,1) 
  &\ge C(n,m,1,1) \quad \text{(by monotonicity)} \\
  &\ge C(n,m,1) 
  	\quad \text{(by contractivity, see part~\ref{pr: contractivity} of Lemma~\ref{lem: capacity basic})} \\
  &\gtrsim n^2 m \quad \text{(by Corollary~\ref{cor: two layers}).}
\end{align*}
Next, we consider two cases: $n \ge m$ and $n < m$.

\subsubsection*{Case 1: $n \ge m$}
In this regime, we can compare the two-hidden-layers network with the single-hidden-layer network $A(m,p,1)$. Just like above, using monotonicity, contractivity, and Corollary~\ref{cor: two layers}, 
we get:
$$
C(n,m,p,1) \ge C(m,m,p,1) \ge C(m,p,1) \gtrsim m^2 p.
$$

\subsubsection*{Case 2: $n < m$}
In this regime, we use both enrichment and multiplexing. 
The first assumption in \eqref{eq: weaker assumptions nmp} yields $m \le 2^{n/2}$, 
which allows us to use Theorem~\ref{thm: enrichment}.
Fix an enrichment map $F \in T(n,m)$ whose existence is guaranteed by Theorem~\ref{thm: enrichment}.
Applying part~\ref{pr: replace by image} of Lemma~\ref{lem: capacity basic}, 
for the map $(F \oplus \id)(x \oplus x^-) = F(x) \oplus x^-$
that belongs to the class $T(n+p^-, m+p^-)$ and for $S = H^{n+p^-}$, we obtain:
\begin{align*} 
C(n+p^-,m+p^-,p,1)
&\ge C \big( F(H^n) \oplus H^{p^-}, m+p^-, p,1 \big) \\
  &\ge C \big( F(H^n) \big) p \quad \text{(by Theorem~\ref{thm: two layers general lower})} \\
  &\gtrsim nmp \quad \text{(by Theorem~\ref{thm: enrichment}).}
\end{align*}

\subsubsection*{Putting everything together}
In summary, we showed that $C(n+p^-,m+p^-,p)$ is always bounded below by $n^2 m$,
and is also bounded below by $m^2 p$ if $m < n$, and by $nmp$ if $m \ge n$.
This means that: 
$$
C(n+p^-,m+p^-,p,1) \gtrsim n^2 m + \min(n, m) mp.
$$
The last two assumptions in \eqref{eq: weaker assumptions nmp} state that:
$p^- = \lceil \log_2 p \rceil \le n/2$ and $p^- \le m/2$.
Thus monotonicity gives:
$$
C(\lfloor 3n/2 \rfloor, \lfloor 3m/2 \rfloor, p, 1) \ge C(n+p^-,m+p^-,p,1) \gtrsim n^2 m + \min(n, m) mp.
$$
Recall that we proved this result under the  assumptions \eqref{eq: weaker assumptions nmp}, which are weaker than those in the statement of the theorem. 
Applying this result for $\lfloor 2n/3 \rfloor$ instead of $n$, 
and for $\lfloor 2m/3 \rfloor$ instead of $m$, completes the proof.
\end{proof}

\begin{theorem}[Two hidden layers, multiple-outputs]			\label{thm: three layers multi-channel}
  If $n \ge 2 \lceil \log_2 m \rceil$ and $m \ge 2 \lceil \log_2 p \rceil$, then:
  $$
  C(n,m,p) \asymp n^2 m + \min(n, m) mp.
  $$
\end{theorem}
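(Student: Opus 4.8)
The plan is to prove the two bounds separately, following the template of Theorem~\ref{thm: three layers} but exploiting the fact that several output neurons make the top layer trivial to account for, which in turn eliminates the multiplexing step used there (and hence the hypotheses involving $\log_2 p$).

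The upper bound is immediate: Corollary~\ref{cor: main upper bound} (or, when $n,m\ge 4$, Proposition~\ref{prop: multi-channel upper} with constant $1$) gives $C(n,m,p)\lesssim n^2m+\min(n,m)mp$ with no extra assumptions. For the lower bound I would produce the two terms $n^2m$ and $\min(n,m)mp$ separately and then add them, at the cost of a factor $2$.

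The term $n^2m$ comes for free: since $p\ge 1$, monotonicity (part~\ref{pr: monotonicity} of Lemma~\ref{lem: capacity basic}) gives $C(n,m,p)\ge C(n,m,1)$, and $n\ge 2\lceil\log_2 m\rceil$ certainly implies $n\ge 1.1\lceil\log_2 m\rceil$, so Corollary~\ref{cor: two layers} yields $C(n,m,1)\asymp n^2m$. (This is the only place multiplexing enters, and only through the already-established two-layer estimate.) For the term $\min(n,m)mp$ I would, in both cases, push the whole count onto the image of the first layer and then use the exact identity $C(S,m,p)=C(S)\,p$ from part~\ref{pr: two layers} of Lemma~\ref{lem: capacity basic} — this is precisely where the $p$ output neurons pay off, each contributing an independent factor. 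If $n\ge m$, take $\pi\in T(n,m)$ to be the projection onto the first $m$ coordinates; it is a threshold map with $\pi(H^n)=H^m$, so part~\ref{pr: replace by image} of Lemma~\ref{lem: capacity basic} followed by part~\ref{pr: two layers} gives $C(n,m,p)\ge C(H^m,m,p)=C(H^m)\,p\gtrsim m^2p=\min(n,m)mp$ by Theorem~\ref{thm: Zuev}. If $n<m$, the hypothesis $n\ge 2\lceil\log_2 m\rceil$ yields $m\le 2^{n/2}$, so the Enrichment Theorem~\ref{thm: enrichment} supplies an injective $F\in T(n,m)$ with $C(F(H^n))\asymp nm$; the same two lemma items then give $C(n,m,p)\ge C(F(H^n),m,p)=C(F(H^n))\,p\gtrsim nmp=\min(n,m)mp$. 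Adding the two contributions finishes the lower bound.

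I do not expect a genuine obstacle here — every ingredient (Zuev's lower bound, the Enrichment Theorem, the single-hidden-layer estimate, and the elementary properties of capacity) is already in hand. The one point needing a little care is the case $n\ge m$: one must check that the coordinate projection really does hit the entire cube $H^m$, so that $C(H^m)$ rather than the capacity of some proper subset appears, and that part~\ref{pr: replace by image} of Lemma~\ref{lem: capacity basic} applies as stated. It is worth flagging the contrast with the single-output Theorem~\ref{thm: three layers}: there the output had to be funneled through one neuron, which forced the Multiplexing Theorem~\ref{thm: two layers general lower} and the extra hypotheses $m\gtrsim\log_2 p$ and $n\gtrsim\log_2 p$; here the multiplicative effect of the $p$ output neurons is already baked into $C(S,m,p)=C(S)\,p$, so those hypotheses are not needed.
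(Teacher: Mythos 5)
Your proposal is correct and follows essentially the same route as the paper: the upper bound from Corollary~\ref{cor: main upper bound}, the $n^2m$ term from the single-hidden-layer estimate, and the $\min(n,m)mp$ term via Zuev's bound when $n\ge m$ and via part~\ref{pr: replace by image} of Lemma~\ref{lem: capacity basic} combined with the Enrichment Theorem~\ref{thm: enrichment} when $n<m$, with no multiplexing needed. The only (cosmetic) difference is that in the case $n\ge m$ the paper reduces to $C(m,p)$ by monotonicity and contractivity rather than by a coordinate projection, and your observation that the hypothesis $m\ge 2\lceil\log_2 p\rceil$ is never used is accurate.
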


\begin{proof}
The upper bound is a partial case of Corollary~\ref{cor: main upper bound}.
For the lower bound, we can essentially repeat the proof of Theorem~\ref{thm: three layers}
except for the multiplexing in the last step, which is not needed in this case. Instead, we can 
just use part~\ref{pr: replace by image} of Lemma~\ref{lem: capacity basic}  
followed by the enrichment Theorem~\ref{thm: enrichment} and get:
$$
C(n,m,p)
\ge C \big( F(H^n),m,p \big)
\gtrsim nmp.
$$
The proof is complete.
\end{proof}

\section{Networks with arbitrarily many layers: stacking}		\label{s: stacking}

Now we extend the capacity lower bounds to feedforward networks with arbitrarily many layers, 
thus completing the proof of the main result (Theorem~\ref{thm: main}). 
Denote:
$$
\bar{n}_k := \min(n_1,\ldots,n_k).
$$
Let us handle networks with three hidden layers first.

\begin{lemma}[Three hidden layers]		\label{lem: four layers}
  Let $n_j \ge 3 \lceil \log_2 n_k \rceil$ for all $1 \le j < k \le 4$. Then: 
  $$
  C(n_1,n_2,n_3,n_4,1) \asymp n_1^2 n_2 + \bar{n}_2 n_2 n_3 + \bar{n}_3 n_3 n_4.
  $$
\end{lemma}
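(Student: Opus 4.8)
The plan is to mimic the three-layer analysis (Theorem~\ref{thm: three layers}) one level deeper, obtaining each of the three terms $n_1^2 n_2$, $\bar n_2 n_2 n_3$, $\bar n_3 n_3 n_4$ separately and then taking a maximum, which is equivalent to the sum up to a factor of $3$. The upper bound is immediate from Corollary~\ref{cor: main upper bound 1} (the capacity formula specialized to five layers $A(n_1,n_2,n_3,n_4,1)$ reads $\bar n_1 n_1 n_2 + \bar n_2 n_2 n_3 + \bar n_3 n_3 n_4 + \bar n_4 n_4$, and $\bar n_1 = n_1$ while the last term $\bar n_4 n_4 \le \bar n_3 n_3 n_4$ is absorbed). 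For the lower bound, the first term is handled by comparing with a shorter network: by monotonicity and contractivity (parts~\ref{pr: monotonicity} and \ref{pr: contractivity} of Lemma~\ref{lem: capacity basic}), $C(n_1,n_2,n_3,n_4,1) \ge C(n_1,n_2,1,1,1) \ge C(n_1,n_2,1)$, and Corollary~\ref{cor: two layers} gives $C(n_1,n_2,1) \gtrsim n_1^2 n_2$ (the hypothesis $n_1 \ge 3\lceil \log_2 n_2\rceil$ comfortably implies $n_1 \ge 1.1\lceil \log_2 n_2 \rceil$).

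For the remaining two terms the idea is to push the action into the appropriate sub-block of the network, using the first layers only to \emph{prepare} a high-capacity set and then invoking Theorem~\ref{thm: three layers} (or Theorem~\ref{thm: three layers multi-channel}) on the tail. Concretely, to get $\bar n_3 n_3 n_4$: if $\bar n_3 = n_3$ (i.e. $n_3 \le n_1, n_2$) we can feed $H^{n_3}$ through the first layers essentially freely --- use the identity map on a coordinate subcube of size $n_3$ --- and then invoke $C(n_3, n_3, n_4, 1) \gtrsim n_3^2 n_4$ via monotonicity and Corollary~\ref{cor: two layers}, absorbing the loss through part~\ref{pr: replace by image} of Lemma~\ref{lem: capacity basic}. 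If instead $\bar n_3 < n_3$, then $\bar n_3 = \bar n_2$ or $\bar n_3 = n_1$, and the relevant bottleneck sits at layer $1$ or $2$; here one uses the enrichment map $F \in T(\bar n_3, n_3)$ from Theorem~\ref{thm: enrichment} (legal because $n_3 \le 2^{\bar n_3/2}$ follows from $\bar n_3 \ge 3\lceil\log_2 n_3\rceil$) to produce a set of capacity $\asymp \bar n_3 n_3$ entering layer $3$, then multiplexing (Theorem~\ref{thm: two layers general lower}) to carry the $n_4$ output functions through the single output node, yielding $\gtrsim \bar n_3 n_3 n_4$; the $\lceil\log_2 n_4\rceil$ selector bits and any $\lceil\log_2 n_3\rceil$ padding needed for enrichment are absorbed by rescaling $n_1,\dots,n_4$ by constant factors, exactly as in the proof of Theorem~\ref{thm: three layers}. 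The term $\bar n_2 n_2 n_3$ is obtained the same way, applied to the sub-network $A(n_1,n_2,n_3,1)$: this is literally Theorem~\ref{thm: three layers} with inputs from (an enriched image of) a cube of dimension $\bar n_2$, giving $C(n_1,n_2,n_3,1) \gtrsim \bar n_2 n_2 n_3$, and then monotonicity $C(n_1,n_2,n_3,n_4,1) \ge C(n_1,n_2,n_3,1,1) \ge C(n_1,n_2,n_3,1)$.

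The main obstacle is bookkeeping rather than any genuinely new idea: one must verify that all the constant-factor inflations of layer sizes (from enrichment padding, from multiplexing selector bits, from the $\min$-vs-sum juggling) can be simultaneously accommodated under the single hypothesis $n_j \ge 3\lceil \log_2 n_k\rceil$ for $j<k$, and in particular that the bottleneck value $\bar n_3$ entering the enrichment step genuinely satisfies the size constraint $n_3 \le 2^{\bar n_3/2}$ in every case split. The cleanest route is to prove the bound first under the weaker hypotheses (analogues of \eqref{eq: weaker assumptions nmp}, namely $n_j \ge 2\lceil\log_2 n_k\rceil$), then derive the stated version by applying the weak-hypothesis result to $\lfloor 2n_j/3\rfloor$ in place of $n_j$, exactly as at the end of the proof of Theorem~\ref{thm: three layers}.
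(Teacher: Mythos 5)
Your proposal is correct and follows essentially the same route as the paper: upper bound from Corollary~\ref{cor: main upper bound 1}, lower bound by comparing with sub-networks via monotonicity and contractivity and invoking the established results for networks with one and two hidden layers, then replacing the maximum of the resulting bounds by a constant fraction of their sum. The only difference is that for the term $\bar{n}_3 n_3 n_4$ the paper avoids your case split and your re-run of the enrichment/multiplexing machinery by simply writing $C(n_1,n_2,n_3,n_4,1) \ge C(\bar{n}_2,\bar{n}_2,n_3,n_4,1) \ge C(\bar{n}_2,n_3,n_4,1) \gtrsim \bar{n}_3 n_3 n_4$, where the last step is Theorem~\ref{thm: three layers} applied to the tail network (its hypotheses hold because $n_1,n_2 \ge 3\lceil\log_2 n_3\rceil$ and $n_1,n_2,n_3 \ge 3\lceil\log_2 n_4\rceil$), so the constant-factor bookkeeping you worry about is already absorbed in that theorem.
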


\begin{proof}
The upper bound is a special case of Corollary~\ref{cor: main upper bound 1}.
As for the lower bound, monotonicity, contractivity (Lemma~\ref{lem: capacity basic}), 
and Theorem~\ref{thm: three layers} yield:
$$
C(n_1,n_2,n_3,n_4,1)
\ge C(n_1, n_2, n_3,1) 
\gtrsim n_1^2 n_2 + \bar{n}_2 n_2 n_3
$$
and also:
$$
C(n_1,n_2,n_3,n_4,1)
\ge C(\bar{n}_2, \bar{n}_2, n_3,n_4,1) 
\ge C(\bar{n}_2, n_3,n_4,1) 
\gtrsim \bar{n}_3 n_3 n_4.
$$
Combining the two lower bounds, we conclude that:
$$
C(n_1,n_2,n_3,n_4,1)
\gtrsim \max \left( n_1^2 n_2 + \bar{n}_2 n_2 n_3, \bar{n}_3 n_3 n_4 \right)
\ge \frac{1}{2} \big( n_1^2 n_2 + \bar{n}_2 n_2 n_3 + \bar{n}_3 n_3 n_4 \big). 
$$
The proof is complete.
\end{proof}

\subsection{Stacking}
In principle, networks with arbitrarily many layers can be handled by a similar argument. 
However, instead of producing the sum over the layers claimed by Theorem~\ref{thm: main}, this argument will only produce the maximum over the layers. 
The maximum can be replaced with the sum by paying a factor of $1/L$, which is weaker 
than the constant factor claimed in Theorem~\ref{thm: main}. Thus, to overcome this limitation, we develop a {\em stacking} technique and prove the following.

\begin{lemma}[Four and more hidden layers]	\label{lem: five and more layers}
  Assume that $L \ge 5$ and $n_j \ge 3 \lceil \log_2 (Ln_k) \rceil$ for all $1 \le j < k \le L$. Then:
  $$
  C(6n_1,\ldots,6n_L,1) \gtrsim \sum_{k=1}^{L-1} \bar{n}_k n_k n_{k+1}.
  $$
\end{lemma}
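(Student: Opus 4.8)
The plan is to combine the three-hidden-layer building block (Lemma~\ref{lem: four layers}) with a stacking construction that makes many such blocks operate independently inside one large network. Recall that the bound we want, $\sum_{k=1}^{L-1}\bar n_k n_k n_{k+1}$, is (up to a factor of $2$) the maximum over $k$ of the consecutive partial sums that each look like the right-hand side of Lemma~\ref{lem: four layers}; but taking the maximum loses a factor of $L$, which is not allowed. The stacking idea is: instead of running one sub-network that realizes one term, we run $\Theta(L)$ sub-networks \emph{in parallel} through disjoint groups of neurons, one responsible for (a bounded group of) the terms $\bar n_k n_k n_{k+1}$, and we glue their single-bit outputs together at the very end. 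Since the total contribution of layer $k$ to the sum is spread over at most $O(1)$ sub-networks, and each layer is inflated by the factor $6$ in the statement, there is enough room: a layer of width $6n_k$ can host the required copies of width $\le n_k$ of each of the $O(1)$ relevant sub-networks plus $O(\log(Ln_\cdot))$ auxiliary selector/bookkeeping bits, which is absorbed by the hypothesis $n_j \ge 3\lceil\log_2(Ln_k)\rceil$.

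Concretely, first I would partition the layer indices $1,\dots,L$ into overlapping blocks of four consecutive layers, say $B_t = \{t, t+1, t+2, t+3\}$ for a suitable subset of $t$'s chosen so every edge $(k,k+1)$ lies in some block and each layer index is used by $O(1)$ blocks. For each block $B_t$ I would invoke Lemma~\ref{lem: four layers} (after restricting widths to $\bar n_t, \bar n_{t+1}, \bar n_{t+2}, \bar n_{t+3}$, which are monotone, so one can feed the smallest surviving width forward — this is exactly the device already used in the $\bar n_k$ notation and in the proof of Lemma~\ref{lem: four layers}) to get a family of at least $2^{c(\bar n_t^2 n_{t+1}+\bar n_{t+1}n_{t+1}n_{t+2}+\bar n_{t+2}n_{t+2}n_{t+3})}$ functions $H^{n_1}\to\{0,1\}$ computable by a sub-network whose layer widths are $\le n_j$ in layer $j$. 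Next, I would place these sub-networks side by side: in each layer $j$, reserve a disjoint block of coordinates for each sub-network whose block $B_t$ contains $j$; since there are $O(1)$ such $t$ and each needs width $\le n_j$, total width $6n_j$ suffices with room to spare. Crucially, because the first layer is shared input $H^{n_1}$, all sub-networks read the same input, so we must prevent their outputs from interfering — this is where a multiplexing/selector argument enters: I would enlarge the input by $O(\log L)$ selector bits (charged against the hypothesis and the factor $6$), use Lemma~\ref{lem: adding a clause network} to make sub-network $t$ output $0$ unless the selector names $t$, and then \textsc{or} all sub-network outputs at the final single node. The selector bits let one recover, from the overall function, each sub-network's function separately, so the total count multiplies: $\log_2$ of the product of the per-block counts is $\gtrsim \sum_t (\text{block sum}) \gtrsim \sum_{k=1}^{L-1}\bar n_k n_k n_{k+1}$, since each term appears in at least one block.

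The main obstacle I anticipate is the bookkeeping that makes the side-by-side placement simultaneously (i) fit within width $6n_k$ in \emph{every} layer, (ii) allow the different sub-networks to be genuinely independent despite sharing the input layer, and (iii) keep all the "add a clause / select" gadgets implementable by threshold units of the allotted width — in particular checking that the selector comparison in Lemma~\ref{lem: adding a clause network} only costs $O(\log L)$ extra coordinates per layer, not more, and that these costs are uniformly absorbed by the hypothesis $n_j \ge 3\lceil \log_2(Ln_k)\rceil$ (this is exactly why $Ln_k$, rather than $n_k$, appears inside the logarithm). The analytic content is light once the architecture is set up; the difficulty is purely in organizing the stacking so that the factor is an absolute constant independent of $L$, which is the whole point of the lemma. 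A figure (referenced in the text as Figure~\ref{fig: stacking}) showing three-layer blocks stacked with a shared input and a common \textsc{or}-gate output would make the construction transparent.
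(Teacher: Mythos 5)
Your plan is essentially the paper's own proof of this lemma: stack small modules diagonally in disjoint coordinate blocks, multiplex them with $\lceil \log_2(L-3)\rceil$ selector bits via the adding-a-clause lemma (Lemma~\ref{lem: adding a clause}) so that each module's function can be recovered from the single composite output and the counts therefore multiply, and verify that each layer of the stacked network has width at most $2+2n_k^{+}+\bar n_k^{+}\le 6n_k$; the paper simply uses two-hidden-layer modules $A(\bar n_k^{+},n_{k+1}^{+},n_{k+2}^{+},1)$ at every position $k=1,\dots,L-3$ and picks up the final term $\bar n_{L-1}n_{L-1}n_L$ by a separate monotonicity/contractivity comparison with $A(\bar n_{L-2},n_{L-1},n_L,1)$. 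Two details in your version need to be pinned down (both fixable, and both handled implicitly by the paper's construction): the OR of the module outputs must be accumulated forward as a single bit rather than carrying all $\Theta(L)$ output bits to the top, since upper layers may only have width $\Theta(\log(Ln_k))$; and the blocks must be placed with stride at most two, because a block's first capacity term is only $\bar n_t^2 n_{t+1}$, so every edge $(k,k+1)$ with $k\ge 2$ must occur as a non-initial edge of some block for the full sum $\sum_k \bar n_k n_k n_{k+1}$ to emerge.
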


\begin{proof}
\begin{figure}[htp]			
  \centering 
    \includegraphics[width=\textwidth]{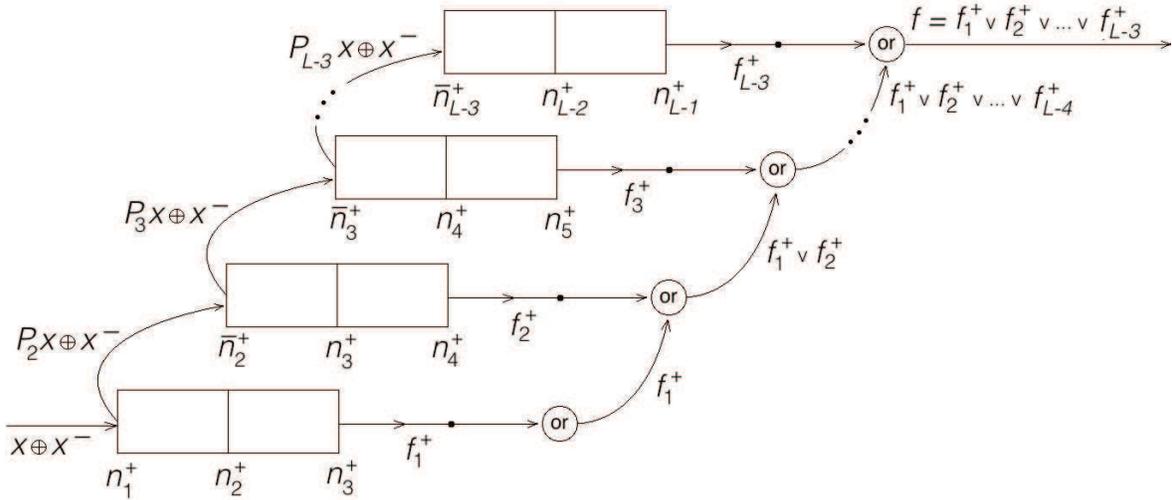} 
    \caption{A network with almost largest possible capacity can be constructed
      by stacking three-layer networks.}
  \label{fig: stacking}
\end{figure}
To prove the lemma, we will compare the network $A(6n_1,\ldots,6n_L,1)$
with a smaller network, which we construct by ``stacking'' $L-3$ three-layer modules, 
and doing multiplexing in each one of them.

\subsubsection*{Step 1. Construction of the network}
Fix an arbitrary injective map
\begin{equation}	\label{eq: L-}
\eta: \{1,\ldots,L-3\} \to H^{L-}
\quad \text{where} \quad
L^- \coloneqq \lceil \log_2 (L-3) \rceil.
\end{equation}
Consider arbitrary functions
$$
f_k \in T(\bar{n}_k, n_{k+1}, n_{k+2},1), 
\quad k = 1,\ldots,L-3,
$$
Lemma~\ref{lem: adding a clause} states that the function
\begin{equation}	\label{eq: fk+}
f_k^+(x \oplus x^-) \coloneqq f_k(x)  \wedge \left( x^- = \eta(k) \right),
\quad x \in H^{\bar{n}_k}, 
\quad x^- \in H^{L^-},
\end{equation}
belongs to $T(\bar{n}_k^+, n_{k+1}^+, n_{k+2}^+,1)$, where we let:
\begin{equation}	\label{eq: nk+}
n_k^+ \coloneqq n_k + L^- 
\quad \text{and} \quad 
\bar{n}_k^+ \coloneqq \bar{n}_k + L^-.
\end{equation}

Now connect three-layer modules $A(\bar{n}_k^+, n_{k+1}^+, n_{k+2}^+,1)$, 
$k = 1,\ldots,L-3$, as shown in Figure~\ref{fig: stacking}.
In that figure, $P_k$ denotes the coordinate projection onto $\R^{\bar{n}_k}$ that 
retains the first $\bar{n}_k$ coordinates of a vector. 
Given an input $x \oplus x^-$, the first module computes $f_1^+(x \oplus x^-)$ in layer $4$, 
and it passes $P_2 x \oplus x^-$ to layer $2$ as the input to the second module. 
The second module computes $f_2^+(P_2 x \oplus x^-)$ in layer $5$, then takes the `or' 
with the output of the second module in layer $6$, thereby computing 
$f_1^+(x \oplus x^-) \vee f_2^+(P_2 x \oplus x^-)$;
it also passes $P_3 x \oplus x^-$ to layer $3$ as the input to the third module, etc.
Continuing in this way, we see that the network ultimately computes and outputs the function: 
\begin{equation}	\label{eq: f stacking}
f(x \oplus x^-) \coloneqq f_1^+(x \oplus x^-) \vee f_2^+(P_2 x \oplus x^-)
\vee \cdots \vee f_{L-3}^+(P_{L-3} x \oplus x^-).
\end{equation}

\subsubsection*{Step 2. Estimating the capacity of the network using the capacities of modules}
Now that we described the architecture, let us estimate 
how many Boolean functions $f$ the architecture can compute. 
Let us denote the set of all such computable functions $f^+$ by $T$. 
By definition of the functions $f_k^+$ in \eqref{eq: fk+} and $f$ in \eqref{eq: f stacking}, 
we have:
\begin{equation}	\label{eq: f+fk}
f(x \oplus x^-) = f_k(P_k x)
\end{equation}
if $x^- = \eta(k)$ for some\footnote{The injectivity of $\eta$ guarantees 
that there exists at most one $k$ that satisfies \eqref{eq: f+fk}.} $k$.
This implies that the map $(f_1,\ldots,f_{L-3}) \mapsto f^+$ is an injective transformation 
from $\prod_{k=1}^{L-3} T(\bar{n}_k, n_{k+1}, n_{k+2},1)$ to $T$.
Indeed, Equation \eqref{eq: f+fk} allows one to uniquely recover all $f_k$ and thus 
$f = (f_1,\ldots,f_{L-3})$ from $f^+$.)
Therefore:
$$
|T| \ge \prod_{k=1}^{L-3} |T(\bar{n}_k, n_{k+1}, n_{k+2},1)|.
$$
The right hand side can be estimated using the tight bounds on the capacity of three-layer networks
from Theorem~\ref{thm: three layers}. Note that the conditions on $n_k$ guarantee 
that the assumptions of Theorem~\ref{thm: three layers} are satisfied. We obtain:
\begin{align} 
\log_2 |T| 
  &\ge \sum_{k=1}^{L-3} C(\bar{n}_k, n_{k+1}, n_{k+2},1) \nonumber\\
  &\gtrsim (n_1^2 n_2 + \bar{n}_2 n_2 n_3) + \bar{n}_3 n_3 n_4 + \bar{n}_4 n_4 n_5 
  	+ \cdots + \bar{n}_{L-2} n_{L-2} n_{L-1} \nonumber\\
  &= \sum_{k=1}^{L-2} \bar{n}_k n_k n_{k+1}.		\label{eq: CC lower}
\end{align}

\subsubsection*{Step 3. Counting nodes}
As is evident from Figure~\ref{fig: stacking}, the 
overall architecture has $L$ layers of units (not counting the output). The number of 
nodes in the $k$-th layer of units, $k=1,\ldots, L-1$, is bounded by:
\begin{align*} 
2 + 2n_k^+ + \bar{n}_k^+
  &\le 2 + 3n_k^+ \\
  &\le 2 + 3 \big( n_k + \lceil \log_2 L \rceil \big)
  	\quad \text{(by definition of $n_k^+$ in \eqref{eq: nk+} and $L^-$ in \eqref{eq: L-})} \\
  &\le 6 n_k 
  	\quad \text{(by the assumptions on $n_k$).}
\end{align*}
Hence, by monotonicity:
$$
\log_2 |T| \le C(6n_1,\ldots,6n_L,1).
$$
Combining this with the lower bound \eqref{eq: CC lower}, we conclude that:
\begin{equation}	\label{eq: all but last term}
C(6n_1,\ldots,6n_L,1) \gtrsim \sum_{k=1}^{L-2} \bar{n}_k n_k n_{k+1}.
\end{equation}

\subsubsection*{Step 4. Adding one term to the sum}
To complete the proof, we just need to add one last term to this sum.
We can get it by comparison with a three-layer network $A(\bar{n}_{L-2},n_{L-1},n_L,1)$.
Indeed, monotonicity, contractivity (Lemma~\ref{lem: capacity basic}), 
and Theorem~\ref{thm: three layers} give:
$$
C(6n_1,\ldots,6n_L,1) 
\ge C(\bar{n}_{L-2},\ldots,\bar{n}_{L-2},n_{L-1},n_L,1) 
\ge C(\bar{n}_{L-2},n_{L-1},n_L,1)
\gtrsim \bar{n}_{L-1} n_{L-1} n_L.
$$
Combining this with \eqref{eq: all but last term}, we conclude that:
$$
C(6n_1,\ldots,6n_L,1) 
  \gtrsim \max \left( \sum_{k=1}^{L-2} \bar{n}_k n_k n_{k+1}, \bar{n}_{L-1} n_{L-1} n_L \right) 
  \ge \frac{1}{2} \sum_{k=1}^{L-1} \bar{n}_k n_k n_{k+1}.
$$
This completes the proof of the Lemma.
\end{proof}

\subsection{The lower bound in Theorem~\ref{thm: main}}		\label{s: lower bound}
Now we prove a partial case of Theorem~\ref{thm: main} for networks with 
a single output node:

\begin{theorem}			\label{thm: single-channel}
  Under the conditions of Theorem~\ref{thm: main}, we have:
  $$
  C(n_1,\ldots,n_L,1) 
  \asymp \sum_{k=1}^{L-1} \bar{n}_k n_k n_{k+1}.
  $$
\end{theorem}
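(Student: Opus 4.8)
The plan is to get the upper bound for free from Corollary~\ref{cor: main upper bound 1}, which already asserts $C(n_1,\ldots,n_L,1)\lesssim\sum_{k=1}^{L-1}\min(n_1,\ldots,n_k)n_kn_{k+1}=\sum_{k=1}^{L-1}\bar n_kn_kn_{k+1}$, and then to prove the matching lower bound by induction on $L$, with $L\in\{2,3,4\}$ as base cases. It will be used repeatedly that the hypothesis $n_j>18\log_2(Ln_k)$ of Theorem~\ref{thm: main} is far stronger than what the component results need: since $Ln_k\ge 2$ we get $n_j>18$, so every non-output layer has $n_k\ge 19$, and $18\log_2(Ln_k)\ge 3\lceil\log_2(n_k)\rceil$ because $\lceil y\rceil\le 2y$ for $y\ge 1$.

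For the base cases I would just quote the shallow-network bounds already established: Corollary~\ref{cor: two layers} gives $C(n_1,n_2,1)\asymp n_1^2n_2$; Theorem~\ref{thm: three layers} gives $C(n_1,n_2,n_3,1)\asymp n_1^2n_2+\min(n_1,n_2)n_2n_3$; and Lemma~\ref{lem: four layers} gives $C(n_1,n_2,n_3,n_4,1)\asymp n_1^2n_2+\bar n_2n_2n_3+\bar n_3n_3n_4$. In each case $n_1^2n_2=\bar n_1n_1n_2$, so the right-hand side equals $\sum_{k=1}^{L-1}\bar n_kn_kn_{k+1}$, and the weaker size hypotheses of those statements follow from $n_j>18\log_2(Ln_k)$ as above.

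For the inductive step $L\ge 5$ I would split on whether the output layer $n_L$ is large. If $n_L\ge 6$, put $m_k:=\lfloor n_k/6\rfloor$; since here $n_k\ge 6$ for every $k$, one has $m_k\ge 1$, $6m_k\le n_k$, and $n_k/12\le m_k\le n_k/6$. The key point is that the rescaled logarithmic conditions survive: using $m_k\le n_k/6$,
$$m_j>\tfrac{n_j}{6}-1>3\log_2(Ln_k)-1\ge 3\log_2(Lm_k)+3\log_2 6-1\ge 3\lceil\log_2(Lm_k)\rceil,$$
where the last step uses $3\log_2 6-1>3\ge 3(\lceil\log_2(Lm_k)\rceil-\log_2(Lm_k))$. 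Hence Lemma~\ref{lem: five and more layers} applies to $m_1,\ldots,m_L$, and by monotonicity (part~\ref{pr: monotonicity} of Lemma~\ref{lem: capacity basic}, as $6m_k\le n_k$) together with that lemma and $\bar m_km_km_{k+1}\ge\bar n_kn_kn_{k+1}/12^3$, we conclude $C(n_1,\ldots,n_L,1)\ge C(6m_1,\ldots,6m_L,1)\gtrsim\sum_{k=1}^{L-1}\bar m_km_km_{k+1}\gtrsim\sum_{k=1}^{L-1}\bar n_kn_kn_{k+1}$.

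If instead $n_L<6$, monotonicity and contractivity (parts~\ref{pr: monotonicity} and~\ref{pr: contractivity} of Lemma~\ref{lem: capacity basic}) give $C(n_1,\ldots,n_L,1)\ge C(n_1,\ldots,n_{L-1},1,1)\ge C(n_1,\ldots,n_{L-1},1)$, and the inductive hypothesis for $L-1$ layers yields $C(n_1,\ldots,n_{L-1},1)\gtrsim\sum_{k=1}^{L-2}\bar n_kn_kn_{k+1}$. The missing last term is harmless: since $n_L\le 5$ and $\bar n_{L-1}\le\bar n_{L-2}\le n_{L-2}$ with $n_{L-2}\ge 19$, one has $\bar n_{L-1}n_{L-1}n_L\le 5\bar n_{L-1}n_{L-1}\le\bar n_{L-2}n_{L-2}n_{L-1}$, so $\sum_{k=1}^{L-1}\bar n_kn_kn_{k+1}\le 2\sum_{k=1}^{L-2}\bar n_kn_kn_{k+1}$ and the bound follows. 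The only genuinely delicate point of the argument is this $n_L\ge 6$ bookkeeping — checking that the floors in $m_k=\lfloor n_k/6\rfloor$ do not destroy the logarithmic growth condition that Lemma~\ref{lem: five and more layers} requires — and the gap between the constant $18$ in Theorem~\ref{thm: main} and the $3\lceil\cdot\rceil$ in the stacking lemma is precisely the slack that makes it go through; the substantive work, the stacking construction itself, has already been carried out.
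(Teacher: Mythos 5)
Your proposal is correct and follows essentially the same route as the paper: the upper bound is quoted from Corollary~\ref{cor: main upper bound 1}, and the lower bound is assembled from Corollary~\ref{cor: two layers}, Theorem~\ref{thm: three layers}, Lemma~\ref{lem: four layers} for $L\le 4$, and Lemma~\ref{lem: five and more layers} applied with $\lfloor n_k/6\rfloor$ in place of $n_k$, with the same kind of arithmetic check that the $18\log_2(Ln_k)$ hypothesis survives the division by $6$. The only difference is that you explicitly treat the edge case $n_L\le 5$ (where $\lfloor n_L/6\rfloor=0$) by dropping to $L-1$ layers and absorbing the last term, a point the paper's footnote passes over silently; this costs at most one extra factor of $2$ in the constant since only the output layer can be that small, so the implied constant remains absolute.
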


\begin{proof}
We already proved the upper bound on the capacity in Corollary~\ref{cor: main upper bound 1}.
The lower bound follows from Corollary~\ref{cor: two layers} for a single hidden layer,
Theorem~\ref{thm: three layers} for two hidden layers, Lemma~\ref{lem: four layers} for three hidden layers,
and Lemma~\ref{lem: five and more layers} for four and more layers applied\footnote{Precisely, the assumptions of Theorem~\ref{thm: main} yield:
	$n_j \ge 18 \log_2 (L n_k) \ge 18 \lceil \log_2 (Ln_k/6) \rceil$.
	Dividing both sides by $6$ and taking the integer part, we get: 
	$\lfloor n_j/6 \rfloor \ge 3 \lceil \log_2 (Ln_k/6) \rceil \ge 3 \lceil \log_2 (L \lfloor n_k/6 \rfloor) \rceil$.
	This means that Lemma~\ref{lem: five and more layers} can indeed be applied 
	using $\lfloor n_k/6 \rfloor$ instead of $n_k$.} 
using $\lfloor n_k/6 \rfloor$ instead of $n_k$. 
\end{proof}

Finally, we are ready to complete the proof of the main result:

\begin{proof}[Proof of Theorem~\ref{thm: main}]
The upper bound was already proven in Proposition~\ref{prop: multi-channel upper}.
It remains to prove the lower bound.
For $L=2$, the result follows from Theorem~\ref{thm: Zuev}, which gives:
$$
C(n_1,n_2) = C(n_1) n_2 \gtrsim n_1^2 n_2.
$$ 
Now let $L \ge 3$. Monotonicity and Theorem~\ref{thm: single-channel} yield:
$$
C(n_1,\ldots,n_L) 
\ge C(n_1,\ldots,n_{L-1},1) 
\gtrsim \sum_{k=1}^{L-2} \bar{n}_k n_k n_{k+1}.
$$
To complete the proof, we need to add just one last term to this sum.
We can get it by comparison with a three-layer network $A(\bar{n}_{L-2},n_{L-1},n_L)$.
Indeed, monotonicity, contractivity (Lemma~\ref{lem: capacity basic}), 
and Theorem~\ref{thm: three layers multi-channel} give:
$$
C(n_1,\ldots,n_L) 
\ge C(\bar{n}_{L-2},\ldots,\bar{n}_{L-2},n_{L-1},n_L) 
\ge C(\bar{n}_{L-2},n_{L-1},n_L)
\gtrsim \bar{n}_{L-1} n_{L-1} n_L.
$$
Combining this with \eqref{eq: all but last term}, we conclude that:
$$
C(n_1,\ldots,n_L) 
  \gtrsim \max \left( \sum_{k=1}^{L-2} \bar{n}_k n_k n_{k+1}, \bar{n}_{L-1} n_{L-1} n_L \right) 
  \ge \frac{1}{2} \sum_{k=1}^{L-1} \bar{n}_k n_k n_{k+1}.
$$
This completes the proof.
\end{proof}

\subsection{Why are rapidly expanding networks excluded?}		\label{s: rapidly expanding}
We stated Theorem~\ref{thm: main} under the assumption that the network is not 
expanding too rapidly, as quantified by requiring that:
\begin{equation}	\label{eq: condition on the number of nodes}
n_j \ge 18 \log_2 (L n_k) \quad \text{for all } j \le k.
\end{equation}
It is worth noting that this requirement is almost optimal. 
To see this, note first that the number of all Boolean functions 
on $H^{n_1}$ is $2^{2^{n_1}}$.
This yields the trivial upper bound:
$$
C(n_1,\ldots,n_L,1) \le 2^{n_1}.
$$
Combining it with the lower bound given by Theorem~\ref{thm: main} (and Corollary~\ref{cor: effect of output node}), we get
$$
2^{n_1} \gtrsim \sum_{k=1}^{L-1} \min(n_1,\ldots,n_k) n_k n_{k+1} 
\ge \sum_{k=2}^L n_k.
$$
Thus, in order for Theorem~\ref{thm: main} to hold, we must have:
$$
n_1 \gtrsim \log_2 \left( \sum_{k=2}^L n_k \right).
$$
In particular, if all $n_k$ for $k \ge 2$ are of the same order (e.g. equal to each other), we must have:
$$
n_1 \gtrsim \log_2 (L n_k).
$$
This shows that the condition \eqref{eq: condition on the number of nodes}
can not be removed and that it has an almost optimal form.

\subsection{Relaxing the assumption on the number of nodes}				\label{s: smaller top layers}
Although the assumption $n_j \gtrsim \log_2 (L n_k)$ in Theorem~\ref{thm: main} 
is almost optimal, it can still be slightly improved in order to accommodate small top layers.
Specifically, with a little more work, it can be relaxed to: 
$$
n_j \gtrsim \log_2 \big( (L-k+1)n_k \big) \quad \text{for all } j \le k.
$$ 
This relaxed condition can be useful since it allows for very small top layers.

The idea behind the relaxed condition is that in the proof of Lemma~\ref{lem: five and more layers},
it is not necessary to transmit all bits of $x^-$ to the top layer. 
Indeed, choose $\eta(k)$ to be the binary representation of the number $L-3-k$. 
Thus, the first bit of $\eta(k)$ is $0$ if $k$ it is in the uppet half of the layers, 
the first two bits are $00$ if $k$ is in the upper quarter, 
the first three bits are $000$ if $k$ is in the upper eighth, etc.
Now, we can drop the first bit of $x^-$ when we pass it between the modules 
in the upper half of the layers (i.e. for $k \ge (L-3)/2$); instead of verifying the clause
$\eta(k) = x^-$, we verify the equivalent clause $Q_1 \eta(k) = Q_1 x^-$, where
$Q_1$ is the coordinate projection that drops the first bit. 
Similarly, we can drop the second bit of $x^-$ in the upper quarter of the layers, etc. 
Thus, the length of the portion of $x^-$ passed to the $k$-th layer 
is approximately $\log_2(L-k+1)$ instead of the full length, i.e. $L^- = \log_2(L)$. 
The rest of the proof is unchanged.

\subsection{Restricted capacity}			\label{s: restricted capacity lower}

In this section we extend Theorem~\ref{thm: main} to the case where the input 
to the network are not all possible binary vectors, but rather lie in a subset $S \subset H^{n_1}$. We introduced this restricted version of capacity 
in Section~\ref{s: def capacity} and denoted it by:
$$
C(S,n_1,n_2,\ldots,n_L).
$$
We proved an upper bound on $C(S,n_1,n_2,\ldots,n_L)$ in Proposition~\ref{prop: multi-channel upper}.
Now we will complement it with a lower bound. 
The notion of VC-dimension (see e.g. \cite[Section~8.3]{vershynin2018high}) 
allows us to reduce the problem of restricted capacity to the case of unrestricted capacity.

\begin{lemma}[Restricted vs. unrestricted capacity]		\label{lem: restricted vs unrestricted}
  Consider a subset $S \subset H^{n_1}$. 
  Then, for any number of layers $L \ge 2$ 
  and any number of nodes $n_2,\ldots,n_L$ in each layer, we have:
  $$
  C(S,n_1,n_2,\ldots,n_L) \ge C(H^d,n_1,n_2,\ldots,n_L)=C(d,n_2,\ldots,n_L),
  $$
  where $d$ is the VC-dimension of $S$. 
\end{lemma}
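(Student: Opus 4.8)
The plan is to exploit the very definition of VC-dimension: if $d$ is the VC-dimension of $S \subset H^{n_1}$, then some block of $d$ input coordinates is shattered by $S$, and we can transplant any function computable on $H^d$ into a function computable on $S$ by changing only the first-layer weights.

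First I would record two easy preliminary facts. (i) By definition of VC-dimension there is an index set $I \subset \{1,\ldots,n_1\}$ with $|I| = d$ that is shattered by $S$; equivalently, the coordinate projection $\pi \colon \R^{n_1} \to \R^{d}$ onto the coordinates in $I$ satisfies $\pi(S) = H^d$. In particular $d \le n_1$. (ii) $C(H^d,n_1,n_2,\ldots,n_L) = C(d,n_2,\ldots,n_L)$: realizing $H^d$ inside $\R^{n_1}$ as $H^d \times \{0\}^{n_1-d}$, every first-layer threshold function $h(\ip{a}{x}+\alpha)$ restricted to this set depends only on the first $d$ coordinates of $a$, and conversely each threshold function on $H^d$ extends by zero-padding its weight vector; since layers $2,\ldots,L-1$ are untouched, this gives a bijection between $T(d,n_2,\ldots,n_L)$ and $T(H^d,n_1,n_2,\ldots,n_L)$, hence equality of capacities.

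The core is then the inequality $C(S,n_1,\ldots,n_L) \ge C(d,n_2,\ldots,n_L)$. Fix any $\tilde F \in T(d,n_2,\ldots,n_L)$ and write it as a composition $\tilde F = f_{L-1}\circ\cdots\circ f_2\circ\varphi$ with $\varphi = (\varphi_1,\ldots,\varphi_{n_2}) \in T(H^d,d,n_2)$ and $f_k \in T(n_k,n_{k+1})$ for $2 \le k \le L-1$ (the composition over $f_k$ is empty when $L=2$). Writing $\varphi_j(z) = h(\ip{a_j}{z}+\alpha_j)$ with $a_j \in \R^d$, define $\psi_j(x) := h(\ip{\tilde a_j}{x}+\alpha_j)$ for $x \in \R^{n_1}$, where $\tilde a_j \in \R^{n_1}$ carries the entries of $a_j$ on the coordinates in $I$ and zeros elsewhere. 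Then $\ip{\tilde a_j}{x} = \ip{a_j}{\pi(x)}$, so $\psi := (\psi_1,\ldots,\psi_{n_2})$ satisfies $\psi = \varphi\circ\pi$ on $S$; in particular $\psi|_S \in T(S,n_1,n_2)$, and hence $G := f_{L-1}\circ\cdots\circ f_2\circ\psi$ lies in $T(S,n_1,\ldots,n_L)$ and equals $\tilde F\circ\pi$ on $S$.

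Finally I would check that the assignment $\tilde F \mapsto G$ is injective. If $\tilde F_1, \tilde F_2 \in T(d,n_2,\ldots,n_L)$ are distinct, pick $z \in H^d$ with $\tilde F_1(z) \ne \tilde F_2(z)$ and, using $\pi(S)=H^d$, a point $x \in S$ with $\pi(x) = z$; then $\tilde F_1\circ\pi$ and $\tilde F_2\circ\pi$ differ at $x$, so the corresponding $G_1, G_2 \in T(S,n_1,\ldots,n_L)$ are distinct. Therefore $|T(S,n_1,\ldots,n_L)| \ge |T(d,n_2,\ldots,n_L)|$, and taking binary logarithms together with fact (ii) yields the claim. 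The whole argument is essentially bookkeeping; the only point needing (mild) care is the reduction of an $\R^{n_1}$-threshold map restricted to a coordinate subcube to a genuine $d$-dimensional threshold map, which is exactly where the shattering property of $I$ enters.
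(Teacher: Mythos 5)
Your proposal is correct and follows essentially the same route as the paper: pick a shattered index set $I$ with $|I|=d$, use $\pi(S)=H^d$, and realize any function of $T(d,n_2,\ldots,n_L)$ on $S$ by zero-padding the first-layer weights (i.e.\ excluding the input nodes outside $I$), with surjectivity of $\pi|_S$ giving injectivity of the map $\tilde F \mapsto \tilde F\circ\pi|_S$. The only difference is that you spell out the bookkeeping (the bijection $C(H^d,n_1,\ldots,n_L)=C(d,n_2,\ldots,n_L)$ and the injectivity check) which the paper compresses into one line.
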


\begin{proof}
By the definition of VC-dimension, there exists a subset of indices $I \subset \{1,\ldots,n_1\}$ 
of cardinality $|I|=d$ that is shattered by $S$. This means that: 
$$
P_I S = H^I
$$
where $P_I: H^{n_1} \to H^I$ is the coordinate projection that 
retains the coordinates in $I$ and drops the coordinates outside of $I$.
By excluding the input nodes outside $I$, one  immediately obtains: 
$$
C(S,n_1,n_2,\ldots,n_L) 
\ge C(P_I S,n_1,n_2,\ldots,n_L) 
= C(H^d,n_2,\ldots,n_L,1)=C(d,n_2,\ldots,n_L).
$$
The proof is complete.
\end{proof}

Combining this bound with the Sauer-Shelah Lemma, we obtain the following:

\begin{proposition}[Restricted capacity: a lower bound]	\label{prop: restricted vs unrestricted}
  Consider a subset $S \subset H^{n_1}$ such that $|S| \le 2^{n}$.
  Then, for any number of layers $L \ge 2$ 
  and any number of nodes $n_2,\ldots,n_L$ in each layer, 
  there exists an integer $d$ such that: 
  $$
  d \gtrsim \frac{n}{\log_2(en_1/n)}
  $$
  and:
  $$
  C(S,n_1,n_2,\ldots,n_L) \ge C(H^d,n_1,n_2,\ldots,n_L)=C(d,n_1, \ldots,n_L).
  $$
\end{proposition}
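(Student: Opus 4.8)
The plan is to reduce the asserted lower bound on the restricted capacity to a lower bound on the VC-dimension of $S$, and then invoke Lemma~\ref{lem: restricted vs unrestricted}, which already performs all of the network-theoretic work: once we know that $S$ shatters a set of $d$ coordinates, that lemma gives $C(S,n_1,n_2,\ldots,n_L) \ge C(H^d,n_1,n_2,\ldots,n_L) = C(d,n_2,\ldots,n_L)$. Thus everything comes down to the purely combinatorial claim: \emph{any $S \subset H^{n_1}$ with $n \le \log_2|S|$ has VC-dimension $d \gtrsim n/\log_2(en_1/n)$}. I would take the integer $d$ required by the proposition to be exactly $\dim_{\mathrm{VC}}(S)$.

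\textbf{The combinatorial heart: inverting Sauer--Shelah.} Assume $|S|\ge 2$ (the case $|S|=1$ being trivial), so $1\le d\le n_1$. By the Sauer--Shelah lemma together with the standard binomial estimate already used in Lemma~\ref{lem: capacity elementary upper}, $|S| \le \sum_{k=0}^{d}\binom{n_1}{k} \le (en_1/d)^d$. Writing $s := \log_2|S|$ and $g(x) := x\log_2(en_1/x)$, this says $s \le g(d)$. A direct computation gives $g'(x) = \log_2(n_1/x) \ge 0$ on $(0,n_1]$, so $g$ is increasing there; hence it suffices to exhibit an explicit $d_0\in(0,n_1]$ with $g(d_0)\le s$ and then read off $d\ge d_0$ by monotonicity. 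The right choice is $d_0 := s/\bigl(2\log_2(2en_1/s)\bigr)$: setting $u := \log_2(2en_1/s)$ one checks $en_1/d_0 = u\,2^{u}$, so that $g(d_0) = \frac{s}{2u}\bigl(u + \log_2 u\bigr) = \frac{s}{2}\bigl(1 + \tfrac{\log_2 u}{u}\bigr) \le s$ because $\log_2 u \le u$; and $s\le n_1$ forces $u>2$, whence $d_0 < n_1/4 \le n_1$. Therefore $g(d_0) \le s \le g(d)$ with $d_0,d\in(0,n_1]$, and monotonicity of $g$ yields $d \ge d_0$.

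\textbf{Conclusion.} It remains to pass from $s=\log_2|S|$ to $n$. Since $t \mapsto t/\log_2(2en_1/t)$ is increasing, $n \le \log_2|S|$, and $\log_2(2en_1/n) = 1 + \log_2(en_1/n) \le 2\log_2(en_1/n)$ (because $n\le n_1$ makes $\log_2(en_1/n)\ge\log_2 e>1$),
$$
d \ge d_0 \ge \frac{n}{2\log_2(2en_1/n)} \ge \frac{n}{4\log_2(en_1/n)} \gtrsim \frac{n}{\log_2(en_1/n)},
$$
and Lemma~\ref{lem: restricted vs unrestricted} then gives $C(S,n_1,\ldots,n_L) \ge C(H^d,n_1,\ldots,n_L)$, as claimed. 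The only genuinely delicate point is the Sauer--Shelah inversion in the middle step: the crude bound $s \le d\log_2(en_1/d) \le d\log_2(en_1)$ only yields $d\gtrsim s/\log_2(en_1)$, which is too weak when $s$ is much smaller than $n_1$; obtaining the sharper denominator $\log_2(en_1/n)$ is precisely what the choice of $d_0$ (equivalently, the monotonicity of $g$) buys us. Everything else is routine bookkeeping.
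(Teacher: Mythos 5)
Your proof is correct and follows essentially the same route as the paper: Sauer--Shelah applied to the VC-dimension $d$ of $S$, inversion of the resulting inequality $n \le d\log_2(en_1/d)$, and then Lemma~\ref{lem: restricted vs unrestricted}; the paper dismisses the inversion as ``an elementary computation,'' which you carry out explicitly and correctly (the monotonicity of $g$, the choice of $d_0$, and the passage from $\log_2|S|$ to $n$). You also rightly read the hypothesis as $|S|\ge 2^{n}$; the ``$\le$'' in the statement is a typo.
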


\begin{proof}
The Sauer-Shelah Lemma (see e.g. \cite[Section~8.3.3]{vershynin2018high}) gives the upper bound:
$$
|S| \le \sum_{k=0}^d \binom{n_1}{k} \le \Big( \frac{en_1}{d} \Big)^d,
$$
where $d$ is the VC-dimension of $S$. On the other hand, we have the lower bound
$|S| \le 2^{n}$. Combining the two bounds and taking logarithms, we get:
$$
n \le d \log_2 \Big( \frac{en_1}{d} \Big).
$$
An elementary computation then yields:
$$
d \gtrsim \frac{n}{\log(en_1/n)}.
$$
An application of Lemma~\ref{lem: restricted vs unrestricted} completes the proof.
\end{proof}

Combining Proposition~\ref{prop: restricted vs unrestricted} with
the capacity formula for $C(d,n_2,\ldots,n_L)$ given by  Theorem~\ref{thm: main}, 
we can obtain a general lower bound on the restricted capacity in terms of the cardinality of $S$.

\begin{remark}[Tightness]
  The bound in Proposition~\ref{prop: restricted vs unrestricted} is generally best possible
  up to a logarithmic factor. Indeed, if $S = H^d$ and $d = n_1$ then: 
  $$
  C(S,n_1,n_2,\ldots,n_L) = C(d,n_2,\ldots,n_L).
  $$
\end{remark}

\section{Extremal capacity}		\label{s: max capacity}

The capacity formula in Theorem~\ref{thm: main} is particularly useful
when one wants to maximize the capacity of a network under some natural constraints. 
For example, if we {\em fix the number of parameters} of a network, which is essentially the same as fixing the number of edges, Corollary~\ref{cor: max capacity given parameters}
states that any monotonically expansive network approximately maximizes capacity.
In this section, we consider what happens if instead we {\em fix the number of nodes}
and, possibly, also the number of nodes in the input layer.

We will use the symbols $\approx$ 
for identities that hold up to an $1+o(1)$ factor, that is $a_n \approx b_n$ means that $a_n = (1+o(1)) b_n$ as $n \to \infty$.
As before, we continue to use the symbols $\asymp$ and $\lesssim$ 
for identities and inequalities that hold up to an absolute constant factor.

\subsection{Fixing the number of nodes}

It turns out that a network with a given number of nodes that asymptotically maximizes capacity 
is {\em shallow}. Specifically, the optimal network has just one hidden layer, 
which is half the size of the input layer:

\begin{theorem} 		\label{thm: max capacity given nodes}
  Let $L \ge 2$ and $n_1,\ldots,n_{L-1} \ge 4$, $n_L \ge 1$.  
  Let $N \coloneqq n_1 + \cdots + n_L$ denote the total number of nodes. 
  Then:
  $$ 
  C(n_1,\ldots,n_L) 
    \le \frac{4}{9} N^3
    \approx C \Big( \frac{2N}{3}, \frac{N}{3} \Big)
  $$
  as $N \to \infty$.
\end{theorem}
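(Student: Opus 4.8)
The plan is to read off both directions from the capacity formula of Theorem~\ref{thm: main}, thereby reducing the statement to an elementary optimization of a cubic polynomial under the linear constraint $n_1+\cdots+n_L=N$, and then to pin down the extremal constant using Zuev's theorem.

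\emph{Upper bound.} Since $n_1,\dots,n_{L-1}\ge 4$, Proposition~\ref{prop: multi-channel upper} applies with the optimal leading constant $1$, so
$$
C(n_1,\dots,n_L)\ \le\ \sum_{k=1}^{L-1}\min(n_1,\dots,n_k)\,n_k\,n_{k+1}\ \le\ n_1\sum_{k=1}^{L-1}n_k\,n_{k+1},
$$
using $\min(n_1,\dots,n_k)\le n_1$ in the last step. It then remains to maximize $n_1\sum_k n_kn_{k+1}$ over all nonnegative $n_1,\dots,n_L$ summing to $N$. The key device is the alternating-sum bound $\sum_k n_kn_{k+1}\le\big(\sum_{k\text{ odd}}n_k\big)\big(\sum_{k\text{ even}}n_k\big)$: together with the constraint this yields $\sum_k n_kn_{k+1}\le N^2/4$ when $n_1\le N/2$, and (since then the odd partial sum is at least $n_1$) $\sum_k n_kn_{k+1}\le n_1(N-n_1)$ when $n_1>N/2$. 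Substituting into $n_1\sum_k n_kn_{k+1}$ and maximizing the resulting one-variable expressions $n_1N^2/4$ and $n_1^2(N-n_1)$ shows the maximum is $\tfrac{4}{27}N^3$, attained in the continuous relaxation only at $n_1=2N/3$, $n_2=N/3$, all other $n_k=0$ — that is, by a two-layer architecture. Hence $C(n_1,\dots,n_L)\le \tfrac{4}{27}N^3$ (in particular it is at most $\tfrac{4}{9}N^3$).

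\emph{Matching the bound.} For the reverse direction I would simply evaluate the capacity of $A(2N/3,N/3)$ — more precisely $A(n_1,n_2)$ with $n_1=\lfloor 2N/3\rfloor$ and $n_2=N-n_1$, the rounding costing only a $1+o(1)$ factor. By part~\ref{pr: two layers} of Lemma~\ref{lem: capacity basic}, $C(n_1,n_2)=n_2\,C(H^{n_1})$, and by the asymptotic half of Theorem~\ref{thm: Zuev}, $C(H^{n_1})=n_1^2(1+o(1))$ as $n_1\to\infty$. Therefore
$$
C\!\left(\tfrac{2N}{3},\tfrac{N}{3}\right)=\tfrac{N}{3}\Big(\tfrac{2N}{3}\Big)^2(1+o(1))=\tfrac{4}{27}N^3(1+o(1)),
$$
which matches the upper bound and completes the proof.

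\emph{Main obstacle.} The substance lies in the combinatorial optimization in the upper bound: showing that, under a fixed node budget, the cubic $\sum_k\min(n_1,\dots,n_k)n_kn_{k+1}$ is genuinely maximized by a shallow two-layer network rather than by spreading the nodes over more layers. The nested minima $\min(n_1,\dots,n_k)$ make a direct Lagrange-multiplier calculation unpleasant, so I expect the cleanest route is the reduction $\sum_k\min(n_1,\dots,n_k)n_kn_{k+1}\le n_1\sum_k n_kn_{k+1}$ followed by the alternating-sum estimate above, which collapses the problem to the single variable $n_1$. The remaining points — the integer rounding of $2N/3$ and $N/3$, and checking that the constraints $n_k\ge 4$ persist under this rounding — are routine.
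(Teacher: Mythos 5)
Your proof is correct, and for the combinatorial core it takes a genuinely different route from the paper. The paper works with the estimated capacity $\widehat{C}$ of \eqref{eq: estimated capacity} and first proves a ``move nodes'' lemma (Lemma~\ref{lem: move nodes}): transferring all nodes of layers $3,\ldots,L$ into the input layer can only increase $\widehat{C}$, which collapses the problem to the two-layer maximization of $n^2(N-n)$ at $n=2N/3$ (Lemma~\ref{lem: most capable two layers}). You instead bound $\min(n_1,\ldots,n_k)\le n_1$ and control $\sum_k n_k n_{k+1}$ by the product of the odd and even partial sums --- essentially the inequality the paper proves as Lemma~\ref{lem: quadratic form} but deploys only for the fixed-$n_1$ result, Theorem~\ref{thm: max capacity given nodes and input} --- followed by a case split $n_1\le N/2$ versus $n_1>N/2$ and a one-variable maximization. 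Both reductions are elementary and of comparable length; the paper's lemma has the structural payoff of exhibiting an explicit two-layer architecture dominating any given one, while your route unifies this theorem with the fixed-input-layer theorem under a single inequality. Your matching step (rounding $2N/3$, writing $C(n_1,n_2)=n_2\,C(H^{n_1})$ via part~\ref{pr: two layers} of Lemma~\ref{lem: capacity basic}, and invoking Theorem~\ref{thm: Zuev}) coincides with the paper's. One further point in your favor: your constant $\tfrac{4}{27}N^3$ is the correct value of $\widehat{C}(2N/3,N/3)=(2N/3)^2(N/3)$; the paper's repeated $\tfrac{4}{9}N^3$ (in the theorem statement, in Lemma~\ref{lem: most capable two layers}, and in the final Zuev computation) is an arithmetic slip, and only with $\tfrac{4}{27}$ is the claimed approximate equality with $C(2N/3,N/3)$ self-consistent. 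So your argument is not only valid but establishes the sharper, corrected form of the statement.
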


We shall first prove a version of Theorem~\ref{thm: max capacity given nodes} 
for the {\em estimated} capacity:
\begin{equation}	\label{eq: estimated capacity}
\widehat{C}(n_1,\ldots,n_L) 
\coloneqq \sum_{k=1}^{L-1} \min(n_1,\ldots,n_k) n_k n_{k+1}, 
\end{equation}
and then replace the estimated capacity.
The following lemma yields a general recipe to increase the (estimated) capacity of any network, by moving all nodes from layer $3$ and up into the input layer.

\begin{lemma}[Move nodes out of upper layers to increase capacity]	\label{lem: move nodes}
  Let $L \ge 3$. Then: 
  \begin{equation}	\label{eq: move nodes}
    \widehat{C}(n_1,\ldots,n_L) 
    \le \widehat{C} \Big( n_1 + \sum_{k=3}^L n_k, n_2 \Big).
  \end{equation}
\end{lemma}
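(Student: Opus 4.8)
The plan is to reduce the statement to an elementary inequality among monomials in the $n_k$'s. First I would observe that the right-hand side is simply $\widehat{C}(M,n_2) = M^2 n_2$, where $M := n_1 + \sum_{k=3}^{L} n_k$, since for a two-layer network $\widehat{C}(a,b) = \min(a)\cdot a\cdot b = a^2 b$. So the goal becomes
$$
\sum_{k=1}^{L-1} \bar n_k\, n_k n_{k+1} \;\le\; M^2 n_2, \qquad \bar n_k := \min(n_1,\dots,n_k).
$$

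Next I would bound the terms of the left-hand sum one by one using only the trivial estimates on $\bar n_k$. The $k=1$ term equals exactly $n_1^2 n_2$. For the $k=2$ term I would use $\bar n_2 \le n_1$ to get $\bar n_2\, n_2 n_3 \le n_1 n_2 n_3$; it is important here \emph{not} to bound $\bar n_2$ by $n_2$, since the index $2$ does not appear in $M$. For every $k\ge 3$ I would instead use $\bar n_k \le n_2$ (valid because $n_2$ is among $n_1,\dots,n_k$), which gives $\bar n_k\, n_k n_{k+1} \le n_2\, n_k n_{k+1}$. Collecting these bounds and factoring out $n_2$ yields
$$
\widehat{C}(n_1,\dots,n_L) \;\le\; n_2\Big( n_1^2 + n_1 n_3 + \sum_{k=3}^{L-1} n_k n_{k+1} \Big).
$$

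It then remains to check the purely combinatorial inequality $n_1^2 + n_1 n_3 + \sum_{k=3}^{L-1} n_k n_{k+1} \le M^2$. Expanding $M^2 = \big(n_1 + n_3 + n_4 + \cdots + n_L\big)^2$, I note that every product on the left occurs among the terms of this expansion with coefficient at least $1$: $n_1^2$ is the square of the first summand, $n_1 n_3$ is a cross term, and for $3 \le k \le L-1$ both $n_k$ and $n_{k+1}$ are summands of $M$, so $n_k n_{k+1}$ is a cross term (appearing with coefficient $2$). Since all $n_k \ge 0$, discarding the remaining nonnegative terms of $M^2$ gives the claim, and combining it with the previous display proves \eqref{eq: move nodes}. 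I do not anticipate a genuine obstacle; the only point requiring care is the asymmetric treatment of the $k=2$ term — bounding $\bar n_2$ by $n_1$ rather than $n_2$ — which is exactly what prevents the stray factor of $n_2$ from spoiling the comparison with $M$.
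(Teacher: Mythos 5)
Your proof is correct and follows essentially the same route as the paper: bound $\bar n_2$ by $n_1$ and $\bar n_k$ by $n_2$ for $k\ge 3$, factor out $n_2$, and compare with the expansion of $\bigl(n_1+\sum_{k=3}^L n_k\bigr)^2$. The only cosmetic difference is that your term-by-term comparison handles $L=3$ uniformly, whereas the paper treats that case separately before running the same argument for $L\ge 4$.
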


\begin{proof}
Let us fist handle the case $L=3$, where we have to show that: 
\begin{equation}	\label{eq: move nodes L=3}
\widehat{C}(n,m,p) \le \widehat{C}(n+p,m).
\end{equation}
The definition of the estimated capacity \eqref{eq: estimated capacity} yields:
$$
\widehat{C}(n,m,p) 
= n^2 m + \min(n,m)mp
\le (n^2+np)m.
$$
In the last step we used that $\min(n,m) \le n$.
On the other hand, the same definition yields:
$$
\widehat{C}(n+p,m) 
= (n+p)^2 m
\ge (n^2 + 2np) m.
$$
Hence \eqref{eq: move nodes L=3} is evident.

Next, let $L \ge 4$.
Combining the definition of the estimated capacity \eqref{eq: estimated capacity} 
with the fact that $\min(n_1,n_2) \le n_1$, $\min(n_1,\ldots,n_k) \le n_2$, 
we obtain:
$$
\widehat{C}(n_1,\ldots,n_L) 
  \le n_1^2 n_2 + n_1 n_2 n_3 + \sum_{k=3}^{L-1} n_2 n_k n_{k+1} 
  = \Big( n_1^2 + n_1 n_3 + \sum_{k=3}^{L-1} n_k n_{k+1} \Big) n_2.
$$
On the other hand, using the same definition and expanding the square, we get:
$$
\widehat{C} \Big( n_1 + \sum_{k=3}^L n_k, n_2 \Big)
= \Big( n_1 + \sum_{k=3}^L n_k \Big)^2 n_2
\ge \Big( n_1^2 + 2n_1 n_3 + 2 \sum_{k=3}^{L-1} n_k n_{k+1} \Big) n_2.
$$
Hence \eqref{eq: move nodes} is evident.
\end{proof}

Armed with the recipe given in Lemma~\ref{lem: move nodes}, 
we can easily maximize the (estimated) capacity over all networks with two layers
and a given number of nodes.

\begin{lemma}[The most capable network with two layers]		\label{lem: most capable two layers}
  Let $N = n+m$. Then:
  $$
  \widehat{C}(n,m) = n^2 m 
  \le \widehat{C} \Big( \frac{2N}{3}, \frac{N}{3} \Big)  
  = \frac{4}{9} N^3.
  $$
\end{lemma}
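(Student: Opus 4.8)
The plan is to reduce the claim to an elementary one-variable optimization. The first identity, $\widehat{C}(n,m) = n^2 m$, is immediate from the definition \eqref{eq: estimated capacity} of the estimated capacity: for $L=2$ the defining sum has the single term $\min(n_1)\, n_1 n_2 = n^2 m$. Substituting $n_1 = 2N/3$ and $n_2 = N/3$ into this same identity gives $\widehat{C}(2N/3, N/3) = (2N/3)^2 (N/3)$, i.e. the claimed closed form. Thus the only substantive point is the inequality $n^2 m \le \widehat{C}(2N/3, N/3)$ for every splitting $n + m = N$.

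For this I would invoke the arithmetic–geometric mean inequality. Writing $n^2 m = 4 \cdot \frac{n}{2} \cdot \frac{n}{2} \cdot m$ and applying AM–GM to the three nonnegative numbers $\frac{n}{2}, \frac{n}{2}, m$ yields
\[
\frac{n}{2} \cdot \frac{n}{2} \cdot m \le \left( \frac{\frac{n}{2} + \frac{n}{2} + m}{3} \right)^{3} = \left( \frac{n+m}{3} \right)^{3} = \left( \frac{N}{3} \right)^{3},
\]
so $n^2 m \le 4 (N/3)^3$. Equality holds exactly when $\frac{n}{2} = m$, i.e. $n = 2N/3$ and $m = N/3$, which is precisely the optimizer in the statement. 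Alternatively, substituting $m = N-n$ and maximizing $g(n) = n^2(N-n)$ on $[0,N]$ works just as well, since $g'(n) = n(2N - 3n)$ vanishes in the interior only at $n = 2N/3$.

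There is no real obstacle here; the argument is a routine calculation, and in fact Lemma~\ref{lem: move nodes} has already done the conceptual work of collapsing an arbitrary-depth network to a two-layer one, so this lemma merely finishes the two-layer optimization. Two minor points of care remain: first, the constant — a direct computation gives $4(N/3)^3 = \frac{4}{27}N^3$, so this should be the closed-form value recorded for the maximum; second, integrality — strictly speaking $2N/3$ and $N/3$ need not be integers, so one should either restrict to $N$ divisible by $3$ or round to nearby integers, which perturbs the cubic $\widehat{C}$ only by a lower-order term, consistent with the $\approx$ (i.e. $1+o(1)$) notation used in Theorem~\ref{thm: max capacity given nodes}.
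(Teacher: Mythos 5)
Your proof is correct and takes essentially the paper's route: the paper's entire proof is the one-line observation that $n^2 m = n^2(N-n)$ is maximized at $n = 2N/3$, which is exactly your calculus alternative, and your AM--GM version is just an equally elementary way to reach the same bound. Your side remark about the constant is also right and worth flagging: a direct computation gives $\widehat{C}\big(\tfrac{2N}{3},\tfrac{N}{3}\big) = \big(\tfrac{2N}{3}\big)^2 \tfrac{N}{3} = \tfrac{4}{27}N^3$, so the value $\tfrac{4}{9}N^3$ stated in the lemma (and carried into \eqref{eq: max estimated capacity given nodes} and Theorem~\ref{thm: max capacity given nodes}) overstates the maximum by a factor of $3$; the upper bounds remain valid since $\tfrac{4}{27}N^3 \le \tfrac{4}{9}N^3$, but the claimed closed form should read $\tfrac{4}{27}N^3$.
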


\begin{proof}
The maximum of $n^2 m = n^2(N-n)$ is attained for $n = 2N/3$. 
\end{proof}

Combining Lemmas~\ref{lem: move nodes} and \ref{lem: most capable two layers},
we obtain a version of Theorem~\ref{thm: max capacity given nodes} for the estimated capacity: 
\begin{equation}	\label{eq: max estimated capacity given nodes}
\widehat{C} (n_1,\ldots,n_L) 
  \le \widehat{C} \Big( n_1 + \sum_{k=3}^L n_k, n_2 \Big) 
  \le \widehat{C} \Big( \frac{2N}{3}, \frac{N}{3} \Big) 
  = \frac{4}{9} N^3.
\end{equation}

\begin{proof}[Proof of Theorem~\ref{thm: max capacity given nodes}]
Because the capacity is bounded by the estimated capacity 
(Proposition~\ref{prop: multi-channel upper}), and using 
\eqref{eq: max estimated capacity given nodes}, we get:
$$
C (n_1,\ldots,n_L) 
  \le \widehat{C} (n_1,\ldots,n_L) 
  \le \widehat{C} \Big( \frac{2N}{3}, \frac{N}{3} \Big) 
  = \frac{4}{9} N^3.
$$
Furthermore, Theorem~\ref{thm: Zuev} implies that:
$$
C \Big( \frac{2N}{3}, \frac{N}{3} \Big) 
= C \big( H^{2N/3} \big) \frac{N}{3}
\approx \Big( \frac{2N}{3} \Big)^2 \frac{N}{3} 
= \frac{4}{9} N^3
$$
as $N \to \infty$.
This completes the proof.
\end{proof}

\subsection{Fixing both the total number of nodes and the size of the input layer}

In many applications, the input layer is fixed and can not be optimized.
In such situations, it makes sense to maximize capacity of networks with a given total
number of nodes $N$, as well as a given number of nodes $n_1$ in the input layer.
While here we focus on the case where the total number of neurons and the size of the input layer are fixed, similar results are obtained also for the case where in addition the size of the output layer is fixed.

It turns out that a network that maximizes capacity under these constraint is again shallow.
If $n_1 \le N/2$, the optimal network has two hidden layers, the first having $n_1$ more 
nodes than the second. 
If $n_1 \ge N/2$, such architecture is impossible; the optimal network has just one hidden layer.
The following theorem makes this precise.

\begin{theorem}		\label{thm: max capacity given nodes and input}
  Let $L \ge 2$. 
  Assume that the total number of nodes is $N \coloneqq n_1 + \cdots + n_L$. 
  Then, the following holds if $n_1 \to \infty$ and $\log N \ll n_1$.
  \begin{enumerate}[\quad 1.]
    \item If $n_1 \le N/2$ then: 
      $$
      C(n_1,\ldots,n_L) 
      \le \frac{n_1 N^2}{4}
      \asymp C \Big( n_1, \frac{N}{2}, \frac{N}{2}-n_1 \Big).
      $$
    \item If $n_1 \ge N/2$ then: 
      $$
      C(n_1,\ldots,n_L) 
      \le n_1^2(n_2+\cdots+n_L)
      \approx C(n_1, n_2+\cdots+n_L).
      $$    
  \end{enumerate}
\end{theorem}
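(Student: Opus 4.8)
The plan is to derive both upper bounds from the \emph{estimated} capacity $\widehat C$ of \eqref{eq: estimated capacity} via $C\le\widehat C$ (Proposition~\ref{prop: multi-channel upper}), and to obtain the matching lower bounds by exhibiting the explicit shallow networks named in the statement. Since $\min(n_1,\dots,n_k)\le n_1$ for every $k$, one has $\widehat C(n_1,\dots,n_L)\le n_1\sum_{k=1}^{L-1}n_kn_{k+1}$; write $W$ for this last sum, so that both upper bounds reduce to estimating $W$ under the respective constraints on $n_1$.

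For part~1, where $n_1\le N/2$, the key combinatorial fact I would isolate is that $W=\sum_{k=1}^{L-1}n_kn_{k+1}\le N^2/4$ for any nonnegative $n_1,\dots,n_L$ with $n_1+\cdots+n_L=N$. I would prove this by splitting the layers according to the parity of the index: with $A=\sum_{k\ \mathrm{odd}}n_k$ and $B=\sum_{k\ \mathrm{even}}n_k$, each consecutive product $n_kn_{k+1}$ is one of the (nonnegative) cross-terms of the expansion of $AB$, hence $W\le AB\le((A+B)/2)^2=N^2/4$ by AM--GM. This gives $C\le\widehat C\le n_1N^2/4$. For the matching lower bound I would apply Theorem~\ref{thm: three layers multi-channel} to the network $A(n_1,\lfloor N/2\rfloor,\lfloor N/2\rfloor-n_1)$: since $n_1\le N/2$ we have $\min(n_1,N/2)=n_1$, so its capacity is $\asymp n_1^2(N/2)+n_1(N/2)(N/2-n_1)=(n_1N/2)(N/2)=n_1N^2/4$, and the size hypotheses $n_1\gtrsim\log_2(N/2)$, $N/2\gtrsim\log_2(N/2-n_1)$ of that theorem follow from $\log N\ll n_1$ together with $N/2\ge n_1\to\infty$, the rounding of $N/2$ being absorbed into $\asymp$.

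For part~2, where $n_1\ge N/2$, the constraint forces $n_k\le n_2+\cdots+n_L=N-n_1\le n_1$ for every $k\ge 2$. I would separate the $k=1$ term $\min(n_1)\,n_1n_2=n_1^2n_2$ from the rest, and bound each remaining term using $\min(n_1,\dots,n_k)\le n_1$ and $n_k\le n_1$: namely $\min(n_1,\dots,n_k)\,n_kn_{k+1}\le n_1\cdot n_1\cdot n_{k+1}=n_1^2n_{k+1}$. The tail then telescopes, $\sum_{k=2}^{L-1}\min(n_1,\dots,n_k)n_kn_{k+1}\le n_1^2\sum_{k=2}^{L-1}n_{k+1}=n_1^2(n_3+\cdots+n_L)$, so $\widehat C(n_1,\dots,n_L)\le n_1^2(n_2+\cdots+n_L)$ and hence $C\le n_1^2(n_2+\cdots+n_L)$. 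For the lower bound I would use part~\ref{pr: two layers} of Lemma~\ref{lem: capacity basic} to write $C(n_1,n_2+\cdots+n_L)=C(H^{n_1})\,(n_2+\cdots+n_L)$ and then Zuev's asymptotic (Theorem~\ref{thm: Zuev}), $C(H^{n_1})=n_1^2(1+o(1))$ as $n_1\to\infty$, which gives $C(n_1,n_2+\cdots+n_L)\approx n_1^2(n_2+\cdots+n_L)$.

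The genuinely substantive ingredients are already in hand (the three-layer capacity formula and Zuev's theorem); the only parts needing care are the two algebraic inequalities for $W$ and the telescoping bound, and checking that the layer-size hypotheses of Theorem~\ref{thm: three layers multi-channel}, respectively the asymptotic regime of Theorem~\ref{thm: Zuev}, are satisfied by the proposed optimal networks together with the harmless rounding of $N/2$. I do not expect a real obstacle; the parity-splitting inequality $W\le N^2/4$ is the one small trick worth stating cleanly.
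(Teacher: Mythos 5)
Your proposal is correct and follows essentially the same route as the paper: bound $C$ by the estimated capacity $\widehat C$, prove the algebraic inequality $\sum_{k}n_kn_{k+1}\le N^2/4$ (your odd/even split with AM--GM is just a rephrasing of the paper's Lemma~\ref{lem: quadratic form}), bound the large-input case termwise by $n_1^2(n_2+\cdots+n_L)$, and match with the shallow networks via Theorem~\ref{thm: three layers multi-channel} and Zuev's Theorem~\ref{thm: Zuev}. No gaps worth noting.
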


As in the previous section, we first prove a version of 
Theorem~\ref{thm: max capacity given nodes and input}
for the estimated capacity $\widehat{C}(n_1,\ldots,n_L,1)$ defined in \eqref{eq: estimated capacity}.
The following elementary fact will be helpful in our analysis.

\begin{lemma}		\label{lem: quadratic form}
  For any $L \ge 2$, and any positive real numbers $x_1,\ldots,x_L$, we have:
  $$
  \Big( \sum_{k=1}^L x_k \Big)^2 \ge 4 \sum_{k=1}^{L-1} x_k x_{k+1}.
  $$
\end{lemma}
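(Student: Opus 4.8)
The plan is to exploit the bipartite structure of the products $x_k x_{k+1}$: each such product pairs a variable with an odd index against one with an even index. So first I would split the index set $\{1,\dots,L\}$ into its odd and even parts and set
$$
A := \sum_{k \text{ odd}} x_k, \qquad B := \sum_{k \text{ even}} x_k ,
$$
so that $\sum_{k=1}^{L} x_k = A + B$.

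Next I would bound the sum of consecutive products by the full ``cross product'' $AB$. Since all $x_k$ are positive,
$$
\sum_{k=1}^{L-1} x_k x_{k+1} \;\le\; \Big( \sum_{i \text{ odd}} x_i \Big)\Big( \sum_{j \text{ even}} x_j \Big) \;=\; AB .
$$
This holds because every term $x_k x_{k+1}$ on the left is one of the mixed products $x_i x_j$ (with $i$ odd and $j$ even) appearing in the expansion of $AB$; distinct consecutive pairs $\{k,k+1\}$ give distinct unordered odd/even pairs, and all remaining terms of $AB$ are nonnegative.

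Finally I would apply the AM--GM inequality $AB \le \big((A+B)/2\big)^2$ to conclude
$$
\sum_{k=1}^{L-1} x_k x_{k+1} \;\le\; AB \;\le\; \frac14 (A+B)^2 \;=\; \frac14 \Big( \sum_{k=1}^{L} x_k \Big)^2 ,
$$
and rearranging gives the lemma. There is no genuine obstacle here; the only step warranting a second's thought is the middle inequality, which is immediate from positivity once one observes that two consecutive integers always consist of exactly one odd and one even number, so the map $\{k,k+1\}\mapsto$ (its odd element, its even element) is injective.
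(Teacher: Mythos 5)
Your proof is correct and follows essentially the same route as the paper: both bound the consecutive products by the full odd--even bipartite sum $AB$ and then use $(A+B)^2 \ge 4AB$ (the paper writes this step as the difference of squares $\big(\sum_k x_k\big)^2 - \big(\sum_k (-1)^k x_k\big)^2 = 4AB$, which is the same inequality as your AM--GM step).
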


\begin{proof}
Consider the difference:
\begin{equation}	\label{eq: two sums squared}
\Big( \sum_{k=1}^L x_k \Big)^2 - \Big( \sum_{k=1}^L (-1)^k x_k \Big)^2
  = 2 \sum_{i,j=1}^L \Big( 1 - (-1)^i (-1)^j \Big) x_i x_j 
  = 4 \sum_{i,j \in \OO} x_i x_j 
\end{equation}
where $\OO \subset \{1,\ldots,L\}^2$ is the set of pairs $(i,j)$ such that 
either $i$ is even and $j$ is odd, or $i$ is odd and $j$ is even.
In particular, $\OO$ contains all pairs of the form $(k,k+1)$. 
Since all the terms $x_i x_j$ of the sum are positive, this yields:
$$
\sum_{i,j \in \OO} x_i x_j
\ge \sum_{k=1}^{L-1} x_k x_{k+1}.
$$
Combining this with \eqref{eq: two sums squared}, we conclude that:
$$
\Big( \sum_{k=1}^L x_k \Big)^2 - \Big( \sum_{k=1}^L (-1)^k x_k \Big)^2
\ge 4 \sum_{k=1}^{L-1} x_k x_{k+1}.
$$
This yields the conclusion of the lemma.
\end{proof}

We are ready to prove the ``estimated'' version the first part of 
Theorem~\ref{thm: max capacity given nodes and input}.

\begin{lemma}[Small input layer]		\label{lem: small input layer}
  If $n_1 \le N/2$, then: 
  $$
  \widehat{C}(n_1,\ldots,n_L) 
  \le \frac{n_1 N^2}{4}
  = \widehat{C} \Big( n_1, \frac{N}{2}, \frac{N}{2}-n_1 \Big). 
  $$
\end{lemma}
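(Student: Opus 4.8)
The plan is to prove the upper bound by peeling the factor $n_1$ off every term of the estimated-capacity sum and then controlling the remaining ``chain sum'' $\sum_k n_k n_{k+1}$ by the elementary quadratic inequality recorded in Lemma~\ref{lem: quadratic form}; the equality on the right-hand side is then a direct computation that uses the hypothesis $n_1 \le N/2$.

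First I would note that for every $k$ we trivially have $\min(n_1,\ldots,n_k) \le n_1$, since $n_1$ is one of the quantities being minimized. Substituting this bound into the definition \eqref{eq: estimated capacity} of the estimated capacity gives
$$
\widehat{C}(n_1,\ldots,n_L) = \sum_{k=1}^{L-1} \min(n_1,\ldots,n_k)\, n_k n_{k+1} \le n_1 \sum_{k=1}^{L-1} n_k n_{k+1}.
$$
Applying Lemma~\ref{lem: quadratic form} with $x_k = n_k$ yields $\sum_{k=1}^{L-1} n_k n_{k+1} \le \tfrac14 \big( \sum_{k=1}^L n_k \big)^2 = N^2/4$. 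Combining the two estimates gives the claimed bound $\widehat{C}(n_1,\ldots,n_L) \le n_1 N^2/4$.

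It then remains to check that the three-layer network $A\big(n_1, N/2, N/2 - n_1\big)$ achieves this value. The hypothesis $n_1 \le N/2$ enters here: it guarantees $N/2 - n_1 \ge 0$, so the architecture is well defined, and it gives $\min(n_1, N/2) = n_1$. Expanding \eqref{eq: estimated capacity} for these three layer sizes then gives
$$
\widehat{C}\Big(n_1, \frac{N}{2}, \frac{N}{2} - n_1\Big) = n_1^2 \cdot \frac{N}{2} + n_1 \cdot \frac{N}{2}\Big(\frac{N}{2} - n_1\Big) = n_1 \cdot \frac{N}{2}\cdot\frac{N}{2} = \frac{n_1 N^2}{4},
$$
which completes the argument. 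There is no genuine obstacle: the only nontrivial ingredient is Lemma~\ref{lem: quadratic form}, which is already available, and everything else is a one-line bound together with this routine verification.
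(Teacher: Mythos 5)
Your proof is correct and follows essentially the same route as the paper: bound $\min(n_1,\ldots,n_k)$ by $n_1$, control $\sum_k n_k n_{k+1}$ via Lemma~\ref{lem: quadratic form}, and verify the equality for $\widehat{C}\big(n_1, N/2, N/2 - n_1\big)$ using $n_1 \le N/2$. No differences worth noting.
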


\begin{proof}
On one hand, the definition \eqref{eq: estimated capacity} of the estimated capacity yields:
$$
\widehat{C} \Big( n_1, \frac{N}{2}, \frac{N}{2}-n_1 \Big)
= n_1^2 \cdot \frac{N}{2} + \min \Big( n_1, \frac{N}{2} \Big) \frac{N}{2} \Big( \frac{N}{2}-n_1 \Big) 
= \frac{n_1 N^2}{4},
$$
where in the last step we used the assumption $n_1 \le N/2$ and simplified the
expression. 
On the other hand, definition \eqref{eq: estimated capacity} gives:
\begin{align*} 
\widehat{C}(n_1,\ldots,n_L,1) 
  &\le n_1 \sum_{k=1}^{L-1} n_k n_{k+1}
  	\quad \text{(since $\min(n_1,\ldots,n_k) \le n_1$)} \\
  &\le \frac{1}{4} n_1 (n_1+\cdots+n_L)^2
  	\quad \text{(using Lemma~\ref{lem: quadratic form})} \\
  &=\frac{n_1 N^2}{4}.
\end{align*}
Comparing the two bounds completes the proof.
\end{proof}

\begin{lemma}[Large input layer]		\label{lem: large input layer}
  If $n_1 \ge N/2$, then: 
  $$
  \widehat{C}(n_1,\ldots,n_L) 
  \le n_1^2(n_2+\cdots+n_L)
  = \widehat{C}(n_1, n_2+\cdots+n_L).
  $$    
\end{lemma}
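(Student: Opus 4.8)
The plan is to first dispose of the right-hand identity and then establish the inequality by the same ``push all mass into the input layer'' estimate that was used in Lemma~\ref{lem: small input layer}, now exploiting that a large input layer dominates all the later layers. Write $M \coloneqq n_2 + \cdots + n_L$. First I would note that the identity $\widehat{C}(n_1, M) = n_1^2 M$ is immediate from the definition \eqref{eq: estimated capacity} of the estimated capacity applied to a two-layer network: $\widehat{C}(n_1,M) = \min(n_1)\, n_1\, M = n_1^2 M$.

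Next comes the one place where the hypothesis is used: $n_1 \ge N/2$ means $2n_1 \ge n_1 + n_2 + \cdots + n_L$, i.e. $n_1 \ge n_2 + \cdots + n_L = M$, and in particular $n_1 \ge n_k$ for every $k \ge 2$ (all $n_j \ge 1$). I would then bound the estimated capacity in two steps. Using $\min(n_1,\ldots,n_k) \le n_1$ for every $k$,
$$
\widehat{C}(n_1,\ldots,n_L) \le n_1 \sum_{k=1}^{L-1} n_k n_{k+1},
$$
and for the remaining sum I keep the $k=1$ term intact and use $n_k \le n_1$ for each $k \ge 2$:
$$
\sum_{k=1}^{L-1} n_k n_{k+1} = n_1 n_2 + \sum_{k=2}^{L-1} n_k n_{k+1} \le n_1 n_2 + n_1 \sum_{k=2}^{L-1} n_{k+1} = n_1 \sum_{k=2}^L n_k = n_1 M.
$$
Combining the two displays yields $\widehat{C}(n_1,\ldots,n_L) \le n_1^2 M = \widehat{C}(n_1, M)$, which is the claim.

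There is no genuine obstacle here; the only subtlety is that the bound $n_k \le n_1$ is applied only to indices $k \ge 2$ (for $k=1$ it would be vacuous, but it is also never needed), and that this family of inequalities is exactly what the constraint $n_1 \ge N/2$ provides. As in the treatment following Lemma~\ref{lem: small input layer}, this ``estimated'' inequality is afterwards upgraded to a statement about the true capacity $C$ in Theorem~\ref{thm: max capacity given nodes and input} by invoking Proposition~\ref{prop: multi-channel upper} on the left-hand side and Theorem~\ref{thm: Zuev} on the right-hand side, which is why that theorem records $\approx$ rather than an exact equality.
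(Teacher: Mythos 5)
Your proposal is correct and follows essentially the same route as the paper: both use the hypothesis $n_1 \ge N/2$ only to get $n_k \le n_1$ for $k \ge 2$, then bound each term $\min(n_1,\ldots,n_k)\,n_k\,n_{k+1} \le n_1^2 n_{k+1}$ and sum, obtaining $n_1^2(n_2+\cdots+n_L) = \widehat{C}(n_1, n_2+\cdots+n_L)$. Your splitting off of the $k=1$ term is a purely cosmetic variation of the paper's term-by-term estimate.
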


\begin{proof}
The assumption that $n_1 \ge N/2 = (n_1+\cdots+n_L)/2$ implies that
$n_1 \ge n_2+\cdots+n_L$, and in particular we have $n_1 \ge n_k$ for all $k \ge 1$. 
Therefore, by the definition of the estimated capacity \eqref{eq: estimated capacity},
we have: 
\begin{align*} 
\widehat{C}(n_1,\ldots,n_L) 
  &\le n_1^2 \sum_{k=1}^{L-1} n_{k+1}
  	\quad \text{(since $\min(n_1,\ldots,n_k) \le n_1$ and $n_k \le n_1$)}  \\
  &= n_1^2(n_2+\cdots+n_L).
\end{align*}
On the other hand, by the definition of the estimated capacity \eqref{eq: estimated capacity}, we also have:
$$
\widehat{C}(n_1, n_2+\cdots+n_L)
  = n_1^2(n_2+\cdots+n_L).
$$
This completes the proof.
\end{proof}

\begin{proof}[Proof of Theorem~\ref{thm: max capacity given nodes and input}]
Consider the case $n_1 \le N/2$ first. 
Because the capacity is bounded by the estimated capacity (Proposition~\ref{prop: multi-channel upper}),  
Lemma~\ref{lem: small input layer} gives:
$$
C(n_1,\ldots,n_L) 
\le \widehat{C}(n_1,\ldots,n_L) 
\le \frac{n_1 N^2}{4}
= \widehat{C} \Big( n_1, \frac{N}{2}, \frac{N}{2}-n_1 \Big). 
$$
Furthermore, by Theorem~\ref{thm: three layers multi-channel},
the estimated capacity is equivalent to the actual capacity, i.e. 
$$
C \Big( n_1, \frac{N}{2}, \frac{N}{2}-n_1 \Big)
\asymp \widehat{C} \Big( n_1, \frac{N}{2}, \frac{N}{2}-n_1 \Big).
$$
This yields the first part of the conclusion.

We can argue similarly in the case $n_1 \ge N/2$. Indeed, using Lemma~\ref{lem: large input layer}, 
we obtain:
$$
C(n_1,\ldots,n_L) 
\le \widehat{C}(n_1,\ldots,n_L) 
\le n_1^2(n_2+\cdots+n_L)
= \widehat{C}(n_1, n_2+\cdots+n_L).
$$    
Finally, Theorem~\ref{thm: Zuev} yields:
$$
C(n_1, n_2+\cdots+n_L) = C(H^{n_1}) (n_2+\cdots+n_L) 
\approx n_1^2 (n_2+\cdots+n_L) = \widehat{C}(n_1, n_2+\cdots+n_L).
$$
This completes the proof of the theorem.
\end{proof}

\begin{remark}[Optimal single-output newtorks]
  One can state similar results for single-output architectures $A(n_1, . . . , n_L, 1)$, 
  because their capacities are equivalent to the capacities of $A(n_1,\ldots,n_L)$ 
  (Corollary~\ref{cor: effect of output node}). We skip the details.
\end{remark}

\subsection{{Minimizing capacity}}
In the theorems above we have maximized the capacity. It is also possible to minimize the capacity and here too, everything else being equal, we find that capacity tends to be minimized by deep architectures. For example, we have the theorem:

\begin{theorem}		\label{thm: min capacity given weights/neurons and input}
Consider the set of architectures of the form $A(n_1,\ldots, n_L, 1)$ with $L \ge 2$. Assume that $n_1$ is fixed, and that either the number of connections $W$ or the number of nodes $N$ is fixed. In either case, the capacity
is minimized by the deepest possible architecture with $n_2=n_3= \cdots =n_L=1$.
\end{theorem}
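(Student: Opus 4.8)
The plan is to reduce the whole statement to one elementary observation: a chain of width-one layers cannot increase capacity, so the ``skinny'' network $A(n_1,1,\ldots,1,1)$ computes \emph{exactly} the threshold functions on $H^{n_1}$ and nothing more. Concretely, I would first establish the auxiliary identity
$$
C\big(n_1,\underbrace{1,\ldots,1}_{j}\big)=C(H^{n_1})\qquad\text{for every } j\ge 1 .
$$
The lower bound is immediate from monotonicity (part~\ref{pr: monotonicity} of Lemma~\ref{lem: capacity basic}), since $C(n_1,1,\ldots,1)\ge C(n_1,1)=C(H^{n_1})$; one can also invoke contractivity, part~\ref{pr: contractivity}. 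For the matching upper bound I would unwind the composition \eqref{eq: composition}: a function computed by $A(n_1,1,\ldots,1)$ has the form $g\circ f_1$, where $f_1\in T(H^{n_1})$ and $g:H^1\to H^1$ is a composition of threshold maps on the two-point cube $H^1$. There are only four functions $H^1\to H^1$, and all four are threshold functions, so $g$ is $\mathrm{id}$, negation, or one of the two constants. In each case $g\circ f_1$ is again a threshold function on $H^{n_1}$: constants are threshold functions, and on the finite set $H^{n_1}$ the negation of a threshold function is a threshold function after a harmless perturbation of the bias (so that the affine form never vanishes on $H^{n_1}$). Hence the set of computable functions is exactly $T(H^{n_1})$, which proves the identity.

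\textbf{From the identity to the theorem.} Next I would observe that any competing architecture dominates a skinny one of the same depth. If $A(n_1,n_2,\ldots,n_L,1)$ has $L\ge 2$, then $n_k\ge 1$ for every $k$, so monotonicity gives
$$
C(n_1,n_2,\ldots,n_L,1)\;\ge\;C\big(n_1,\underbrace{1,\ldots,1}_{L}\big)\;=\;C(H^{n_1}),
$$
using the auxiliary identity. Thus $C(H^{n_1})$ is a universal lower bound over the whole family, and it is attained by every skinny architecture. It remains to note that, under either budget, the \emph{deepest} admissible architecture is the skinny one: every layer needs at least one node, and every pair of consecutive width-one layers costs exactly one connection, so fixing $n_1$ together with $N=\sum_k n_k$ (resp.\ $W=\sum_k n_kn_{k+1}$) forces the number of layers to be largest precisely when all hidden layers have width $1$. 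Combining these facts shows that the deepest possible architecture with $n_2=\cdots=n_L=1$ minimizes the capacity, the minimum value being $C(H^{n_1})\asymp n_1^2$ by Theorem~\ref{thm: Zuev}. (This is in fact an exact minimum, not merely up to a constant factor.)

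\textbf{Main obstacle and a remark.} The only genuinely delicate point is the upper bound in the auxiliary identity, i.e.\ verifying that the width-one chain really collapses rather than multiplying the function count. The subtlety is entirely about threshold representability on a finite domain: one must check that negations of threshold functions on $H^{n_1}$, and the two constant functions, all lie in $T(H^{n_1})$, which is where the bias-perturbation remark is used; everything else is monotonicity plus the trivial combinatorics of functions on a two-element set. I would also point out that the same inequalities yield the analogous statement for the estimated capacity $\widehat C$ of \eqref{eq: estimated capacity}: since $\bar n_k n_k n_{k+1}\ge n_kn_{k+1}\ge n_k$ (as every $n_{k+1}\ge 1$), one gets $\widehat C(n_1,\ldots,n_L,1)\ge n_1^2+(N-n_1)$ and $\widehat C(n_1,\ldots,n_L,1)\ge n_1^2-n_1+W$, each met with equality by the skinny network, which parallels the maximization results of this section. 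For the \emph{actual} capacity, however, the collapse argument above is what is needed, because the skinny network lies outside the hypotheses of Theorem~\ref{thm: main}.
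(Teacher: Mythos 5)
Your proposal is correct and takes essentially the same route as the paper's own proof: bound every competing architecture below by the skinny one and observe that a chain of width-one layers leaves the capacity unchanged, so the deepest architecture $A(n_1,1,\ldots,1,1)$ attains the minimum $C(H^{n_1})$. In fact you supply the detail the paper only asserts — the exact collapse $C(n_1,1,\ldots,1)=C(H^{n_1})$ via the four maps on $H^1$ and the bias-perturbation for negation — so your argument yields the exact minimum where the paper's write-up leans on the asymptotic $n_1^2(1+o(1))$ from Theorem~\ref{thm: Zuev}.
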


\begin{proof}
By definition, we must have at least one unit in each hidden layer, and each layer must be fully connected to the following layer. By Theorem~\ref{thm: Zuev}, the first hidden layer contributes at least 
$n_1^2(1+o(1))$ to the capacity and this number is minimized by having a single unit in the first hidden layer. If we stack layers of size 1 above this layer, the capacity remains unchanged and thus is minimized. Note that in this case the number of layers $L$ is dictated by the value of $W$ or $N$. Thus,
the minimal capacity is attained by the architecture $A(n_1,1,\ldots,1)$.
\end{proof}

\section{Structural regularization}	\label{s: structural regularization}

Some have attributed the power of deep networks to the ability of being able to compute more functions. The results of the previous section, summarized in Corollary~\ref{cor: optimization intro}, show that this cannot be the case as the opposite is true: everything else being equal, capacity tends to be maximized by {\em shallow} networks. However the functions computable by shallow and deep networks are different. For example,
R.~Eldan and O.~Shamir \cite{eldan2016power} found that a three-layer network 
with moderate-sized hidden layers is able to compute certain functions 
that a two-layer network is unable to compute, unless its hidden layer has exponential size. 
Thus, the emerging picture is that {\em deeper networks with the same number of nodes compute fewer but  more sophisticated functions}. 
This lead to the notion of {\it structural regularization}. 

It has often been noted that deep networks have a tendency to avoid overfitting, even when the 
size of the training set is small compared to the number of parameters $W$ (\cite{zhang2016understanding} and references therein). 
Some of this affect has been attributed to the regularizing properties of 
the main learning algorithm--stochastic gradient descent, and its inherent tendency to converge towards critical points with relatively broad basins of attraction
(e.g. \cite{zhu2018anisotropic} and references therein).
However, the results presented here show that there is a major regularization associated with deep architectures that is purely structural and   independent of the learning algorithm: compared to shallow networks, deep networks compute fewer functions, but these functions tend to be ``smoother and more sophisticated''.
The functions we see in practice are a tiny fraction of the universe of all possible functions, but they are the most interesting ones. And deep networks are able to ``focus'' on them. 
To see this more formally we can look at the behavior of various architectures on real-valued inputs.
The situation is very different in the one-dimensional case, versus all other higher dimensional cases, as shown in the following results.
In the one-dimensional case, the behavior of the architecture depends exclusively on the size of the first hidden layer and adding hidden layers does not increase the space of functions that can be implemented. 

\begin{proposition}
  The set $T(1,n_2,\ldots,n_L,1)$ consists of all piecewise-constant functions $f: \R \to \{0,1\}$ 
  with at most $n_2$ points of discontinuity. In particular, this class is 
  determined entirely by $n_2$ alone.
\end{proposition}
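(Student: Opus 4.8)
The plan is to establish the two inclusions separately, and to begin by understanding what a single threshold map from $\R$ to $H^{n_2}$ can look like. First I would observe that a threshold function $g: \R \to \{0,1\}$ of the form $g(t) = h(at + \alpha)$ is, when $a \ne 0$, either a step function with a single jump (at $t = -\alpha/a$, jumping up if $a > 0$ and down if $a < 0$), and when $a = 0$ it is constant. Hence a threshold map $f_1 = (g_1,\ldots,g_{n_2}): \R \to H^{n_2}$ is determined by at most $n_2$ jump locations, so $f_1$ is piecewise constant with at most $n_2$ points of discontinuity; consequently $\R$ is partitioned into at most $n_2 + 1$ intervals on each of which $f_1$ is constant. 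Composing with any subsequent threshold maps $f_2,\ldots,f_{L-1}$ cannot increase the number of pieces: the composite $f = f_{L-1}\circ\cdots\circ f_1$ is constant on each of those same intervals, so it is piecewise constant with at most $n_2$ discontinuities. This proves the inclusion ``$\subseteq$''.

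For the reverse inclusion, I would take an arbitrary piecewise-constant $f: \R \to \{0,1\}$ with discontinuity points $t_1 < t_2 < \cdots < t_r$, where $r \le n_2$, and show it is computed by the architecture. The idea is to have the first hidden layer detect, for each $j$, whether the input exceeds $t_j$: set the $j$-th neuron of layer $2$ to compute $h(x - t_j)$ for $j = 1,\ldots,r$, and let the remaining $n_2 - r$ neurons be identically $0$ (e.g. $h(0\cdot x - 1)$). Then the output vector $(\ind_{x \ge t_1},\ldots,\ind_{x\ge t_r},0,\ldots,0) \in H^{n_2}$ takes exactly $r+1$ distinct values as $x$ ranges over $\R$, one per interval of constancy. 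On each such value $f$ is a prescribed bit, so what remains is to realize the induced function from these $r+1$ points of $H^{n_2}$ to $\{0,1\}$ by the remaining layers $A(n_2,n_3,\ldots,n_L,1)$. Since the $r+1$ relevant points of $H^{n_2}$ are totally ordered by the number of leading ones and hence lie on a line in $\R^{n_2}$ (they are $(0,\ldots,0), (1,0,\ldots), (1,1,0,\ldots),\ldots$, which are collinear after the affine map $y \mapsto \sum_i y_i$), the desired $\{0,1\}$-labelling of them is a threshold function on that set, so it is already computable by $A(n_2,1)$ alone; the intermediate layers $n_3,\ldots,n_L$ can simply pass this bit through using the identity/contractivity observations (Lemma~\ref{lem: capacity basic}, part~\ref{pr: contractivity}, together with the fact that the identity-type maps realizing a bit-pass are threshold maps). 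This exhibits $f \in T(1,n_2,\ldots,n_L,1)$.

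I expect the only real subtlety to be the bookkeeping in the ``$\supseteq$'' direction: one must be careful that a threshold function on $\R$ with $a>0$ jumps \emph{up} and with $a<0$ jumps \emph{down}, so detecting ``$x \ge t_j$'' and ``$x < t_j$'' are both available, and one should check the boundary/tie behaviour of the Heaviside convention ($h(0)=1$) so that the jump is placed at exactly $t_j$ and the value there is the correct one-sided limit; matching the prescribed value of $f$ at each $t_j$ (left- versus right-continuity) may require shifting a threshold by an infinitesimal, i.e. replacing $t_j$ by $t_j$ or $t_j^{-}$ in the comparison, which is harmless since the set of inputs is all of $\R$ and only the ordering of thresholds matters. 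The collinearity remark that reduces the top part to a single neuron is the one place where a short explicit computation is needed, but it is routine. Finally, the ``In particular'' clause is immediate: the description of the class mentions only $n_2$, so two architectures with the same $n_2$ (and $n_1 = n_L = 1$) compute the same set of functions regardless of $n_3,\ldots,n_{L-1}$ and $L$.
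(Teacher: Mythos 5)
Your ``$\subseteq$'' direction is fine and matches the paper's reasoning (each hidden unit of one real variable is constant or a single step, so the composite is constant on the at most $n_2+1$ resulting intervals), and your construction of the hidden layer in the ``$\supseteq$'' direction -- the staircase code $x \mapsto (\ind_{x\ge t_1},\ldots,\ind_{x\ge t_r},0,\ldots,0)$ -- is exactly the coding the paper uses. But the key step, namely that \emph{every} $\{0,1\}$-labelling of the $r+1$ code vectors is realizable by a single threshold unit, is justified incorrectly. You argue that the points $(0,\ldots,0),(1,0,\ldots,0),(1,1,0,\ldots,0),\ldots$ ``lie on a line'' and conclude from this that any labelling of them is a threshold function. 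First, these points are not collinear in $\R^{n_2}$ once $r\ge 2$ (the image under $y\mapsto\sum_i y_i$ being a subset of $\R$ says nothing). Second, and more importantly, collinearity would prove the \emph{opposite} of what you need: a set $S$ lying on a line admits only $2|S|$ threshold functions, i.e.\ only the monotone ``cut'' labelings (this is precisely Remark~\ref{rem: elementary lower tightness} of the paper), not all $2^{|S|}$ labelings. So as written the argument for the reverse inclusion has a genuine hole at its crucial point.

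The hole is easy to repair, and the repair is the paper's argument: the staircase vectors $v_0=0$, $v_k=e_1+\cdots+e_k$ are \emph{affinely independent} (the matrix of codes augmented with an all-ones column has full rank $r+1$), so for any target bits $b_0,\ldots,b_r$ one can prescribe the pre-activations $\ip{a}{v_k}+\alpha = \alpha + a_1+\cdots+a_k$ to be, say, $\pm 1$ as desired by choosing $\alpha$ and then the increments $a_k$ one at a time; applying $h$ gives the labelling, and the bit is passed up through layers $3,\ldots,L$ as you describe. A secondary quibble: ``shifting a threshold by an infinitesimal'' is not an available operation; with the convention $h(0)=1$ the $1$-set of a one-variable threshold function is always closed, so left- versus right-inclusion of a breakpoint $t_j$ is handled by the sign of the weight (using $h(t_j-x)$ in place of $h(x-t_j)$), which still yields affinely independent codes. (Note also that both your statement and the paper's gloss over functions such as $\ind_{\{x=t\}}$, which have one discontinuity point but require two hidden units; this is an imprecision in the proposition itself rather than in your argument.)
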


\begin{proof}
The first hidden layer, through the $n_2$ biases, creates $n_2$ potential points of discontinuity. Since there is a single output, every function $f \in T(1,n_2,\ldots,n_L,1)$ must be constant, and equal to 0 or 1 on each of the corresponding $n_ 2+1$ regions.  It is possible to select the units in the hidden layer such that the leftmost region is coded by the vector $(0,0, \ldots,0)$ in the hidden layer,
the second leftmost regions is coded by the vector 
$(1,0, \ldots,0)$, the third leftmost region is coded by the vector
$(1,1, \ldots,0)$, and so forth until the rightmost region which is coded by $(1,1, \ldots 1)$. The corresponding matrix, augmented with the vector $1,1,\ldots 1)$ to account for the bias has full rank $n_2+1$. Therefore, by selecting the proper weights and biases, any value 0 or 1 can be assigned by the architecture to each one of the regions.
\end{proof}

\begin{proposition}	\label{prop: 1}	
The set of functions $T(n,m,1)$ is characterized first by a splitting of $\R^n$ into at most
$L(m,n)=\sum_{k=0}^n {m \choose k}$ regions, each one of which produces a constant binary vector in the hidden layer, and then the assignment of a 0 or 1 output to each region which can be achieved in at most:
$$ 2 \sum_{k=0}^m {{L(m,n)-1} \choose k} \leq m\log_2 L(m,n) $$ 
different ways (the last inequality assumes $m \geq 4$).
\end{proposition}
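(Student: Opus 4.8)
The plan is to write any $f \in T(n,m,1)$ as a composition $f = \psi \circ \phi$, where $\phi = (\phi_1,\ldots,\phi_m) \in T(n,m)$ is the threshold map realized by the hidden layer and $\psi$ is the threshold function realized by the output neuron, and then to bound the number of choices of each ingredient separately: the first determines the splitting into regions, the second the assignment of outputs.

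First I would analyze $\phi$. Writing $\phi_i(x) = h(\ip{a_i}{x}+\alpha_i)$, the component $\phi_i$ is constant on each open side of the affine hyperplane $H_i := \{x : \ip{a_i}{x}+\alpha_i = 0\}$, so on each of the at most $L(m,n) = \sum_{k=0}^n \binom{m}{k}$ open cells of the arrangement $H_1,\ldots,H_m$ (Theorem~\ref{thm: hyperplane partition}) all the $\phi_i$, hence $\phi$ itself, are constant. This is the first assertion: $\R^n$ splits into at most $L(m,n)$ regions, each carrying a constant binary vector. One slightly delicate point is that the value of $\phi$ on a face lying on several of the $H_i$ need not agree with its value on any adjacent open cell (because $h(0)=1$), so the image $V := \phi(\R^n)$ is not literally the set of open-cell values; I would handle this by the same recursion that proves Theorem~\ref{thm: hyperplane partition}: slicing off one hyperplane $H_m$ gives $|\phi(\R^n)| \le |\phi'(\R^n)| + |\phi'(H_m)|$ for $\phi' := (\phi_1,\ldots,\phi_{m-1})$ (a new value appears only at regions that $H_m$ cuts, and such a region then meets $H_m$), and combining with the identity $L(m,n) = L(m-1,n) + L(m-1,n-1)$ yields $|V| \le L(m,n)$ in all cases. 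Either way, $f = \psi\circ\phi$ is constant on each region with value $\psi$ evaluated at the corresponding point of $V$, and since two regions on which $\phi$ agrees get the same label, the admissible output assignments correspond exactly to the threshold functions on the finite set $V \subset \R^m$, i.e.\ to the elements of $T(V)$.

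Next I would count $|T(V)|$. Lemma~\ref{lem: capacity elementary upper} gives $|T(V)| \le 2\sum_{k=0}^m \binom{|V|-1}{k}$, and since $|V| \le L(m,n)$ and $\binom{N-1}{k}$ is nondecreasing in $N$, this is at most $2\sum_{k=0}^m \binom{L(m,n)-1}{k}$, the asserted count of output assignments. For the logarithmic form, this last expression is exactly the bound of Lemma~\ref{lem: capacity elementary upper} for a set of $L(m,n)$ points in $\R^m$; since $m \ge 4$ and $L(m,n) \ge m$ (indeed $L(m,n) \ge m+1$ whenever $n \ge 1$), the second part of that lemma yields $\log_2\!\big(2\sum_{k=0}^m \binom{L(m,n)-1}{k}\big) \le m\log_2 L(m,n)$. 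Finally, for the converse direction I would note that any splitting of $\R^n$ by $m$ affine hyperplanes, together with a labeling of its regions realizable by a threshold function, does arise from some $f \in T(n,m,1)$: take $\phi$ to be the threshold map defined by those hyperplanes and $\psi$ the threshold function realizing the labeling.

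The argument is essentially a bookkeeping combination of Theorem~\ref{thm: hyperplane partition} (for the region count) with Lemma~\ref{lem: capacity elementary upper} (for the labeling count and the logarithmic bound), so there is no serious obstacle. The one point that genuinely needs care --- and the only reason this is not a two-line corollary --- is the boundary-face issue above: the number of distinct hidden-layer vectors can exceed the number of open cells of a degenerate arrangement, and one must check it never exceeds $L(m,n)$, which the slicing recursion provides.
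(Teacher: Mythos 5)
Your proposal is correct and takes essentially the same route as the paper: count the hidden-layer regions via Theorem~\ref{thm: hyperplane partition} and bound the output labelings by applying Lemma~\ref{lem: capacity elementary upper} to the set of hidden-layer vectors (reading the stated inequality, as you do, as a bound on the logarithm of the count, valid since $m\ge 4$ and $L(m,n)-1\ge m$). The only difference is your careful handling of values attained on the hyperplanes themselves (because $h(0)=1$) via the slicing recursion $|\phi(\R^n)|\le|\phi'(\R^n)|+|\phi'(H_m)|$ together with $L(m,n)=L(m-1,n)+L(m-1,n-1)$; the paper's one-line proof glosses over this point, so your addition is a legitimate but inessential refinement of the same argument.
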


\begin{proof}
The proof is easily obtained by using Theorem~\ref{thm: hyperplane partition} to obtain the number $L(m,n)$ of regions, noting that each region is mapped into a fixed vector in the hidden layer, and them applying Lemma~\ref{lem: capacity elementary upper} with $\vert S \vert=L(m,n)$.
\end{proof}

As an example, consider the class of $A(2,m,1)$ architectures. The hidden layer gives rise to $m$ affine lines that partition the input space $\R^2$ into
$(m^2+m+2)/2 \approx m^2/2$ regions. The number of possible binary assignments to these regions scales like $2^{m^2/2}$. While in principle the output unit could have capacity $m^2$ and thus be able to handle all these assignments, in reality is capacity is reduced because only $m^2/2$ vectors, out of all possible $2^m$ vectors, are seen in the hidden layer. Thus the capacity of the output unit is considerably reduced to be at most: $m \log_2(m^2/)2$, using the standard upperbound on the capacity of sets.

The same approach can be applied to deep architectures.

\begin{proposition}		\label{prop: 2}	
The set of functions $T(n_1,n_2,\ldots, n_L,1)$ is characterized first by a splitting of $\R^{n_1}$ into $L(n_2,n_1)=\sum_{k=0}^{n_1} {n_2 \choose k}$ regions. Each one of these regions is mapped to a fixed binary vector in the first hidden layer, creating a set $S \subset \R^{n_2}$. The capacity of the number of functions that can be computed by the upper part of the architecture is given by 
$C(S,n_2, n_3, \ldots,n_L,1)$ and can be bounded using the results in Sections~\ref{s: upper bounds} and \ref{s: restricted capacity lower}.
\end{proposition}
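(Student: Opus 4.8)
The plan is to follow the strategy of Proposition~\ref{prop: 1}, peeling off the first hidden layer and reducing everything to the restricted capacity of its image. First I would record the arrangement created by the first layer: it computes a threshold map $f_1 = (f_1^{(1)},\ldots,f_1^{(n_2)})$ with $f_1^{(j)}(x) = h(\langle a_j,x\rangle+\alpha_j)$, and the $n_2$ affine hyperplanes $\langle a_j,x\rangle+\alpha_j = 0$ partition $\R^{n_1}$ into at most $L(n_2,n_1)=\sum_{k=0}^{n_1}\binom{n_2}{k}$ regions by Theorem~\ref{thm: hyperplane partition}. On each region the sign pattern of $f_1$ is constant, so $f_1$ is constant there; hence the image $S:=f_1(\R^{n_1})$ is a subset of $H^{n_2}$ with $|S|\le L(n_2,n_1)$.

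Next I would observe that every function computable by $A(n_1,n_2,\ldots,n_L,1)$ factors as $f = g\circ f_1$, where $g$ is computable by the sub-architecture $A(n_2,n_3,\ldots,n_L,1)$ acting on inputs restricted to $S$, i.e. $g\in T(S,n_2,n_3,\ldots,n_L,1)$. Therefore, once the first layer (equivalently the set $S$) is fixed, the upper part of the network produces exactly $|T(S,n_2,\ldots,n_L,1)|$ functions, whose binary logarithm is $C(S,n_2,\ldots,n_L,1)$ by definition. This is precisely the characterization asserted in the proposition.

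Finally, to make the last clause quantitative, I would feed the bound $|S|\le L(n_2,n_1)$ into the results of Sections~\ref{s: upper bounds} and \ref{s: restricted capacity lower}. Proposition~\ref{prop: multi-channel upper} (applied after inflating any hidden layer with fewer than four nodes, as in Corollary~\ref{cor: main upper bound}) bounds $C(S,n_2,\ldots,n_L,1)$ above by the usual cubic expression in $n_3,\ldots,n_L$ in which the contribution of the $S$-bottleneck is $\min(n_2,\log_2 L(n_2,n_1))$ rather than $n_2$; complementarily, Proposition~\ref{prop: restricted vs unrestricted} gives a matching lower bound, tight up to a logarithmic factor, through the VC-dimension of $S$. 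Together these give the promised two-sided control of $C(S,n_2,\ldots,n_L,1)$.

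The only real subtlety --- and the step I expect to require care rather than ingenuity --- is the bookkeeping in the second step: with real-valued inputs the first layer can be realized by infinitely many distinct hyperplane placements, so the finite quantity that governs the ``upper part'' is not any count over first layers but the restricted capacity on the finite image $S\subset H^{n_2}$. The proof must state this explicitly and then invoke the cited bounds for $C(S,\cdot)$; once that is in place, each step is immediate from Theorem~\ref{thm: hyperplane partition}, Proposition~\ref{prop: multi-channel upper}, and Proposition~\ref{prop: restricted vs unrestricted}.
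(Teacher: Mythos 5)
Your proposal is correct and takes essentially the same route the paper intends: Proposition~\ref{prop: 2} is presented as a direct extension of the proof of Proposition~\ref{prop: 1}, i.e.\ use Theorem~\ref{thm: hyperplane partition} to count the regions cut out by the $n_2$ affine hyperplanes of the first hidden layer, note that each region is mapped to a fixed binary vector so the upper part of the network only sees the finite image $S \subset H^{n_2}$ with $|S| \le L(n_2,n_1)$, and then control $C(S,n_2,\ldots,n_L,1)$ by Proposition~\ref{prop: multi-channel upper} and Proposition~\ref{prop: restricted vs unrestricted}. Your explicit factorization $f = g\circ f_1$ with $g \in T(S,n_2,\ldots,n_L,1)$ is exactly the bookkeeping the paper leaves implicit.
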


In short, the emerging intuitive picture is that the first hidden layer determines the number of regions into which the input space is fractured. The overall function is constant in each one of these regions, irrsepective of its depth. The larger the first hidden layer is, the greater the number of such regions. A network with a single, non-exponential hidden layer, has limited power in terms of assigning values to these regions. A deep network with the same number of parameters and hence a smaller first hidden layer will fracture the input in less regions and thus its output will have fewer regions of discontinuity. On the other hand the deep network will be able to compute more complex assignments to these regions.  

\section{{Polynomial threshold functions}}
\label{s: polynomial}


In search of more accurate models for biological neurons, or more powerful computational models, one may replace the linear activation with a polynomial activation of degree $d$ in the input variables, usually using a lower degree polynomial. Recently, we were able to develop a theory for the capacity $C_d(n,1)$ of a single polynomial threshold gate \cite{baldi2018boolean} with $n$ inputs, generalizing Zuev's result (Theorem~\ref{thm: Zuev}) for all $d \geq 1$ and showing that $C_d(n,1)= [n^{d+1}/d!] (1+o(1))$. The set and network capacity results presented here should be extended to feedforward networks of polynomial threshold functions.
We present a first step in that direction beyond what is already in \cite{baldi2018boolean}. 
First we have the following theorem which generalizes Lemma~\ref{lem: capacity elementary upper}. 

\begin{theorem}[Polynomial set capacity]			\label{thm: polynomial set capacity}
  Consider a finite subset $S \subset \R^n$, where $n>1$.
  Then, for any degree $1 < d \le n$, we have:
  $$
  C_d(S) \le \log_2 \Big ( 2 \sum_{k=0}^{M(n,d)-1}{\vert S\vert -1 \choose k} \Big ) \le \left( M(n,d)-1\right )\log_2 \vert S\vert \le
    \Big( \frac{2en}{d} \Big)^d \log_2 \vert S \vert.
  $$ 
  where:
$$
M(n,d)=\sum_{k=0}^{d}{{n+k-1} \choose k} 
\le \sum_{k=0}^{d}{{2n} \choose k}
\le \Big( \frac{2en}{d} \Big)^d.
$$  
\end{theorem}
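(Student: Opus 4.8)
The plan is to \emph{linearize}: a degree-$d$ polynomial threshold function on $S$ is exactly a \emph{linear} threshold function on the image of $S$ under the monomial (Veronese) lift. Concretely, let $\phi : \R^n \to \R^{M(n,d)-1}$ be the map sending $x$ to the vector of all monomials $x^\alpha$ with $1 \le |\alpha| \le d$; the number of monomials of degree exactly $k$ in $n$ variables is $\binom{n+k-1}{k}$, so $\phi$ has $\sum_{k=1}^d \binom{n+k-1}{k} = M(n,d)-1$ coordinates (the constant monomial, the $k=0$ term, is omitted on purpose). If $p(x) = \sum_{|\alpha| \le d} c_\alpha x^\alpha$ is any polynomial of degree at most $d$, then $h(p(x)) = h\big( \ip{a}{\phi(x)} + c_0 \big)$ with $a = (c_\alpha)_{1 \le |\alpha| \le d}$ and bias $c_0$; hence every degree-$d$ threshold function on $S$ is of the form $g \circ \phi|_S$ for some $g \in T(\phi(S))$. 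Consequently the number of degree-$d$ threshold functions on $S$ is at most $|T(\phi(S))|$, that is, $C_d(S) \le C(\phi(S))$.

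Next I would apply Lemma~\ref{lem: capacity elementary upper} to the set $\phi(S) \subset \R^{M(n,d)-1}$. Since $\phi$ is a function we have $|\phi(S)| \le |S|$, and since $\sum_{k=0}^{m} \binom{N}{k}$ is nondecreasing in $N$, the first part of that lemma gives
$$
C_d(S) \le C(\phi(S)) \le \log_2\Big( 2 \sum_{k=0}^{M(n,d)-1} \binom{|\phi(S)|-1}{k} \Big) \le \log_2\Big( 2 \sum_{k=0}^{M(n,d)-1} \binom{|S|-1}{k} \Big),
$$
which is the first claimed inequality. For the middle inequality, note that $d \ge 2$ and $n \ge 2$ force $M(n,d) = \binom{n+d}{d} \ge 6$, so the dimension $M(n,d)-1$ is at least $4$ and the second part of Lemma~\ref{lem: capacity elementary upper}, used in that dimension, yields $C(\phi(S)) \le \big( M(n,d)-1 \big) \log_2 |\phi(S)| \le \big( M(n,d)-1 \big) \log_2 |S|$.

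It remains to bound $M(n,d)$ itself. Counting monomials gives $M(n,d) = \sum_{k=0}^d \binom{n+k-1}{k}$. For $0 \le k \le d \le n$ we have $n+k-1 \le 2n$, and $\binom{a}{k}$ is nondecreasing in $a$, so $\binom{n+k-1}{k} \le \binom{2n}{k}$ and therefore $M(n,d) \le \sum_{k=0}^d \binom{2n}{k}$. Finally, the elementary estimate $\sum_{k=0}^{d} \binom{N}{k} \le (eN/d)^d$, valid for integers $1 \le d \le N$ (the same bound invoked in the proof of Lemma~\ref{lem: capacity elementary upper}), applied with $N = 2n$, gives $\sum_{k=0}^d \binom{2n}{k} \le (2en/d)^d$. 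Chaining these, $M(n,d)-1 \le M(n,d) \le (2en/d)^d$, which both completes the displayed chain of bounds on $M(n,d)$ and turns the middle inequality into the last one.

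The step requiring the most care is the bookkeeping in the linearization: matching the constant term of $p$ with the bias in the definition of a linear threshold function, so that the lift has exactly $M(n,d)-1$ coordinates, and checking the mild dimension hypothesis $M(n,d)-1 \ge 4$ needed for the convenient form of Lemma~\ref{lem: capacity elementary upper}. Beyond these routine points there is no genuine obstacle; one may, as usual, assume $|S| \ge 2$ to rule out a degenerate case in the middle inequality.
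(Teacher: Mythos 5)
Your proposal is correct and follows essentially the same route as the paper: linearize via the monomial (Veronese) lift into $\R^{M(n,d)-1}$, reduce $C_d(S)$ to the linear capacity of the lifted set, and apply Lemma~\ref{lem: capacity elementary upper} in dimension $M(n,d)-1\ge 4$, together with the standard binomial-sum estimate for the bound on $M(n,d)$. The only cosmetic difference is that the paper records the exact identity $C_d(S)=C(f(S))$ using injectivity of the lift, whereas you use only the inequality $C_d(S)\le C(\phi(S))$, which is all the upper bound requires.
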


\begin{proof}
First, it is easy to see that the number of coefficients of a polynomial of degree $d$ in $n$ variables $x_1, \ldots,x_n$ is given by $M(n,d)$, including the constant term (bias).
A vector $x \in \R^n$ can be canonically and injectively mapped into a 
vector $f(x) \in \R^{M(n,d)-1}$ whose components are the various monomials. 
 Using this mapping, we can represent any polynomial $p(x)$ of degree $d$ over $\R^n$
as a linear affine function over $f(x)$. 
And vice versa, any linear affine function over $f(x)$ is a polynomial of degree $d$ over $x$.
For example, if $d=2$, the vector $x=(x_1,x_2) \in \R^2$ is canonically mapped 
to the vector $f(x)=(x_1,x_2, x_1x_2, x_1^2,x_2^2) \in \R^5$.
Any polynomial $p(x) = a_0 + a_1 x_1 + a_2 x_2 + a_{12} x_1 x_2 + a_{11} x_1^2 + a_{22} x_2^2$
over $\R^2$ is clearly an affine function of $f(x)$, and vice versa.
Therefore: 
$$
C_d(S) = C_1(f(S))=C(f(S)).
$$
We complete the proof by applying Lemma~\ref{lem: capacity elementary upper} for the set 
$f(S) \subset \R^{M(n,d)-1}$, noting that $f(S)$ has the same cardinality as $S$ since $f$ is injective. Note that when $n \geq 2$ and $d \geq 2$, $M(n,d) \geq 6$ which is required for the application of the second part of Lemma~\ref{lem: capacity elementary upper}.
\end{proof}
Note that if we apply Theorem~\ref{thm: polynomial set capacity} to $S=H^n$, we get: 
$$ C_d(H^n) \leq n \Big( \frac{2en}{d} \Big)^d,
$$
which is somewhat weaker asymptotically than the result in 
\cite{baldi2018boolean} giving: 
$$C_d(H^n)=C_d(n,1)= \frac{ n^{d+1}}{d!}(1+o(1)).$$ Note also that the general lower bound:
$ 1+\log_2 \vert S \vert \leq  C_d(S)$, and its improved version when $S$ is a subset of the Boolean cube:
$ \log_2^2 \vert S\vert/16 \leq  C_d(S)$ are trivially satisfied.

Using Theorem~\ref{thm: polynomial set capacity}, we can now prove the first result for fully-connected feedforward networks of polynomial threshold gates of degree $d$ with a single hidden layer:

\begin{theorem}[Polynomial capacity of a single-hidden-layer network]
\label{thm: polynomial network capacity}
Consider a feedforward, fully-connected, feedforward network $A(n,m,1)$ of polynomial threshold gates of fixed degree $d$. If $n \to \infty$  and $\log_2m = o(n)$ then:

$$m \frac{n^{d+1}}{d!} (1+o(1)) \leq C_d(n,m,1) \leq m \frac{n^{d+1}}{d!} +
\min \Big [ \frac{m^{d+1}}{d!},n\Big( \frac{2em}{d} \Big)^d \Big ].$$
\end{theorem}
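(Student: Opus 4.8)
The plan is to adapt the two-layer estimates behind Corollary~\ref{cor: two layers} to the degree-$d$ setting: the upper bound from counting compositions $\psi\circ\phi$, and the lower bound from multiplexing. Throughout I will use the polynomial analogue of part~\ref{pr: two layers} of Lemma~\ref{lem: capacity basic}, namely $C_d(S,n,m)=m\,C_d(S)$ (the $m$ output gates act independently). Asymptotic identities below are understood up to $(1+o(1))$ factors, which are absorbed into the dominant term; in particular $M(n,d)-1=\sum_{k=1}^d\binom{n+k-1}{k}=\tfrac{n^d}{d!}(1+o(1))$ for fixed $d$ as $n\to\infty$.

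For the upper bound, every function computed by $A(n,m,1)$ has the form $\psi\circ\phi$ with $\phi=(\phi_1,\dots,\phi_m)$ a degree-$d$ polynomial threshold map $H^n\to H^m$ and $\psi$ a degree-$d$ polynomial threshold function on $V:=\phi(H^n)\subset H^m$. For a fixed $\phi$ the number of distinct compositions $\psi\circ\phi$ equals $|T_d(V)|$, so $|T_d(n,m,1)|\le |T_d(H^n,n,m)|\cdot\max_{|V|\le 2^n}|T_d(V)|$, hence $C_d(n,m,1)\le m\,C_d(H^n)+\max_{|V|\le 2^n}C_d(V)$. Applying Theorem~\ref{thm: polynomial set capacity} to $H^n$ gives $C_d(H^n)\le (M(n,d)-1)\,n=\tfrac{n^{d+1}}{d!}(1+o(1))$, so the first term is $m\,\tfrac{n^{d+1}}{d!}(1+o(1))$. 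For the second I would bound $C_d(V)$ in two ways: since the polynomial defining any degree-$d$ threshold function on $V$ also defines one on $H^m$, the restriction map $T_d(H^m)\to T_d(V)$ is onto, so $C_d(V)\le C_d(H^m)\le (M(m,d)-1)m=\tfrac{m^{d+1}}{d!}(1+o(1))$; and since $|V|\le 2^n$, Theorem~\ref{thm: polynomial set capacity} in dimension $m$ gives $C_d(V)\le(M(m,d)-1)\log_2|V|\le n\big(\tfrac{2em}{d}\big)^d$. Taking the smaller of the two yields the claimed minimum.

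For the lower bound I would first check that the multiplexing machinery of Section~\ref{s: multiplexing} carries over to polynomial gates. The only point needing attention is the degree-$d$ version of Lemma~\ref{lem: adding a clause}: if $f(x)=h(p(x))$ with $\deg p\le d$, then $f(x)\wedge(y=\theta)$ can be written as $h\big(K(p(x)-b)+2\ip{\theta}{y}-\sum_i y_i-q+\tfrac12\big)$ for suitable constants $K,b$, chosen exactly as in the linear proof (this is possible because the values of $p$ on $\{f=1\}$ are nonnegative while those on $\{f=0\}$ are negative), and the argument of $h$ is a polynomial of degree $\max(\deg p,1)\le d$. Granting this, Lemma~\ref{lem: multiplexing} and Theorem~\ref{thm: two layers general lower} hold verbatim with $T$ replaced by $T_d$, giving $C_d(S\oplus H^{m^-},\,n+m^-,\,m,\,1)\ge m\,C_d(S)$ with $m^-=\lceil\log_2 m\rceil$. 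Applying this with $S=H^{n-m^-}$, so that $S\oplus H^{m^-}=H^n$, gives $C_d(n,m,1)\ge m\,C_d(H^{n-m^-})$; and since $\log_2 m=o(n)$, the sharp single-gate asymptotics of \cite{baldi2018boolean} give $C_d(H^{n-m^-})=\tfrac{(n-m^-)^{d+1}}{d!}(1+o(1))=\tfrac{n^{d+1}}{d!}(1+o(1))$, which is the asserted lower bound.

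The argument is largely a transcription of the linear case, so the work lies in confirming that each ingredient survives the passage to degree $d$: the independence of the output gates, the monotonicity $C_d(V)\le C_d(H^m)$, and --- the one genuinely new computation --- the degree bookkeeping in the polynomial version of Lemma~\ref{lem: adding a clause}. I expect this last point to be the main thing to get right, together with noting that the multiplexed function is a valid degree-$d$ threshold \emph{network} (the $m$ gated components sitting in a hidden layer and combined by an \textsc{or} gate at the output, not a single polynomial threshold function); everything else is routine.
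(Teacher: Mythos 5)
Your proposal is correct and follows essentially the same route as the paper: the lower bound via multiplexing applied to $S=H^{n-m^-}$ together with the single-gate asymptotics $C_d(n,1)=\frac{n^{d+1}}{d!}(1+o(1))$ of \cite{baldi2018boolean}, and the upper bound by summing the hidden-unit capacities and bounding the output unit both by its unrestricted capacity and by the set-capacity bound of Theorem~\ref{thm: polynomial set capacity} applied to the image $V\subset H^m$ with $|V|\le 2^n$. The only difference is that you spell out what the paper calls ``easy to check,'' namely the degree-$d$ version of Lemma~\ref{lem: adding a clause} (with the correct observation that the clause-augmented argument of $h$ still has degree at most $d$), which is a welcome elaboration rather than a deviation.
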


\begin{proof}
The proof follows somewhat what happens in the case $d=1$, using the result in \cite{baldi2018boolean} for single units.
The capacity of the hidden layer alone is given by:
$$m \frac{n^{d+1}}{d!} (1+o(1))$$ and it is easy to check that the multiplexing technique can equally be applied to this case. This immediately yields the lower bound:
$$ m \frac{n^{d+1}}{d!} (1+o(1))    \leq C_d(n,m,1). $$
For the upper bound, the total capacity is bounded by the sum of the capacity of all the units.
The capacity of each unit in the hidden layers is bounded by $n^{d+1}/d!$. The capacity of the output unit is bounded by $m^{d+1}/d!$. For the output unit, 
using Theorem~\ref{thm: polynomial set capacity}, its capacity is also bounded by:
$$n\Big( \frac{2em}{d} \Big)^d  $$
if $m>n$, and by:
$$m\Big( \frac{2nm}{d} \Big)^d $$
if $m \leq n$. This completes the proof.
\end{proof}

Note that in the case where $m^d/n^{d+1}=o(1)$ we get: 
$$C_d(n,m,1) = m \frac{n^{d+1}}{d!} (1+o(1)).$$

{The work presented here naturally leads to several open research questions. We briefly mention a few.}

\section{{Open questions}}
\label{s: open}

\subsection{{Polynomial threshold functions.}}

{The initial results given above on polynomial threshold functions need to be extended in several directions. We leave the tightening of the capacity result in Theorem~\ref{thm: polynomial network capacity}  and its extensions to networks with multiple hidden layers of polynomial threshold functions of degree $d$ for future work. The same is also true for any extensions of the lower bound on set capacity in Theorem~\ref{thm: BTF on general S}  
to polynomial threshold functions of degree $d$.}

\subsection{Asymptotic tightness.}
Theorem~\ref{thm: main} presents a capacity formula that is accurate within an absolute constant factor. 
Is this formula asymptotically tight? 
In other words, is it true that: 
$$
C(n_1,\ldots,n_L) 
= (1+o(1)) \sum_{k=1}^{L-1} \min(n_1,\ldots,n_k) n_k n_{k+1}
$$
as the number of nodes $n_k$ for some (or all) layers $k$ increases to infinity?

{
The upper bound in Theorem~\ref{thm: main} is indeed tight (Proposition~\ref{prop: multi-channel upper}), but we only know that lower bound is asymptotically tight for networks with a single hidden layer (Corollary~\ref{cor: two layers}).
And even beyond that, can the $1+(o(1)$ term be further improved, in the same way that for single neurons the result in \cite{kahn1995probability} refines the capacity estimate in  
\cite{zuev1989asymptotics}?}

\subsection{Restricted capacity.}
In Sections~\ref{s: upper bounds} and \ref{s: restricted capacity lower} 
we estimated the restricted capacity $C(S, n_1,n_2,\ldots,n_L,1)$ 
for a general input set $S \subset H^n$. 
Our lower bound (Proposition~\ref{prop: restricted vs unrestricted}) is tight within a logarithmic factor. 
Can this factor be removed? 

To do so, one may try to follow the proof of the lower bound in Theorem~\ref{thm: main} 
and use Theorem~\ref{thm: BTF on general S} instead of the estimate $C(H^n) \asymp n^2$
in this argument. But this reasoning meets an obstacle: we do not know
a version of the Enrichment Theorem~\ref{thm: enrichment} for a general set $S$. 

Thus, an important related problem is to generalize the Enrichment Theorem~\ref{thm: enrichment} 
for a general set $S \subset H^n$. Is it true that there exists an injective map $F : S \to H^m$
all of whose components are threshold functions and such that:
$$
C(S) \asymp m \log |S| ?
$$

\subsection{Other transfer functions}

This paper focused exclusively on networks with the threshold (Heaviside) transfer function
$h= \one_{\{t \ge 0\}}$.
One may wonder if our results hold true for other transfer functions, 
such as the ReLU function 
$\max(0,t)$, or the sigmoidal transfer functions (e.g. logistic 
$1/(1+e^{-t})$ or $\tanh t$) that are commonly used
in neural networks. Suppose some general transfer function $f(t)$ is applied at each hidden layer. As long as the inputs are from a finite set and the threshold function is applied at the output nodes, the set of functions that can be implemented by the architecture remains finite and the same definition of capacity can be applied without the need for any adjustments. How many functions can the network compute? 

Our preliminary analysis, left for further investigations, suggests that
the capacity might not depend too much on the shape of the transfer function $f$, 
provided that $f$ is a piecewise-constant function that consists of more than one piece, but less 
than an exponential number of pieces. Thus, the capacity formula in Theorem~\ref{thm: main} 
might be universal for the class of networks with piecewise-constant transfer functions. 

Beyond piecewise-constant, we can consider piecewise-linear transfer functions, such as ReLU functions. We have recently shown that the capacity of a single unit
is increased by a ReLU transfer function, but only marginally as it remains equal to $n^2(1+o(1))$ \cite{baldi2018neuronal}.
We conjecture that, in essence, the same remains true for multilayer networks with ReLU transfer functions in the hidden layers. In light of the known results 
(\cite{bartlett1999almost, bartlett2017nearly}) that show that the VC-dimension of ReLU neural networks 
may grow {\em super-linearly} with the depth $L$, it is interesting to find out 
if the capacity of the networks (e.g. with equal sizes of layers) grows super-linearly as well.

The definition of capacity used here relies here on a class of functions or hypotheses $T$ that is finite. However, the definition could clearly be extended to networks that can compute an infinite number of functions, for instance using real-valued inputs and continuous transfer functions everywhere, including in the output layer. For this purpose, one could still define capacity by $C= \log_2 \vert T \vert$ but defining
 $\vert T \vert $ in a broader measure theoretic sense (e.g. volume).

\subsection{Other connectivity models}

Finally, it is possible to consider several other connectivity models,
either by having more sparse connections or by constraining the values of the connections. A first example is to consider synaptic weights that are constrained to belong to a finite set, for instance $\{-1,1\}$ (binary synapses). It is easy to see that the capacity of a binary synapse linear threshold neuron is exactly equal to the number of inputs $n$
\cite{baldi2018neuronal}, but the extension to multiple layers has not been studied. Likewise, it is possible to consider the case where all the incoming, or outgoing, synaptic weights must have the same sign (e.g. purely excitatory or purely inhibitory neurons). We have shown that, for a single linear threshold unit, if all the incoming weights are positive the capacity of the unit is marginally decreased but remains in the same class and equal to $n^2(1+o(1))$. Again the extension to multiple layers has not been studied. Finally, there is the case of local connectivity,
where the indegree or outdegree of a neuron may be restricted. How can the theory be extended to some of those cases? Consider a typical convolutional neural network for computer vision applications. In this case, a convolutional layer may comprise an array of $n$ neurons, where each neuron has an identical set of $m \times m$ incoming weights, the so-called weight-sharing approach. Because the weights are tied, the entire layer can implement only one set of weights, and thus only one function. If the neurons are modeled as linear threshold gates, the capacity of the entire layer can be estimated, and is equal to $m^4(1+(o(1))$. Thus, in short, under the right assumptions the methods presented here can readily be applied to convolutional neural networks.
All the previous examples, assumed feedfoward patterns of connections. 
Extensions to recurrent networks, other than the fully-connected case, have not been investigated.

\section{{Conclusion}}
\label{s: conclusion}

{
In the 1940s, McCulloch and Pitts \cite{mcculloch:43} and others introduced a simple neuronal model, whereby a neuron processes information by first computing an activation $A$ and then an output $O=f(A)$. The activation $A$ is typically a weighted linear, or polynomial, function of the inputs. The transfer function $f$ is typically a non-linear function, such as a threshold function, ReLU (rectified linear unit) function or more generally a piecewise linear function, or sigmoidal function (e.g. logistic, $\tanh$).}

{Networks of McCulloch and Pitts model neurons are important for at least four fundamental reasons. First, 
although far simpler than biological neurons in their processing details, these simplified neural models have proven over and over to be useful to better understand biological networks (e.g. \cite{zipser1988back,olshausen1996emergence,yamins2016using}).
Second, these models are also used to guide the development of new, power efficient, neuromorphic chips \cite{neftci_event-driven_2017}.
Third, these  neural network models are widely used today in all kinds of AI/deep learning applications with impressive results, often matching or exceeding human capabilities in specific tasks across the gamut of  applications, from games to biomedicine (e.g.\cite{schmidhuber2015deep,silver2017mastering1,
baldireview2018}). And finally, from a foundational standpoint, they are the dominant, and perhaps simplest, available analytical model for studying the neural style of storing information.} 

{Indeed, in the standard description of McCulloch and Pitts neurons given above, the emphasis is placed on the processing aspect of these models, the input-output relationship. However, equal or even more emphasis should be given to the storage aspect of this neural model. Information about the world, e.g. in the form of ``training sets'', is stored in a distributed ``holographic'' way in the synaptic weights (i.e. the coefficients of the activation function), through a learning process.
This is significant because
to achieve intelligent behavior, information processing systems must be able to learn and store information. To store information, there are two completely different approaches:
(1) the Turing tape model, 
where information is stored at well organized, discrete, addresses of a physical substrate--this is the style used in all living systems at the cellular level (DNA), and in all our digital devices, from cell phones to supercomputers; (2) the neural model, where information is stored ``holographically'' in neural networks across large numbers of synapses--this is the style believed to be used by the brain, and simulated in our neural network--deep learning--technology. While the Turing style is relatively well understood, the neural style is not. }

{
In this paper, we set out to study the most fundamental property of the neural style of storage, namely how many bits can be stored in a given neural architecture. To address this question, we first had to introduce the notion of cardinal capacity, the logarithm base two of the number of different functions a given architecture can compute. Remarkably, for neural architectures, the cardinal capacity is equal to the total number of bits that can be stored in a given architecture, or the number of bits that can be ``communicated'' from the outside world to the architecture by the learning process. 
We then estimated the capacity of feedforward neural architectures of arbitrary depth, under a relatively mild set of assumptions on the connectivity and the transfer functions of these architectures. The capacity is typically a cubic polynomial in the sizes of the layers. For fully connected, feedforward, architectures it is essentially given by:
$C(n_1,\ldots, n_L)\approx \sum_{k=1}^{L-1} \min(n_1,\ldots,n_k)n_kn_{k+1}$.
As a side note, the capacity of fully connected recurrent networks can also be estimated \cite{baldi2018neuronal}, essentially by unfolding them in time and computing the capacity of the underlying feedforward network.
In addition, we have improved the bounds on the capacity of sets, analyzed the extremal properties of the capacity and the structural regularization effects of deep architectures,
and began to extend the theory of capacity to polynomial threshold functions. In addition, we have briefly surveyed several open questions in this area.}

{Finally, although this falls beyond the scope of this paper, 
the capacity is a fundamental quantity that can be related to other measures of complexity and generalization including the VC dimension, the growth function, 
the Rademacher and Gaussian complexity, the metric entropy, and
the minimum description length (MDL). 
For example, if the function $h$ to be learnt as MDL $D$, and the neural architecture being used as capacity $C < D$ then it is easy to see that: (1) $h$ cannot be learnt without errors; and (2) the number $E$ of errors made by the best approximating function implementable by the architecture must satisfy $E > (D - C)/N$. Another example of connection is the connection to the VC dimension that was used in the bounds in Section~\ref{s: restricted capacity lower}. These connections will be described more systematically elsewhere. }

\section*{Appendix: Examples}
{In this appendix, we apply the main result to a few basic architectures, assuming the layers are large so that the asymptotic regimes can be applied. }

{
First, consider a deep architecture $A(n_1,\ldots,n_L)$ which is expansive, 
where expansive is defined by the property: $n_1\leq n_2\leq \ldots \leq n_{L-1}$. Then, using the main result:
$$C(A) \approx n_1 \sum_{k=1}^{L-1}n_kn_{k+1} \approx n_1W.$$}

{Second, consider a deep architecture $A(n_1,\ldots,n_L)$ which is compressive, where compressive is defined by the property: 
$n_1 \geq n_2\geq \ldots \geq n_{L-1}$. Then, using the main result:
$$C(A) \approx \sum_{k=1}^{L-1}n_k^2n_{k+1}.$$}

\vspace{-0.3cm}
\begin{figure}[htp]			
  \centering 
    \includegraphics[width=0.7\textheight]{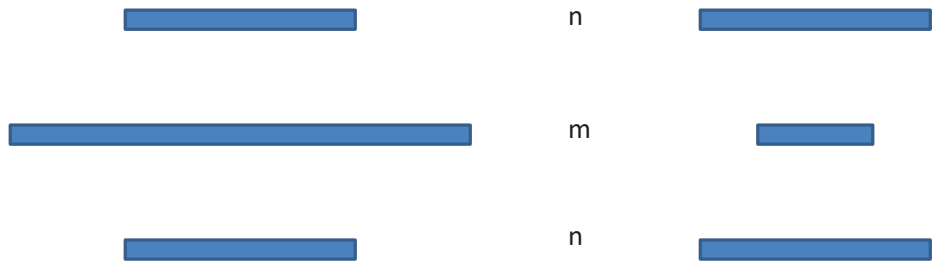} 
    \vspace{-3.0cm}
    \caption{Expansive (left) and compressive (right) autoencoder architectures $A(n,m,n)$.}
  \label{fig:autoencoder}
\end{figure}

{Third, we consider autoencoder architectures
$A(n,m,n)$ with a single hidden layer (Figure~\ref{fig:autoencoder}).}
Clearly $W=2nm$. For the capacity, there are two cases depending on whether the autoencoder is expansive or compressive.
In the compressive case ($m <n$), $C(n,m,n)\approx mn^2+ m^2n=mn(n+m)\approx mn^2=nW/2$. If we let
$m=n^{1-\nu}$ for $0< \nu <1$, then:
$$ C(n,m,n)\approx n^{3-\nu}.$$
In the expansive case ($m>n$), $C(n,m,n)\approx mn^2+ mn^2=2mn^2=nW$. If we let
$m=n^{1+\nu}$ for $0< \nu $, then:
$$ C(n,m,n)\approx 2 n^{3+\nu}.$$
 \vspace{-0.5cm}
\begin{figure}[htp]			
  \centering 
    \includegraphics[width=0.7\textheight]{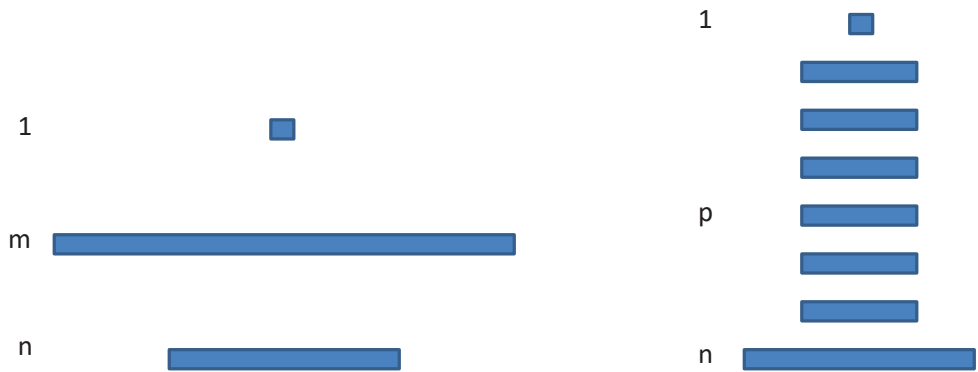} 
    \vspace{-3.0cm}
    \caption{Shallow (left) $A(n,m,1)$ architecture and deep (right)$A(n,p,p,...,p,1)$ architecture for classification.}
  \label{fig:classification}
\end{figure} 
 
Finally, we contrast a shallow $A(n,m,1)$ and a deep 
$A(n,p,\ldots,p,1)$ classification architectures (Figure~\ref{fig:classification}). For the shallow architecture, 
we have:
$$ W \approx nm \quad {\rm and} \quad  C \approx mn^2.$$
For the deep architecture, we have:
$$W \approx np+Lp^2 \quad {\rm and} \quad  C \approx pn^2 + L p^3$$
if $p\leq n$, and: 
$$W \approx np+Lp^2 \quad {\rm and} \quad  C \approx pn^2 + L np^2$$
if $p>n$. Here $L$ is a parameter that represents the depth--the entire architecture has $L+2$ layers, not counting the single-unit output layer.
Consider, for instance, the expansive case where $m\geq n$ and $p \geq n$.
Then both architectures satisfy: $C \approx n W$ and will have roughly the same capacity when  they have roughly the same number of parameters. 
If we let $m=n^{1+\alpha}$ ($\alpha \geq 0$), $p=n^{1+\beta}$ ($\beta \geq 0$), and $L=n^{\gamma}$ ($\gamma \geq 0$) then for the architectures to have approximately the same number of parameters (and thus approximately the same capacity), one must have: $2\beta +\gamma =\alpha$. The other cases can be analyzed similarly. 
 

\bibliography{capacity,baldi,nn,math}{}
\bibliographystyle{plain}

\end{document}